\title{Graphical Models: An Extension to Random Graphs, Trees, and Other Objects}
\author{
Neil Hallonquist \\
Johns Hopkins University \\ 
neil.hallonquist@yahoo.com \\
}
\begin{document}

\copyrightowner{N.~Hallonquist}
 \volume{1}
 \issue{3}
 \pubyear{2017}
 \copyrightyear{2017}
 \isbn{978-0521833783}
 \doi{1234567890}
 \firstpage{1}
 \lastpage{112}

\frontmatter  

\maketitle

\tableofcontents

\mainmatter

\begin{abstract}

In this work, we consider an extension of graphical models to random graphs, trees, and other objects. To do this, many fundamental concepts for multivariate random variables (e.g., marginal variables, Gibbs distribution, Markov properties) must be extended to other mathematical objects; it turns out that this extension is possible, as we will discuss, if we have a consistent, complete system of projections on a given object. Each projection defines a marginal random variable, allowing one to specify independence assumptions between them. Furthermore, these independencies can be specified in terms of a small subset of these marginal variables (which we call the atomic variables), allowing the compact representation of independencies by a directed graph. Projections also define factors, functions on the projected object space, and hence a projection family defines a set of possible factorizations for a distribution; these can be compactly represented by an undirected graph. 

The invariances used in graphical models are essential for learning distributions, not just on multivariate random variables, but also on other objects. When they are applied to random graphs and random trees, the result is a general class of models that is applicable to a broad range of problems, including those in which the graphs and trees have complicated edge structures. These models need not be conditioned on a fixed number of vertices, as is often the case in the literature for random graphs, and can be used for problems in which attributes are associated with vertices and edges. For graphs, applications include the modeling of molecules, neural networks, and relational real-world scenes; for trees, applications include the modeling of infectious diseases and their spread, cell fusion, the structure of language, and the structure of objects in visual scenes. Many classic models can be seen to be particular instances of this framework.

\end{abstract}

\chapter{Introduction}
\label{c-intro} 



In problems involving the statistical modeling of a collection of random variables (i.e., a multivariate random variable), the use of invariance assumptions is often critical for practical learning and inference. A graphical model is a framework for such problems based on conditional independence, a fundamental invariance for these variables; this framework has found wide-spread use because independence occurs naturally in many problems, and is often specifiable by practitioners. Furthermore, independence assumptions can be made at varying degrees (for many invariances, this is not the case), thus creating a range of model complexities, and allowing practitioners to adjust models to a given problem. 


In this work, we consider an extension of graphical models from multivariate random variables to other random objects such as random graphs and trees. 
To do this, core concepts from graphical models must be abstracted, forming a more general formulation; in this formulation, graphical models can be applied to any object that has, loosely speaking, a structure allowing a hierarchical family of projections on it. Each projection in this family defines a marginal random variable, allowing one to specify independence assumptions between them, and further, allowing a graph to represent these independencies (where vertices correspond to atomic variables). This projection family also defines, for distributions, a family of factors, allowing one to specify general factorizations, and further, also represent them compactly by a graph. A projection family must satisfy certain basic properties in order for the corresponding variables to be consistent with each other. 

In the first part of this work, we examine models for random graphs, the problem that originally motivated this investigation. Applying graphical models to them results in a general framework, applicable to problems in which graphs have complicated edge structures. These models need not be conditioned on a fixed number of vertices, as is often the case in the literature, and can be used for problems in which graphs have attributes associated with their vertices and edges. The focus of this work is on problems in which the number of vertices can vary. Some examples of graphs that these models are applicable to are shown in Figures \ref{fig:chemo1} and \ref{fig:mouse1}. This work makes no contribution to the traditional setting of random graphs in which the vertex set is fixed; the formulation presented here is unnecessary in that setting.

After investigating graphical models for graphs, we consider their application to trees, a special type of graph used in many real-world problems. As with graphs, this results in models applicable to a broad range of problems, including those in which trees have complex structures and attributes. In the approach taken in most of the literature, probabilities are placed on trees based on how a tree is incrementally constructed (e.g., from a branching process or grammar). Using graphical models, this approach may be extended, allowing distributions to be defined based on how trees are deconstructed into parts. The benefit of this graphical model approach is that one can make well-defined distributions that have complex dependencies; in contrast, it is often intractable to define distributions over, for example, context-sensitive grammars.

In the last part of this work, we define some consistency and completeness conditions for projection families. These conditions on projections ensure the consistency of their corresponding random variables (i.e., they form a family of marginal variables), which in turn, allows graphical models to be directly defined in terms of projection families. In this formulation, graphical models may be loosely thought of as a modeling framework based on independence assumptions between the parts of an object, given the object is compositional. 
An object is compositional if: (a) it is composed of parts, which in turn, are themselves composed of parts, etc.; and (b) a part can be a member of multiple larger parts. 
Objects such as vectors, graphs, and trees, are compositional; in more applied settings, objects such as words and sentences, people, and real-world scenes, are compositional as well. Graphical models are naturally suited to the modeling of these objects.

\section{Random Graphs}

A graph is a mathematical object that is able to encode relational information, and can be used to represent many entities in the world such as molecules, neural networks, and real-world scenes. An (undirected) graph is composed of a finite set of objects called vertices, and for each pair of vertices, specifies a binary value. If this binary value is positive, there is said to be an edge between that pair of vertices. In most applications, graphs have attributes associated with their vertices and edges; we will refer to attributed graphs simply as graphs in this work. (We make more formal definitions in Section \ref{sec:graph_modeling_framework}.) A random graph is a random variable that maps into a set of graphs. In this section, we give a brief overview of random graph models in the literature, and discuss some of their shortcomings, motivating our work.

\subsection{Literature}

The most commonly studied random graph model is the Erd\H os-R\'enyi model (\citep{erdds1959random}, \citep{gilbert1959random}). This is a model for conditional distributions in which, for a given set of vertices, a distribution is placed over the possible edges. It makes the invariance assumption that, for any two vertices, the probability of an edge between them is independent of the other edges in the graph, and further, this probability is the same for all edges. This classic model, due to its simplicity, is conducive to mathematical analysis; its asymptotic behavior (i.e, its behavior as the number of vertices becomes large) has been researched extensively (\citep{bollobas1998random}, \citep{janson2011random}).

There are many ways in which the Erd\H os-R\'enyi model can be extended. One such extension is the \textit{stochastic blockmodel} \citep{holland1983stochastic}. This model is for conditional distributions over the edges, given vertices, where each vertex has a label (e.g., a color) associated with it. Similar to the Erd\H os-R\'enyi model, for any two vertices, the probability of an edge between them is independent of the other edges in the graph; unlike the the Erd\H os-R\'enyi model, this probability depends on the labels of those two vertices.

An extension of the stochastic blockmodel is the \textit{mixed membership stochastic blockmodel} \citep{airoldi2009mixed}. In this model, instead of associating each vertex with a fixed label, each vertex is associated with a probability vector over the possible labels. Given a set of vertices (and their label probability vectors), a set of edges can be sampled as follows: for each pair of vertices, first sample their respective labels, then sample from a Bernoulli distribution that depends on these labels. 
Another extension of the stochastic blockmodel is the \textit{latent space model} \citep{hoff2002latent}, where instead of associating vertices with labels from a finite set, they are instead associated with positions in a Euclidean space; given the position of two vertices, the probability of an edge between them only depends on their distance.

A general class of random graph models, of which the above models fall within, is the exponential family (\citep{holland1981exponential}, \citep{robins2011exponential}, \citep{snijders2006new}).
A well-known example is the Frank and Strauss model \citep{frank1986markov}, also a model for conditional distributions, specifying the probability of having some set of edges, given vertices. Since the randomness is only over the edges, a graphical model can be applied in which there is a random variable for each pair of vertices, specifying the presence or absence of an edge. These random variables are conditionally independent, in this model, if they do not share a common vertex. 

\subsection{Other Literature}


In this section, we review models from outside the mainstream random graph community that were designed for graphs that vary in size and have complicated attributes. One of the first such models was developed by Ulf Grenander under the name \textit{pattern theory} (\citep{grenander2007pattern}, \citep{grenander1997geometries}, \citep{grenander2012pattern}). This work was motivated by the desire to formalize the concept of a pattern within a mathematical framework. A large collection of natural and man-made patterns is shown in \citep{grenander1996elements}. Examples range from textures to leaf shapes to human language. In each of these examples, every particular instance of the given pattern can be represented by a graph. These instances have natural variations, and so the mathematical framework for describing these patterns is probabilistic, i.e. a random graph model. 
The model developed was based on applying Markov random fields to graphs. Learning and inference are often difficult in this model, limiting its practical use. 




Later, random graph models were developed within the field of \textit{relational statistical learning}.
In particular, techniques such as Probabilistic Relational Models \citep{getoor2001learning}, Relational Markov Networks \citep{taskar2002discriminative}, and Probabilistic Entity-Relationship Models \citep{heckerman2007probabilistic}, were specifically designed for modeling entities that are representable as graphs. 
These models specify conditional distributions, applying graphical models in which: (1) for each vertex, there is a random variable representing its attributes; and (2) for each pair of vertices, there is a random variable representing their edge attributes. (This is an approach similar to the one taken in the Frank and Strauss model).

\begin{figure} 
  \begin{minipage}[b]{0.5\linewidth}
    \includegraphics[scale=0.24]{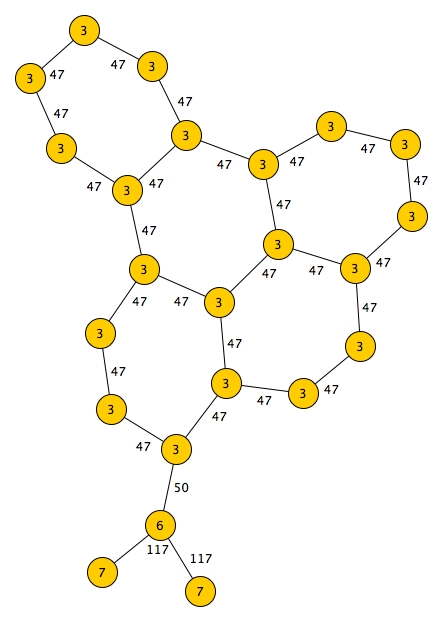} 
  \end{minipage} 
  \begin{minipage}[b]{0.5\linewidth}
    \includegraphics[scale=0.24]{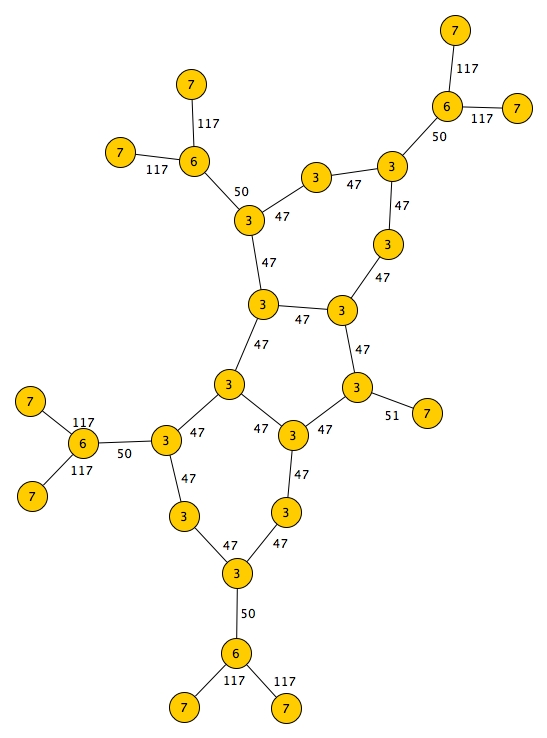} 
  \end{minipage} 
    \begin{minipage}[b]{0.5\linewidth}
    \includegraphics[scale=0.24]{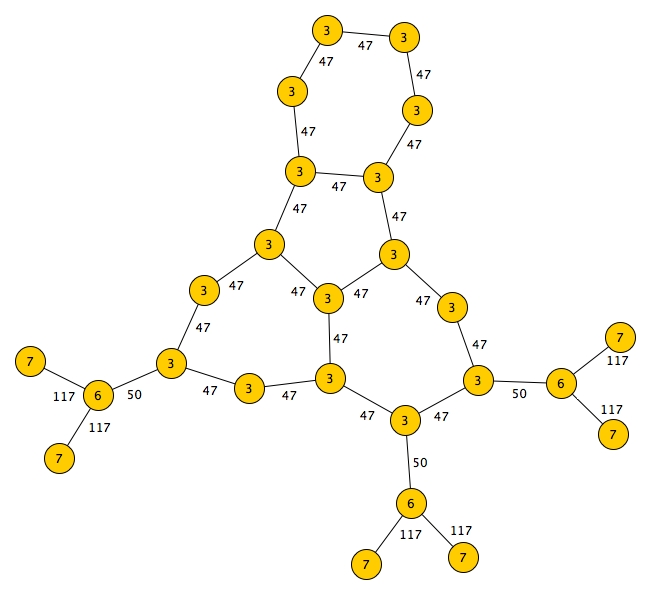} 
  \end{minipage} 
  \begin{minipage}[b]{0.5\linewidth}
    \includegraphics[scale=0.24]{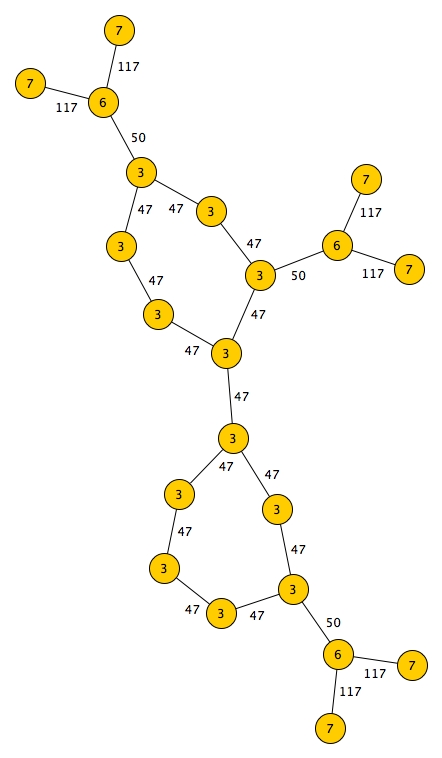} 
  \end{minipage} 
      \begin{minipage}[b]{0.5\linewidth}
    \includegraphics[scale=0.24]{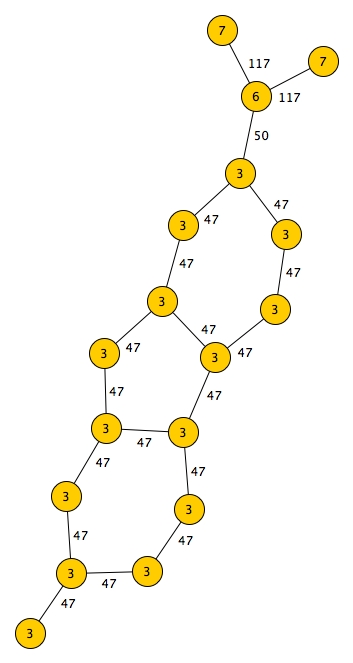} 
  \end{minipage} 
  \begin{minipage}[b]{0.5\linewidth}
    \includegraphics[scale=0.24]{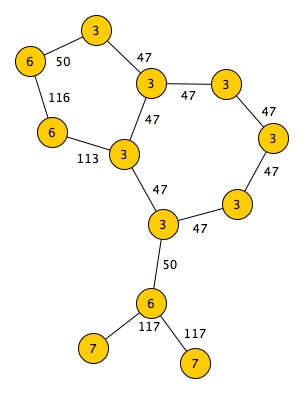} 
  \end{minipage} 
  \caption{Examples of molecule graphs in the MUTAG dataset \citep{shervashidze2011weisfeiler}.}
  \label{fig:chemo1}
\end{figure}

\begin{figure}
  \centering
    \includegraphics[scale=0.24]{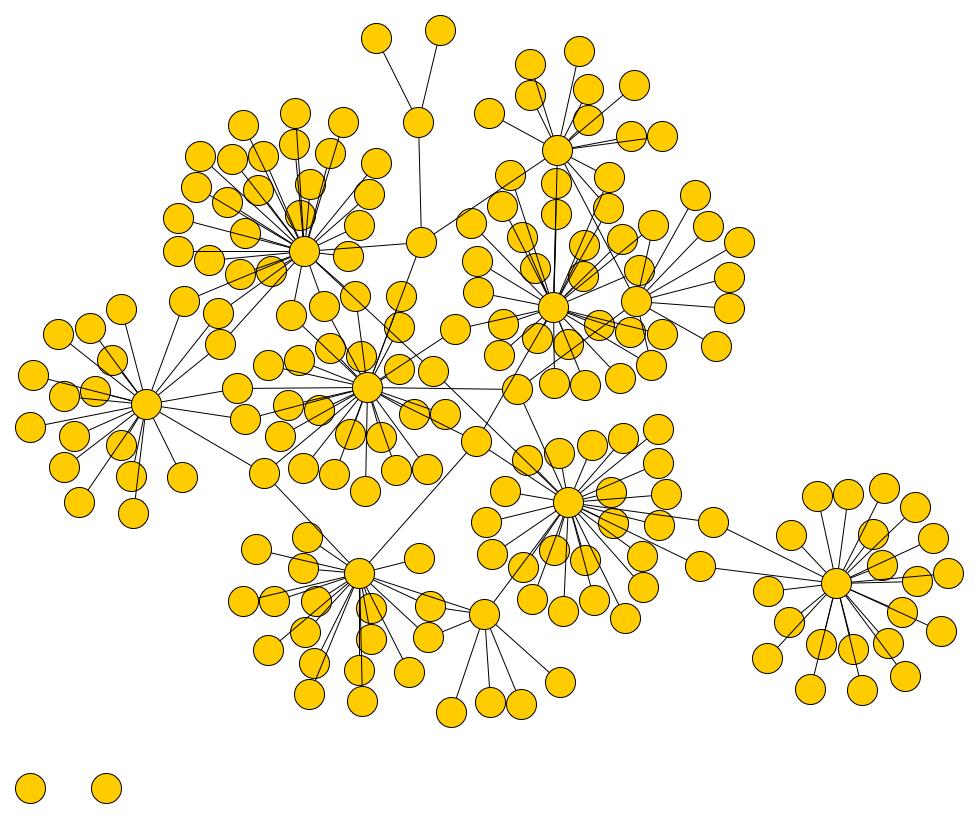}
  \caption{A mouse visual cortex \citep{bock2011network}. }
  \label{fig:mouse1}
\end{figure}

\subsection{Issues}

Suppose we want to learn a distribution over some graph space. This distribution cannot be directly modeled with graphical models because these were designed for multivariate random variables (with a fixed number of components). To avoid this issue, most random graph models in the literature transform the problem into one in which graphical models can be applied. This is done by only modeling a selected set of conditional distributions, for example, the set of distributions in which each is conditioned on some number of vertices. Aside from the fact that many applications simply require full distributions, problems with this approach include: (1) there are complicated consistency issues; a distribution may not exist that could produce a given set of conditional distributions; and (2) this partial modeling, loosely speaking, cannot capture important structures in distributions (e.g., there may be invariances within a full distribution that are difficult to encode within conditional distributions). 
To correct these issues, graphical models (for multivariate random variables) cannot be used for this problem; we need statistical models specifically designed for general graph spaces. 
Suppose we have a graph space $\mathcal{G}$ in which graphs may differ in their order (i.e., graphs in this space may vary in their number of vertices); 
in this work, we want to develop distributions over this type of space.

In addition, we want models that are applicable to problems in which: (a) graphs have complex edge structures; and (b) graphs have attributes associated to their vertices and edges. 
To handle these problems, expressive models are necessary (i.e., models containing a large set of distributions). To make learning feasible in these models, it becomes imperative to specify structure in them as well. 

\subsection{Structure}

To specify structure in random graph distributions, we look to the standard methods used in multivariate random variables for insight. Suppose we have a random variable $\bold{X}$ taking values in $\mathcal{X}=\{0,1\}^n$. In general, its distribution has $2^{n}-1$ parameters that need to be specified. If the value of $n$ is not small, learning this number of parameters, in most real-world problems, is infeasible; hence, the need to control complexity. 
This has led to the wide-spread use of graphical models (\citep{lauritzen1996graphical}, \citep{jordan2004graphical}, \citep{wainwright2008graphical}), a framework that uses factorization to simplify distributions. In this framework, joint distributions are specified by simpler functions, and more specifically, the probability of any $X \in \mathcal{X}$ is uniquely determined by a product of functions over subsets of its components. 

Now, suppose we have a random graph $\bold{G}$ taking values in some finite graph space $\mathcal{G}$. In general, its distribution has $|\mathcal{G}|-1$ parameters that need to be specified, and again, clearly there is a need to control complexity. Similar to the above graphical models, we can simplify distributions through the use of factorization: the probability of any graph $G \in \mathcal{G}$ can be uniquely determined by a product of simpler functions over its subgraphs. Thus, we can create a general framework for random graphs analogous to that of graphical models. 
Indeed, just as graphs can be used to represent the factorization in graphical models, graphs can also be used to represent the factorizations in random graphs. These ideas are explored in Section \ref{sec:graph_modeling_framework}.

\section{Random Trees}

A tree is a special type of graph used in many real-world problems. Like graphs, random tree models range from simplistic ones, amenable to asymptotic analysis, to complex ones, more suited to problem domains with smaller, finite trees. We now briefly review models in the literature.




\subsection{Literature}

A classic random tree model is the \textit{Galton-Watson model} (\citep{watson1875probability}, \citep{le2005random}, \citep{drmota2009random}, \citep{haas2012scaling}), where trees are incrementally constructed: beginning with the root vertex, the number of its children is sampled according to some distribution; for each of these vertices, the number of its children is sampled according to the same distribution, and so on. The literature on these models is vast, most focusing on the probability of extinction and the behavior in the limit. These models are often used, for example, in the study of extinction \citep{haccou2005branching}, and the spread of infectious diseases \citep{britton2010stochastic}.

An extension of the Galton-Watson model is the \textit{multi-type Galton Watson model} (\citep{seneta1970population}, \citep{mode1971multitype}), in which each vertex now has a label from some finite set. As before, trees are incrementally constructed; for a given vertex, the number of its children and their labels is now sampled according to some conditional distribution (conditioned on the label of the parent).

In problems in which vertices have relatively complex labels, 
often a grammar is used to specify which trees are valid (i.e., used to define the tree space). 
These grammars produce trees by production rules, which may be thought of as functions that take a tree and specify a larger one; beginning with the empty tree, trees are incrementally built by the iterative application of these rules. In a \textit{context-free grammar} \citep{chomsky2002syntactic}, the production rules are functions that depend only on one (leaf) vertex in a given tree, and specify its children and their attributes. Distributions can be defined over trees in this grammar by associating a probability with each production rule. 

A \textit{context-sensitive grammar} is an extension of a context-free grammar in which production rules are functions that depend both on a given leaf vertex and certain vertices neighboring this leaf as well. It is well-known that the approach of associating probabilities to production rules does not extend to context-sensitive grammars (i.e., does not produce well-defined distributions in this case); 
to make distributions for these grammars, very high-order models are required. 

There are many applications of random trees with attributes: in linguistics, they are used to describe the structure of sentences and words in natural language \citep{chomsky2002syntactic}; in computer vision, they are used to describe the structure of objects in scenes (\citep{jin2006context}, \citep{zhu2007stochastic}); and in genetics, they are used in the study of structure in RNA molecules (\citep{cai2003stochastic}, \citep{dyrka2009stochastic}, \citep{dyrka2013probabilistic}).

In this work, we consider  
a graphical model approach for random trees; by decomposing a random tree into its marginal random (tree) variables, it becomes tractable to make well-defined tree distributions that are, loosely speaking, context-sensitive. Since trees are graphs, one could model them by applying the same framework that we develop for general graphs. 
However, it is beneficial to instead use models that are tuned to the defining properties of trees.

\section{Outline}

In Section 2, we examine the common compositional structure within multivariate random variables and random graphs, allowing graphical models to be applied to each. The main ideas for extending graphical models to other objects are outlined in this section. In Section 3, we explore the modeling of random trees with graphical models. In Section 4, we provide a formulation for general random objects, and in Section 5, we illustrate the application of these models with some examples, focusing on random graphs. Finally, we conclude with a discussion in Section 6.

\chapter{Random Graphs}
\label{sec:graph_modeling_framework}

In this section, we present a general class of models for random graphs which can be used for creating complex distributions (e.g., distributions that place significant mass on graphs with complicated edge structures). We begin by defining a canonical projection family based on projections that take graphs to their subgraphs. These projections define a consistent family of marginal random (graph) variables, allowing us to specify conditional independence assumptions between them, and in turn, apply Bayesian networks (over the marginal variables that are atomic). Next, we define, using these same graph projections, a Gibbs form for graph distributions, allowing us to specify general factorizations, and in turn, apply Markov random fields. Finally, we consider partially directed models (also known as chain graph models), a generalization of Markov random fields and Bayesian networks; these models are important for random graphs because, as we will discuss, they avoid certain drawbacks that these former models have for this problem, while maintaining their advantages. 



\section{Graphs}
\label{sec:graph_def}

Suppose we have a vertex space $\Lambda_V$ and a edge space $\Lambda_E$, and for simplicity, assume the vertex space is finite. We define a graph to be a couple of the form $G = (V,E)$, where $V$ is a set of vertices and $E$ is a function assigning an edge value to every pair of vertices:
\begin{align*}
& V \subseteq \Lambda_V \\
& E: V \times V \rightarrow \Lambda_E.
\end{align*}
Hence, every vertex is unique, i.e. no two vertices can share the same value in $\Lambda_V$. We assume the edge space $\Lambda_E$ contains a distinguished element that represents the `absence' of an edge (e.g. the value $0$). If a graph has no vertices, i.e., $|V|=0$, we will denote it by $\emptyset$ and refer to it as the empty graph. For simplicity, we assume there are no self loops. That is, there are no edges between a vertex and itself (i.e., $E(v,v) = 0$ for all $v \in V$). 

\begin{example}
Suppose we have a vertex space in which each vertex has a color and a location; let $\Lambda_V = \mathcal{C} \times \mathcal{L}$, where:
	\begin{align*}
	\mathcal{C} & = \{\text{`red'},\text{`blue'} \} \\
	\mathcal{L} & = \{1,\ldots,p\} \times \{1,\ldots,p\}.
	\end{align*}
	Here, $\mathcal{L}$ represents a location space, a two dimensional grid of size $p$. Let the edge space be $\Lambda_E = \{0,1,2\}$, where the value $0$ represents the absence of an edge. See Figure \ref{fig:ilai} for an example graph.
\end{example}

	\begin{figure}[H]
  \centering
    \includegraphics[scale=0.5]{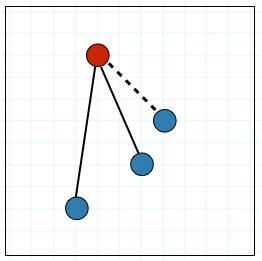}
  \caption{An example graph that uses the vertex and edge space described in Example 2.1. The edge values $1$ and $2$ are represented by lines and dotted lines, respectively. } 
  \label{fig:ilai}
\end{figure}

In most real-world applications, graphs have attributes associated with their vertices and edges; in this case, attributes can be incorporated into the vertex space $\Lambda_V$ and edge space $\Lambda_E$, or alternatively, graphs can be defined to be attributed. For the presentation of the random graph models, we will proceed in the simpler setting though, deferring attributed graphs and other variations to Section \ref{sec:graph_variations}. We consider more examples in Section \ref{sec:examples_sec}.


\section{Marginal Random Graphs}
\label{sec:chap2_discussion}

Suppose we have a graph space $\mathcal{G}$ that we want to define a distribution over. To do this, some basic probabilistic concepts need to be developed; in this section, we define, for a random graph, a family of marginal random (graph) variables. These marginal variables are defined using projections on the graph space, and hence, we require that random graphs take values in graph spaces that are projectable. 

Let's begin by defining an induced subgraph. For a graph $G=(V,E)$, let the subgraph \textit{induced} by a subset $V' \subseteq V$ of its vertices be the graph $G'=(V',E')$, where $E' = E|_{V' \times V'}$ is the restriction of $E$; we let $G' = G(V')$ denote the subgraph of $G$ induced by $V' \subseteq V$. For a given graph, its subgraphs may be thought of as its components or parts, and are fundamental to its statistical modeling. 

A graph space $\mathcal{G}$ is projectable if, for every graph existing in this space, its subgraphs also exist in it. For a graph $G$, let $S(G)$ denote the set of all its subgraphs:
\begin{align*}
S(G) = \{ G(V') \; | \; V' \subseteq V(G) \},
\end{align*}
where $V(G)$ is the set of vertices of graph $G$. This set contains, for example, subgraphs corresponding to individual vertices in $G$ (i.e. the subgraphs $G(V')$ where $V' \subseteq V(G)$ and $|V'|=1$), and the subgraphs corresponding to pairs of vertices in $G$ (i.e. the subgraphs $G(V')$ where $V' \subseteq V(G)$ and $|V'|=2$). 
Now, we may define a projectable graph space: 

\begin{definition}[Projectable Space]
    A graph space $\mathcal{G}$ is \textit{projectable} if:
	\begin{align*}
	G \in \mathcal{G} \implies G' \in \mathcal{G} \text{ for all } G' \in S(G).
	\end{align*}
\end{definition}

Henceforth, we assume that every graph space $\mathcal{G}$ is projectable. Now, we may define graph projections:

\begin{definition}[Canonical Graph Projection]
\label{def:canonical_graph_projections}
Let $V \subseteq \Lambda_V$ be a set of vertices. Define the projection $\pi_{V}: \mathcal{G} \to \mathcal{G}_V$, where $\mathcal{G}_V = \{G \in \mathcal{G} \; | \; V(G) \subseteq V\}$, as follows:
\begin{align*}
\pi_{V}(G) = G(V \cap V'),
\end{align*}
where $V' = V(G)$ is the set of vertices of the graph $G \in \mathcal{G}$.
\end{definition}


The projection $\pi_V$ maps graphs to their induced subgraphs based on the intersection of their vertices with the vertices $V$. 
That is, for a graph $G$, if there are no vertices in this intersection (i.e., $V \cap V' = \emptyset$, where $V' = V(G)$), then $G$ gets projected to the empty graph; if there is an intersection (i.e. $V \cap V' \neq \emptyset$, where $V' = V(G)$), then $G$ gets projected to its subgraph induced by the vertices in this intersection. 

This projection has the property that the image of a projectable graph space is also a projectable space. That is, if the domain $\mathcal{G}$ is projectable, then for each projection $\pi_V$, the codomain $\mathcal{G}_V \subseteq \mathcal{G}$ is also projectable. This property is useful because it allows us to define a consistent set of marginal random variables. Suppose we have a distribution $P$ over a countable graph space $\mathcal{G}$; then the distribution for a marginal random variable $\bold{G}_V$ taking values in $\mathcal{G}_V$ is defined as:
\begin{align*}
P_V^{\text{marg}}(G) = \sum \limits_{\substack{G' \in \mathcal{G} \\ \pi_V(G')=G}} P(G'), \quad G \in \mathcal{G}_V.
\end{align*}
It can be verified that this defines a valid probability distribution, i.e., that
\begin{align*}
\sum \limits_{G \in \mathcal{G}_V} P_V^{\text{marg}}(G) = 1,
\end{align*}
and further, that this set of distributions (i.e., the set $\{P_V^{\text{marg}}, V \subseteq \Lambda_V \}$) is consistent, i.e., for all $V_0, V_1 \subseteq \Lambda_V$ such that $V_0 \subseteq V_1$, we have that $P_{V_0}^{\text{marg}}$ and $P_{V_1}^{\text{marg}}$ are consistent:
\begin{align*}
P_{V_0}^{\text{marg}}(G) = P_{V_1}^{\text{marg}}(\{G' \in \mathcal{G}_{V_1} \; | \; G'(V_0)=G\}),
\end{align*}
for all $G \in \mathcal{G}_{V_0}$.

\section{Independence}
\label{sec:chap2_independence}

The marginal random variables for random graphs defined in the previous section allow us to use the standard definitions of independence and conditional independence for random variables. For convenience, we repeat the definition of independence here, using the notation for random graphs. 
Suppose we have a vertex space $\Lambda_V$ and an edge space $\Lambda_E$, and let $\mathcal{G}$ be a graph space with respect to them. 
Define independence as follows:

\begin{definition}[Independence]
Let $V_1, V_2 \subset \Lambda_V$. For a distribution $P$ over $\mathcal{G}$, we say that the marginal random variables $\bold{G}_{V_1}$ and $\bold{G}_{V_2}$ are \textit{independent} if 
\begin{align*}
P(G_{V_1}, G_{V_2}) & = P(\pi_{V_1}(\bold{G})=G_{V_1}, \pi_{V_2}(\bold{G})=G_{V_2}) \\
			& = P(\pi_{V_1}(\bold{G})=G_{V_1}) \cdot P( \pi_{V_2}(\bold{G})=G_{V_2}) \\
			& = P_{V_1}^{\text{marg}}(\bold{G}_{V_1}=G_{V_1})  \cdot P_{V_2}^{\text{marg}}(\bold{G}_{V_2}=G_{V_2})\\
			& = P_{V_1}^{\text{marg}}(G_{V_1})  \cdot P_{V_2}^{\text{marg}}(G_{V_2}),
\end{align*}
for all $G_{V_1} \in \mathcal{G}_{V_1}$ and $G_{V_2} \in \mathcal{G}_{V_2}$.
\end{definition}


Similarly, conditional independence for random graphs can defined using the standard definitions as well, which we do not repeat here. These definitions suggest methods for specifying structure in distributions:

\begin{example}[`Naive' Random Graphs]
To define a distribution $P$ over a graph space $\mathcal{G}$, a naive approach might be to assume, loosely speaking, that all marginal random variables are independent. Unlike for multivariate random variables though, due to the constraints imposed by the dependence of edges on vertices, conditional independence assumptions are also necessary. Let $\mathcal{V} = \mathcal{V}^{(1)} \cup \mathcal{V}^{(2)}$, where each $\mathcal{V}^{(i)} =  \{ V \subseteq \Lambda_V \; : \; |V| = i\}$, be the set of all singleton vertices as well as all pairs of vertices. We will make invariance assumptions with respect to the marginal random variables $\bold{G}_V, V \in \mathcal{V}$: assume independence between these variables if they do not have vertices in common, and further, assume conditionally independence between edge variables, given the vertex variables. More precisely, suppose:
\begin{enumerate}
\item For every $V_1,V_2 \in \mathcal{V}$ such that $V_1 \cap V_2 = \emptyset$, the random variable $\bold{G}_{V_1}$ is independent of the random variable $\bold{G}_{V_2}$. 
\item For every $V_1,V_2 \in \mathcal{V}$ such that $V_1 \cap V_2 \neq \emptyset$, the random variable $\bold{G}_{V_1}$ is conditionally independent of the random variable $\bold{G}_{V_2}$, given the variable $\bold{G}_{V_1 \cap V_2}$.
\end{enumerate}
Loosely speaking, this latter assumption makes edges incident on a common vertex conditionally independent, given that vertex. With these assumptions, the model has the form:
\begin{align}
 \label{eq:naive_model}
P(G) & = P(G_V, \; V \in \mathcal{V}) \notag \\
	& = P(G_V, \; V \in \mathcal{V}^{(1)}) \cdot P(G_V, \; V \in \mathcal{V}^{(2)} \; | \; G_V, \; V \in \mathcal{V}^{(1)}) \notag \\
	& = \prod \limits_{\substack{V \in \mathcal{V}^{(1)}} } P_{V}^{\text{marg}}( G_{V} ) \prod \limits_{\substack{V \in \mathcal{V}^{(2)}} } P_V^{\text{marg}}(G_V \; | \; G_{\{v\}}, v \in V ) \notag \\
	& = \prod \limits_{\substack{v \in \Lambda_V} } P_{\{v\}}^{\text{marg}}( G_{\{v\}} ) \prod \limits_{\substack{v,v' \in \Lambda_V \\ v \neq v'} } P_{\{v,v'\}}^{\text{marg}}( G_{\{v,v'\}} \; | \; G_{\{v\}}, G_{\{v'\}} ). 
\end{align}
Finally, we mention that this model may be further simplified by assuming the distribution is invariant to isomorphisms (see Section \ref{sec:chap2_isomorphisms}).
\end{example}

\section{Bayesian Networks }
\label{sec:Bayesian_Networks}

In graphical models, graphs are used to represent the structure within distributions; we will refer to these as structure graphs to avoid confusion. For a graph with a binary edge function, two vertices $v,u$ are said to have a \textit{directed} edge from $v$ to $u$ (denoted by $v \rightarrow u$) if $E(v,u)=1$ and $E(u,v)=0$, and are said to have an \textit{undirected} edge between them (denoted $v - u$) if $E(v,u) = E(u,v)=1$. The vertex $v$ is a \textit{parent} of vertex $u$ if $v \rightarrow u$, and vertices $v$ and $u$ are \textit{neighbors} if $v - u$. The set of parents of $v$ is denoted by $\text{pa}(v)$ and the set of neighbors by $\text{ne}(v)$. 
In this section, we consider Bayesian networks, a modeling framework based on conditional independence assumptions, specified in structure graphs with directed edges \citep{pearl1995probabilistic}.




\subsection{Structure Graphs}

Let's begin by considering Bayesian networks for multivariate random variables; suppose we have a random variable $\bold{X}$ taking values in $\mathcal{X} = \{0,1\}^n$, and a structure graph with vertices $\{1,\ldots,n\}$ and a binary edge function of the form $\mathcal{N}: \{1,\ldots,n\}^2 \to \{0,1\}$. Further, assume this structure graph has directed edges and is acyclic; a distribution $P$ over $\mathcal{X}$ is said to factor according to this structure graph if it can be written in the form: 
\begin{align*}
P(X) = \prod_{i = 1}^n P(X_{i} \; | \; X_{\text{pa}(i)} ), 
\end{align*}
where $X_A \equiv \pi_{A}(X)$ is the projection of $X$ onto its components in the set $A \subseteq \{1,\ldots,n\}$.

Now consider Bayesian networks for random graphs; suppose we have a random graph $\bold{G}$ taking values in a graph space $\mathcal{G}$, and a structure graph with vertices $\mathcal{V} = \mathcal{V}^{(1)} \cup \mathcal{V}^{(2)}$, where each $\mathcal{V}^{(i)} =  \{ V \subseteq \Lambda_V \; : \; |V| = i\}$, and a binary edge function of the form $\mathcal{N}: \mathcal{V}^2 \to \{0,1\}$. Further, assume the structure graph is directed and acyclic; a distribution $P$ over $\mathcal{G}$ factorizes according to this structure graph if it can be written in the form:
\begin{align*}
P(G) & = P(G_V, \; V \in \mathcal{V}) \\
	& = \prod_{V \in \mathcal{V}} P(G_{V} \; | \; G_{\text{pa}(V)} ),
\end{align*}
where, for $A \subseteq \mathcal{V}$, we have $G_A \equiv \{G_V, \; V \in A\}$, and where, recall $G_V \equiv \pi_V(G)$ is the projection of $G$ onto the vertices $V$. 

\begin{example}[`Naive' Random Graphs (cont.)]
We revisit example 2.2, now specifying the structure in terms of a structure graph. Define the neighborhood function $\mathcal{N}:\mathcal{V}^2 \to \{0,1\}$ as follows:
\begin{align*}
\mathcal{N}(V,V') = \left \{
\begin{array}{l}
    1, \text{ if }  V \subset V' \\ 
    0, \text{ otherwise }
\end{array}.
\right.
\end{align*}
In other words, this neighborhood function specifies a directed edge from each vertex variable $G_{\{v\}}$ to every edge variable of the form $G_{\{v,v'\}}$. Distributions that cohere with this structure graph can be written in the form of equation (\ref{eq:naive_model}). This model has the minimal complexity in the sense that the neighborhood function cannot have fewer non-zero values while still defining a valid structure graph (i.e., the structure graph specifies independence assumptions that are consistent in the sense that there exists a well-defined distribution that satisfies them). Hence, the reason for referring to this as the \textit{naive} model. 
\end{example}

\subsection{Atomic Variables}
\label{sec:atomics}




In the previous section, the main difference between the graphical model for multivariate random variables and for random graphs was in the marginal variables used in the structure graph in each case (i.e., the variables in which vertices in the structure graph correspond). In this section, we consider in more detail the subset of variables used, for a given random object, by graphical models in their structure graphs.

Suppose we have a random graph $\bold{G}$ taking values in a projectable graph space $\mathcal{G}$. The canonical set of projections on this graph space defines a set of marginal random variables, and a projection in this set such that, loosely speaking, no other projection further projects downward, defines an \textit{atomic} variable. 
Informally, a projection $\pi$ is atomic (with respect to a finite projection family) if: (a) there does not exist a projection in this family that projects to a subset of its image; or (b) if there are projections in this family that project to a subset of its image, then this set loses information (i.e., $\pi$ is not a function of these projections). We defer more formal definitions to Section \ref{sec:proj_families}. 
The second condition ensures that any object projected by the set of atomic projections can be reconstructed. We will call a marginal variable atomic if it corresponds to an atomic projection. 

For random graphs, the atomic projections have the form $\pi_{\{v\}}$ or $\pi_{\{v,v'\}}$ (i.e., loosely speaking, the projections to some vertex or edge), and the non-atomic projections have the form $\pi_V$ where $|V|>2$ (i.e., the projections to larger vertex sets). 
Hence, for a random graph $\bold{G}$, the atomic variables are $\{\pi_V(\bold{G}), \; V \in \mathcal{V}\}$, where $\mathcal{V} = \mathcal{V}^{(1)} \cup \mathcal{V}^{(2)}$; these variables can be used as a representation of the random graph, and graphical models specify structure in terms of them (i.e., the vertices in structure graphs correspond to these variables). 

\section{Gibbs Distribution}



In this section, we define a Gibbs form for random graphs based on a canonical factorization; this factorization is determined by the canonical projections, the projection family taking graphs to their subgraphs. For a graph $G$, let $S_k(G)$ denote the set of all subgraphs of $G$ of order $k$:
\begin{align*}
S_k(G) = \{ G' \in S(G) \; : \; |V(G')| = k \},
\end{align*}
where, recall $S(G)$ is the set of all of induced subgraphs of $G$, and where $V(G)$ denotes the vertices of graph $G$. Hence, the set $S_1(G)$ contains graphs having a single vertex, the set $S_2(G)$ contains graphs having two vertices, and so on. 

For this section, let the vertex space $\Lambda_V$ be countable, and for any graph, assume its vertex set $V \subset \Lambda_V$ is finite. We can define a Gibbs distribution for a countable graph space $\mathcal{G}$ as follows: \\

\begin{definition}[Gibbs Distribution]
A probability mass function (pmf) P over a countable graph space $\mathcal{G}$ is a \textit{Gibbs distribution} if it can be written in the form
\begin{align}
P(G) = \exp \left[ \psi_0 + \sum \limits_{G' \in S_1(G)} \psi_1(G') +  \sum \limits_{G' \in S_2(G)} \psi_2(G') + \ldots \right]
\label{eq:gibbs_eqn}
\end{align}
where $\psi_k: \mathcal{G}^{(k)} \rightarrow \mathbb{R} \cup \{-\infty\}$ is called the potential of order $k$, and $\mathcal{G}^{(k)}$ denotes the space of graphs of order $k$, i.e.:
\begin{align*}
\mathcal{G}^{(k)} = \{ (V,E) \in \mathcal{G} \; : \; |V| = k \}.
\end{align*}
\end{definition}





A graph space need not be countable (depending on $\Lambda_E$), but for ease of exposition, we assumed so here. We give some examples in which classic models are expressed in this form. 

\begin{example}[The Erd\H os-R\'enyi model]
\citep{erdds1959random} \citep{gilbert1959random}:
Let $\mathcal{G}$ be a standard graph space (i.e. the vertex space $\Lambda_V = \mathbb{N}$ is the set of natural numbers and the edge space $\Lambda_E = \{0,1 \}$.) The Erd\H os-R\'enyi model is a conditional distribution specifying the probability of edges $E$ given a finite set of vertices $V$. It makes the invariance assumption that, for any two vertices, the probability of an edge between them is independent of the other edges in the graph:
\begin{align*}
P(E|V) & = \exp \left[  \sum \limits_{G' \in S_2((V,E))} \psi_2(G')  \right] 
\end{align*}
where
\begin{align*}
\psi_2(G') = \begin{cases} 
	      \log (p), & \text{if } G' \text{ has an edge}  \\ 
	      \log (1-p), & \text{otherwise} 
	\end{cases}
\end{align*}
and $p \in [0,1]$. 
\end{example}

\begin{example}[The stochastic blockmodel]
\citep{holland1983stochastic}: Let $\mathcal{G}$ be a graph space where the vertex space is $\Lambda_V = \{1,\ldots,l\} \times \mathbb{N}$, where the first component corresponds to some label, and the edge space is $\Lambda_E = \{0,1 \}$. The stochastic blockmodel is also a conditional distribution specifying the probability of edges $E$ given a finite set of vertices $V$. It makes the invariance assumption that, for any two vertices, the probability of an edge between them depends on only the label of those two vertices:
\begin{align*}
P(E|V) & = \exp \left[  \sum \limits_{G' \in S_2((V,E))} \psi_2(G')  \right] 
\end{align*}
where
\begin{align*}
\psi_2(G') = \begin{cases} 
	      \log \; p(a_1,a_2), & \text{if } G' \text{ has an edge}  \\ 
	      \log (1-p(a_1,a_2)), & \text{otherwise} 
	\end{cases}
\end{align*}
where $a_1,a_2$ are the labels of the two vertices in $G' \in \mathcal{G}^{(2)}$ and $p: \{1,\ldots,l\}^2 \rightarrow [0,1]$ is a symmetric function.  
\end{example}

We now define a positivity condition for distributions; this will allow us to make a statement about the universality of the Gibbs representation. \\ 

\begin{definition}[Positivity Condition]
Let $P$ be a real function over a projectable graph space $\mathcal{G}$. The function $P$ is said to satisfy the \textit{positivity condition} if, for all $G \in \mathcal{G}$, we have:
\begin{align*}
P(G) > 0 \implies P(G')>0 \text{ for all } G' \in S(G). \\
\end{align*}
\end{definition}

\begin{theorem}[]
\label{theorem:universality}
If $P$ is a positive distribution over a projectable graph space $\mathcal{G}$, then $P$ can be written in Gibbs form. 
\end{theorem}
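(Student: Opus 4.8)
The plan is to run the Möbius-inversion proof of the Hammersley--Clifford theorem over the lattice of induced subgraphs of a graph. The first step is to note that the Gibbs form $(\ref{eq:gibbs_eqn})$ is equivalent to the existence of a single potential $\psi:\mathcal{G}\to\mathbb{R}\cup\{-\infty\}$ with $\log P(G)=\sum_{G'\in S(G)}\psi(G')$ for every $G\in\mathcal{G}$: given such a $\psi$ one recovers the order-$k$ potentials by $\psi_0=\psi(\emptyset)$ and $\psi_k=\psi|_{\mathcal{G}^{(k)}}$, and conversely regrouping $\sum_{G'\in S(G)}\psi(G')$ by the order of $G'$ gives $(\ref{eq:gibbs_eqn})$. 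Two structural facts are used throughout: for a fixed $G$ the index set $S(G)$ is finite (vertex sets are finite by assumption) and, since distinct $V'\subseteq V(G)$ give distinct induced subgraphs $G(V')$, the poset $S(G)$ under the induced-subgraph relation is isomorphic to the Boolean lattice $2^{V(G)}$; and, by projectability, $S(G)\subseteq\mathcal{G}$, so $\psi$ is genuinely evaluated on elements of the graph space.

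For a graph $G'$ with $P(G')>0$, define $\psi(G')$ by Möbius inversion over this Boolean lattice,
\[
\psi(G')=\sum_{A\subseteq V(G')}(-1)^{|V(G')|-|A|}\,\log P\bigl(G'(A)\bigr).
\]
The positivity condition is exactly what makes this well-defined: $P(G')>0$ forces $P(G'(A))>0$, hence $\log P(G'(A))\in\mathbb{R}$, for every $A\subseteq V(G')$; and the quantity is intrinsic to $G'$ because the induced subgraphs of $G'$ do not depend on any ambient graph. On graphs with $P=0$ we extend $\psi$ as follows: by the contrapositive of the positivity condition, $\{G:P(G)=0\}$ is closed upward under the supergraph relation within $\mathcal{G}$, so every zero-probability $G$ has an induced subgraph $G^{*}\in S(G)$ that is minimal in vertex count among its zero-probability subgraphs (hence all proper induced subgraphs of $G^{*}$ have positive probability); set $\psi(G')=-\infty$ on such minimal zero-probability graphs and $\psi(G')=0$ on the remaining zero-probability graphs. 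The verification is the standard interchange of sums: if $P(G)>0$ then all of $S(G)$ is positive, the extension is irrelevant, and
\[
\sum_{A\subseteq V(G)}\psi\bigl(G(A)\bigr)=\sum_{B\subseteq V(G)}\log P\bigl(G(B)\bigr)\!\!\sum_{A:\,B\subseteq A\subseteq V(G)}\!\!(-1)^{|A|-|B|}=\log P(G),
\]
because the inner alternating sum over the Boolean interval $[B,V(G)]$ is $0$ unless $B=V(G)$, when it is $1$; and if $P(G)=0$ then $\sum_{G'\in S(G)}\psi(G')$ contains the term $\psi(G^{*})=-\infty$ and, since $\psi$ never equals $+\infty$, the sum is $-\infty=\log P(G)$.

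The alternating-sum identity is routine; the two points that need care — and the only places the hypotheses enter — are (i) that $\psi$ is a bona fide function on $\mathcal{G}$, which needs projectability (so that $S(G')\subseteq\mathcal{G}$) together with finiteness of $\log P$ on downward-closed subgraph families (the positivity condition), and (ii) the bookkeeping for zero-probability graphs, ensuring a minimal zero-probability subgraph always exists and that no $+\infty$ term can cancel the $-\infty$. I expect (ii) to be the main obstacle to present cleanly, although it is a direct consequence of the positivity condition.
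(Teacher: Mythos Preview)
Your proof is correct and follows essentially the same route as the paper: M\"obius inversion over the Boolean lattice of vertex subsets to define $\psi$, then the standard alternating-sum identity to verify the Gibbs representation. Your treatment is in fact more thorough than the paper's, which works only with $\log P(G_W)/P(\emptyset)$ and does not explicitly address the zero-probability graphs; your handling of that case via minimal zero-probability subgraphs and $\psi=-\infty$ is a clean way to complete the argument.
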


\begin{proof}
For a given graph $G \in \mathcal{G}$, define
\begin{align*}
\phi_G(W) = \log \frac{ P(G_W) }{ P(\emptyset) }
\end{align*}
where $W \subseteq V(G)$ and where $G_W = \pi_W(G)$. Using the Mobius formula, we can write
\begin{align*}
\phi_G(W) & = \sum_{W' \subseteq W} \psi_G(W') \\
 \psi_G(W) & = \sum_{W' \subseteq W} (-1)^{|W| - |W'|} \phi_G(W'),
\end{align*}
where the positivity condition is required for the validity of the second equation. Note that $\psi_G(W)$ only depends on $G_W$ (not on the rest of $G$), so it can be renamed $\psi(G_W)$; letting $W = V(G)$, we have:
\begin{align*}
P(G) = P(\emptyset) \exp \left [ \sum_{ W \subseteq V(G) } \psi(G_W) \right ].
\end{align*}
\end{proof}

This theorem shows that distributions can be expressed in such a way that the probability of a graph is a function of only its induced subgraphs; that is, statistical models need not include (more formally, set to zero) the value of potentials that involve vertices that are absent from a given input graph. Henceforth, we return to assuming vertex spaces are finite (since, in our formulation, graphical models are limited to finite projection families (see Section \ref{sec:General_Random_Objects})).




\section{Markov Random Fields}
\label{sec:compact_dists}

In the previous section, we defined a Gibbs distribution for random graphs, a universal representation (Theorem \ref{theorem:universality}) based on a general factorization. In this section, we consider Markov random fields, a graphical model that specifies structure in distributions based on these factorizations (\citep{kindermann1980markov}, \citep{geman1986markov}, \citep{clifford1990markov}). 

Consider Markov random fields for multivariate random variables: suppose we have a random variable $\bold{X}$ taking values in $\mathcal{X} = \mathcal{X}_1 \times \cdots \times \mathcal{X}_n$, where each $\mathcal{X}_i$ is finite. To define a distribution over $\mathcal{X}$, we will assume it equals some product of simpler functions (i.e., functions that have smaller domains than $\mathcal{X}$). To define these simpler functions, we use projections of the form $\pi_C: \mathcal{X} \to \mathcal{X}_C$, where $C \subseteq \{1,\ldots,n\}$ and $\mathcal{X}_C = \prod_{i \in C} \mathcal{X}_i$, and take elements in $\mathcal{X}$ to their components. Using these projections, we can define factors of the form $f_C: \mathcal{X}_C \to \mathbb{R}^+$, and 
a distribution $P$ factorizes over $\mathcal{C} \subseteq \mathbb{P}(\{1,\ldots,n\})$ if it can be written as:
\begin{align*}
P(X) & = \frac{1}{Z} \prod \limits_{C \in \mathcal{C} } f_C( \pi_C(X) ) \\
	& = \frac{1}{Z} \prod \limits_{C \in \mathcal{C} } f_C( X_C ),
\end{align*}
where $X_C \equiv \pi_C(X)$, and where $\mathbb{P}(\{1,\ldots,n\})$ denote the power set of $\{1,\ldots,n\}$. Structure can be specified in this model by the choice of factors. For a given model, complexity can be reduced through the removal of factors (i.e., removing elements from the set $\mathcal{C}$).


Now suppose we have a random graph $\bold{G}$ taking values in $\mathcal{G}$. As was done in the multivariate case, we define the factorization of distributions over this graph space using a projection family;  
a distribution can be defined as a product of factors of the form $f_V: \mathcal{G}_V \to \mathbb{R}^+$, where, recall $\mathcal{G}_V \equiv \pi_V(\mathcal{G})$ is a smaller graph space. A distribution $P$ factorizes over $\mathcal{V} \subseteq \mathbb{P}(\Lambda_V)$ if it can be written as:
\begin{align*}
P(G) & = \frac{1}{Z} \prod \limits_{V \in \mathcal{V}} f_V( \pi_V(G) ) \\
        & = \frac{1}{Z} \prod \limits_{\substack{V \in \mathcal{V} \\ V \subseteq V(G)} } f_V( G_V ),
\end{align*}
where $G_V \equiv \pi_V(G)$, and where we are assuming $f_{V}(G)=1$ if $V(G) \not \subseteq V$. As above, structure can be specified in this model through the choice of factors.  

\subsection{Cliques}


We now consider the representation of the factorizations in the previous section in terms of an undirected structure graph;
suppose we have a neighborhood function $\mathcal{N}: \mathcal{V}^2 \to \{0,1\}$ that is symmetric, where $\mathcal{V} = \mathcal{V}^{(1)} \cup \mathcal{V}^{(2)}$. In order for a neighborhood function to be valid (i.e., specify independence assumptions that are consistent in the sense that there exists a well-defined distribution that satisfies them), it must specify a direct dependency between any $V,V' \in \mathcal{V}$ such that one is a subset of the other. That is, for all $V,V' \in \mathcal{V}$, we require that
\begin{align*}
\mathcal{N}(V,V')=1 \text{ if } V \subset V' \text{ or } V' \subset V.
\end{align*}
A neighborhood function specifies the set of factors within a model based on its cliques, where cliques are defined as follows: 

\begin{definition}[Cliques]
For a neighborhood function $\mathcal{N}$, a collection of vertex sets $\tilde{\mathcal{V}} \subseteq \mathcal{V}$ is a \textit{clique} if:
\begin{enumerate}
\item 
$\mathcal{N}(V_0,V_1)=1 \text{ for all } V_0,V_1 \in \tilde{\mathcal{V}}$; or
\item 
$|\tilde{\mathcal{V}}|=1.$
\end{enumerate}
\end{definition}
Hence, by the second condition, we have that each vertex $\{v\}$ and each pair of vertices $\{v,v'\}$ are cliques. Let $\mathbb{V}_{\mathcal{N}}$ contain the vertex sets that correspond to cliques:
\begin{align*}
\mathbb{V}_{\mathcal{N}} = \left \{  \bigcup \limits_{ V \in \tilde{\mathcal{V}}} V \; \;  : \; \; \tilde{\mathcal{V}} \subseteq \mathcal{V}, \; \tilde{\mathcal{V}} \text{ is a clique} \right \}.
\end{align*}
This set represents the set of factors to be used in a distribution (i.e., for each $V \in \mathbb{V}_{\mathcal{N}}$, we will assume there is a factor over this set of vertices). Hence, a Gibbs distribution with respect to a neighborhood function can be defined as follows:

\begin{definition}[Gibbs Distribution]
Let $P$ be a pmf over $\mathcal{G}$. The distribution $P$ is a \textit{Gibbs distribution} with respect to the neighborhood function $\mathcal{N}$ if it can be written in the form:
\begin{align}
P(G) & = \frac{1}{Z} \prod \limits_{ \substack{V \in \mathbb{V}_{\mathcal{N}} \\ V \subseteq V(G) }} \phi_V(G_V), 
\label{gibbs_dist_c}
\end{align}
where $\phi_V: \mathcal{G}_V \rightarrow [0, \infty)$.
\end{definition}


Now that we have defined a Gibbs distribution with respect to a neighborhood function, let's consider its connections to Markov properties and Markov distributions.


\subsection{Markovity}

A distribution is Markov if, loosely speaking, conditional probabilities only depend on local parts of the random object. Let's consider Markovity for multivariate random variables. Suppose we have a random variable $\bold{X}$ taking values in $\mathcal{X} = \{0,1\}^n$, and a (symmetric) neighborhood function $\mathcal{N}: \{1,\ldots,n\}^2 \to \{0,1\}$. A distribution $P$ over $\mathcal{X}$ is Markov with respect to the neighborhood function $\mathcal{N}$ if, for all $X \in \mathcal{X}$ and for all $i$, we have that:
\begin{align}
P(X_i \; | \; X_j, \;  j \neq i ) = P(X_i \; | \; X_j , \; j \in J_i ) 
\end{align}
where $J_i = \{ j  \; | \; \mathcal{N}(i,j)=1 \}$, and where each $X_i$ denotes the $i^{\text{th}}$ component of $X$. 

Now consider random graphs; let $\Lambda_V$ and $\Lambda_E$ be a vertex and edge space, respectively, and let $\mathcal{G}$ be a graph space with respect to them. Further, let $\mathcal{V} = \mathcal{V}^{(1)} \cup \mathcal{V}^{(2)}$, and suppose we have a (symmetric) neighborhood function $\mathcal{N}: \mathcal{V}^2 \to \{0,1\}$. Then, a distribution $P$ is Markov with respect to the neighborhood function $\mathcal{N}$ if, for all $G \in \mathcal{G}$ and all $V \in \mathcal{V}$, we have that:
\begin{align}
\label{eqn:markovity_graphs}
P(G_V \: | \; G_{V'}, \; V' \in \mathcal{V} \setminus u(V)) & = P(\bold{G}_V = G_V \; | \; \bold{G}_{V'} = G_{V'}, \;  V' \in \mathcal{V} \setminus u(V) ) \nonumber \\
	& = P(\bold{G}_V = G_V \; | \; \bold{G}_{V'} = G_{V'}, \;  V' \in J_{V} )\nonumber \\
	& = P(G_V \: | \; G_{V'}, \; V' \in J_{V}),
\end{align}
where $J_{V} = \{ V' \in \mathcal{V} \setminus u(V) \; | \; \mathcal{N}(V,V')=1 \}$, and where $u(V) = \{V' \in \mathcal{V} \; | \; V \subseteq V' \}$. Thus, we define Markovity as follows: 

\begin{definition}[Markov Distribution]
Let $P$ be a pmf over $\mathcal{G}$. The distribution $P$ is a \textit{Markov distribution} with respect to neighborhood function $\mathcal{N}$ if, for all $G \in \mathcal{G}$ and $V \in \mathcal{V}$, equation (\ref{eqn:markovity_graphs}) holds.
\end{definition}

We have that if a distribution is Gibbs with respect to some neighborhood function, then it is Markov with respect to it as well:
 
 \begin{proposition}
Let $P$ be a distribution over $\mathcal{G}$ and let $\mathcal{N}$ be a neighborhood function. Then:
\begin{align*}
P \text{ is Gibbs w.r.t. } \mathcal{N} \implies P \text{ is Markov w.r.t. } \mathcal{N}. 
\end{align*}
\end{proposition}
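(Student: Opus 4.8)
The plan is to adapt the standard Gibbs-implies-Markov argument from the theory of Markov random fields to the graph setting. Suppose $P$ is Gibbs with respect to $\mathcal{N}$, so that by (\ref{gibbs_dist_c}) we may write $P(G) = \frac{1}{Z}\prod_{W\in\mathbb{V}_{\mathcal{N}},\, W\subseteq V(G)}\phi_W(G_W)$. Fix a graph $G\in\mathcal{G}$ and a set $V\in\mathcal{V}$; the goal is to show that the conditional probability appearing in (\ref{eqn:markovity_graphs}) depends on the conditioning variables $\{G_{V'} : V'\in\mathcal{V}\setminus u(V)\}$ only through the sub-collection $\{G_{V'} : V'\in J_V\}$.

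First I would write this conditional probability explicitly as a ratio: its numerator is $\sum_H P(H)$ over all graphs $H$ that agree with $G$ on every atomic variable outside $u(V)$ and satisfy $\pi_V(H)=G_V$, and its denominator is the same sum with the constraint $\pi_V(H)=G_V$ dropped. Because the conditioning fixes $\pi_{V'}(H)$ for every $V'\notin u(V)$ — in particular every singleton (vertex-presence) variable whose index is disjoint from $V$ — the induced subgraph $\pi_W(H)$, and whether the factor $\phi_W$ is active at all, is completely determined, independently of the free variables, for every $W$ with $V\not\subseteq W$. Hence each such $\phi_W$ contributes the same constant $\phi_W(G_W)$ to every term of the numerator and the denominator and cancels. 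I would treat $V\in\mathcal{V}^{(1)}$ and $V\in\mathcal{V}^{(2)}$ separately here, since $u(V)$, and hence which variables are free, differs; in the singleton case one must also track that when the vertex $v$ is absent none of the factors indexed by sets containing $v$ are active, so those terms collapse to a single graph.

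After this cancellation the conditional probability is expressed purely through the factors $\phi_W$ with $W\in\mathbb{V}_{\mathcal{N}}$ and $V\subseteq W\subseteq V(G)$, and each such $\phi_W(G_W)$ is a function of $\{G_{V'} : V'\in\mathcal{V},\, V'\subseteq W\}$. It then remains to prove the combinatorial fact that for every clique-union $W\in\mathbb{V}_{\mathcal{N}}$ with $V\subseteq W$, every $V'\in\mathcal{V}$ with $V'\subseteq W$ lies in $u(V)\cup J_V$ — equivalently, is either a superset of $V$ or an $\mathcal{N}$-neighbor of $V$. (The same statement also controls which of these factors are active, since activity of $\phi_W$ is governed by the singleton variables $\{w\}\subseteq W$.) Granting this, every surviving factor, and hence the ratio, is a function of $G_V$ together with $\{G_{V'} : V'\in J_V\}$, which is exactly (\ref{eqn:markovity_graphs}).

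I expect this last combinatorial step to be the main obstacle. In the multivariate case it is immediate from the definition of a clique — a clique containing a node $i$ is contained in the closed neighborhood of $i$ — but here $\mathbb{V}_{\mathcal{N}}$ consists of unions of cliques of $\mathcal{N}$, and a set $W$ obtained this way can contain atomic index sets $V'$ that do not belong to the generating clique; showing that such a $V'$ cannot be non-adjacent to $V$ relies essentially on the validity requirements imposed on $\mathcal{N}$ (the guaranteed edges between nested index sets, and whatever further closure is built into the notion of a valid neighborhood function). The other delicate point is the vertex-presence versus edge bookkeeping in the cancellation step, but that becomes routine once the two cases for $V$ are separated.
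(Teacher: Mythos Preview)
The paper does not prove this proposition; it is asserted and the discussion moves immediately to the failure of the converse. There is therefore no argument of the paper's to compare yours against.

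Your outline is the right one, and your instinct about where the difficulty lies is correct --- but the obstacle you flag is fatal rather than merely delicate. The combinatorial claim you need (that every atomic $V'\subseteq W$, for $W\in\mathbb{V}_{\mathcal{N}}$ containing $V$, lies in $u(V)\cup J_V$) is false under the only validity requirement the paper actually states, and in fact the proposition itself fails for such $\mathcal{N}$. Take $\Lambda_V=\{1,2\}$, $\Lambda_E=\{0,1\}$, and let $\mathcal{N}$ be the minimal valid neighborhood function: edges $\{1\}\!-\!\{1,2\}$ and $\{2\}\!-\!\{1,2\}$ only, so $\mathcal{N}(\{1\},\{2\})=0$. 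Then $\mathbb{V}_{\mathcal{N}}=\{\{1\},\{2\},\{1,2\}\}$, and a Gibbs distribution has $P(G)\propto\phi_{\{1\}}\phi_{\{2\}}\phi_{\{1,2\}}$, each factor present only when its index set lies in $V(G)$. For $V=\{1\}$ one finds $J_V=\emptyset$, so Markovity would force $G_{\{1\}}\perp G_{\{2\}}$. But writing $b=\phi_{\{1,2\}}(0)+\phi_{\{1,2\}}(1)$,
\begin{align*}
P\bigl(1\in V(G)\mid 2\notin V(G)\bigr)&=\frac{\phi_{\{1\}}}{1+\phi_{\{1\}}},\\
P\bigl(1\in V(G)\mid 2\in V(G)\bigr)&=\frac{\phi_{\{1\}}\,b}{1+\phi_{\{1\}}\,b},
\end{align*}
and these differ unless $b=1$. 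The mechanism is exactly the one you identified: whether the pair factor $\phi_{\{1,2\}}$ is active depends on $G_{\{2\}}$, and $\{2\}\notin J_{\{1\}}$.

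So either an additional closure hypothesis on $\mathcal{N}$ is needed beyond the nested-set requirement, or the statement should be read as a heuristic carried over from the multivariate case. Your sketch cannot be completed as the proposition is stated.
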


The reverse implication in the above proposition is not true (i.e., the Hammersley-Clifford theorem (\citep{grimmett1973theorem}, \citep{besag1974spatial}) does not hold). A neighborhood function can specify more structure for a Markov distribution than for a Gibbs distribution; hence, one cannot specify (general) independence assumptions and then assume a Gibbs form. The reason is because the atomic variables have redundancy in them; a vertex variable $\bold{G}_{\{v\}}$ is a function of an edge variable of the form $\bold{G}_{\{v,v'\}}$. 
For a discussion on this issue, see Section \ref{sec:discussion_random_graphs}. To avoid this drawback, but maintain the advantages offered by undirected models (in particular, the ability to express the probability of a graph in terms of only its subgraphs), we now consider partially directed models.


\section{Partially Directed Models}
\label{sec:directed_models}


In this section, we briefly review chain graph models \citep{lauritzen2002chain}, which we will use in the modeling of random graphs. These models involve structure graphs that can have both directed and undirected edges, a generalization of Bayesian models and Markov random fields. The reason chain graph models are beneficial for random graphs is because they allow one to specify, loosely speaking, a Gibbs distribution over vertices, as well as a Gibbs distribution over edges, while avoiding the functional dependencies that are problematic. For these structure graphs, we will assume that all edges between vertex variables and edge variables are directed, and all other edges undirected. 


 

In these models, structure graphs are required to be acyclic, where cycles are now defined as follows: a \textit{partially directed cycle} is a sequence of $n \geq 3$ distinct vertices $v_1,\ldots,v_n$  in a graph, and a vertex $v_{n+1} = v_1$, such that:
\begin{enumerate}
\item for all $1 \leq i \leq n$, either $v_i - v_{i+1}$ or $v_i \leftarrow v_{i+1}$, and
\item there exists a $1 \leq j \leq n$ such that $v_j \leftarrow v_{j+1}$.
\end{enumerate}
A \textit{chain graph} is a graph in which there are no partially directed cycles. For a given chain graph, let the \textit{chain components} $\mathcal{K}$ be the partition of its vertices such that any two vertices $v$ and $u$ are in the same partition set if there exists a path between them that contains only undirected edges. In other words, $\mathcal{K}$ is the partition that corresponds to the connected components of the graph after the directed edges have been removed. 

A distribution $P$ over graph space $\mathcal{G}$ factorizes according to a chain graph $H$ if it can be written in the form: 
\begin{align*}
P(G) = \prod_{K \in \mathcal{K}}P(G_{K} \; | \; G_{\text{pa}(K)}),
\end{align*}
and further, we have that:
\begin{align*}
P(G_{K} \; | \; G_{\text{pa}(K)}) = \frac{1}{Z(G_{\text{pa}(K)})} \prod_{C \in \mathcal{C}(K)} \phi_C(G_C),
\end{align*}
where $\mathcal{C}(K)$ is the set of cliques in the moralization of the graph $H_{K \cup \text{pa}(K)}$, i.e., the undirected graph that results from adding edges between any unconnected vertices in $\text{pa}(K)$ and converting all directed edges into undirected edges, where
\begin{align*}
\text{pa}(K) = \bigcup_{v \in K} \text{pa}(v) \setminus K.
\end{align*}
The factor $Z$ normalizes the distribution:
\begin{align*}
Z(G_{\text{pa}(K)}) = \sum_{G \in \mathcal{G}_{K}} \prod_{C \in \mathcal{C}(K)} \phi_C(G_C).
\end{align*}


\begin{example}
\label{example:chain}
Suppose we have a vertex space $\Lambda_V = \{1,2,3\}$, an edge space $\Lambda_E = \{0,1\}$, and a graph space with respect to them. Then, the atomic variables for this graph space correspond to the set $\mathcal{V} = \mathcal{V}^{(1)} \cup \mathcal{V}^{(2)}$, where, in this case:
\begin{align*}
\mathcal{V}^{(1)} & = \{ \{1\}, \{2\}, \{3\} \} \\
\mathcal{V}^{(2)} & = \{ \{1,2\}, \{1,3\}, \{2,3\} \},  
\end{align*}
and so, the vertices in structure graphs correspond to them. Suppose the structure graph is a chain graph and is as shown in Figure \ref{fig:chain_fig}. Then the chain components $\mathcal{K}$ is the partition $\mathcal{K} = \{\mathcal{V}^{(1)}, \mathcal{V}^{(2)}\}$. The distribution takes the following form:
\begin{align*}
P(G) & = \prod_{K \in \mathcal{K}}P(G_{K} \; | \; G_{\text{pa}(K)}) \\
	& = P(G_{\mathcal{V}^{(1)}} ) P(G_{\mathcal{V}^{(2)}} \; | \; G_{ \mathcal{V}^{(1)} })
\end{align*}
where, recall $G_{\mathcal{V}^{(i)}} = \{G_{V}, \; V \in \mathcal{V}^{(i)}\}$. Further, we have that each component can be expressed in Gibbs form:
\begin{align*}
& P(G_{\mathcal{V}^{(1)}})  \propto  \prod \limits_{ \substack{ V \subseteq V(G) }} \sigma(V) \\
& P(G_{\mathcal{V}^{(2)}} \; | \; G_{ \mathcal{V}^{(1)} }) \propto  \prod \limits_{ \substack{ V \subseteq V(G) }} \phi_V(G_V), 
\end{align*}
where $\sigma: \mathbb{P}(V) \to [0, \infty)$ and $\phi_V: \mathcal{G}_V \rightarrow [0, \infty)$.
 \begin{figure}
  \centering
    \includegraphics[scale=0.6]{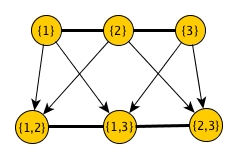}
  \caption{An example chain graph for example \ref{example:chain}. The thick edges represent that the vertices connected by them are fully connected.}
  \label{fig:chain_fig}
\end{figure}
\end{example}






\section{Discussion}
\label{sec:discussion_random_graphs}

We now take a step back and examine some of the design choices made in this section. Graphical models, from a high-level, may be thought of as a framework for modeling random objects based on the use of independence assumptions between the parts of the object. It is important that these independence assumptions be made, or can be made, between the smallest parts, those that cannot be decomposed into smaller ones. The reason, as we will discuss in this section, is that this makes the space of (possible) independence assumptions as large as possible, and hence allows the most structure to be specified within a graphical model. 

\subsection{Redundant Representations}
\label{sec:redundancy_reps}

The representation of a random object based on its atomic marginal variables can have redundancy in it; for example, a vertex variable $\bold{G}_{\{v\}}$ is a function of an edge variable of the form $\bold{G}_{\{v,v'\}}$. This redundancy may appear troublesome since, for example, it means the Hammersley-Clifford theorem cannot be used, preventing us from specifying independencies and then assuming a Gibbs form for distributions. We could remove the redundant variables (i.e., variables that are functions of other variables), and represent the random graph $\bold{G}$ by only the random variables $\{\bold{G}_V, \; V \in \mathcal{V}^{(2)} \}$, a subset of the atomic variables. However, this approach is problematic since it also diminishes our ability to specify structure. 
Representations with redundancy have the advantage, compared to representations without redundancy, of providing a larger space of possible independence assumptions. 
We illustrate the concept with some examples: 

\begin{example}[Context-Sensitive Independence]
Suppose we have a multivariate random variable $\bold{X}$ taking values in $\mathcal{X} = \mathcal{X}_1 \times \ldots \times \mathcal{X}_n$, and suppose we specify (within a Bayesian network) that the distribution over $\mathcal{X}$ has the following conditional independence:
\begin{align*}
P(\bold{X}_1 \; | \; \bold{X}_i, i \neq 1) = P(\bold{X}_1 \; | \; \bold{X}_2, \bold{X}_3).
\end{align*}
Now suppose that we want to specify the additional invariance that
\begin{align*}
P(\bold{X}_1 \; | \; \bold{X}_2, \bold{X}_3=X) = P(\bold{X}_1 \; | \; \bold{X}_3=X),
\end{align*}
where $X \in \mathcal{X}_3$; this type of invariance is sometimes referred to as context-sensitive independence (\citep{shimony1991explanation}, \citep{boutilier1996context}, \citep{chickering1997bayesian}). A simple way to incorporate it within a Bayesian network is by the addition of a redundant random variable of the form $\bold{Y}=f(\bold{X}_2,\bold{X}_3)$, taking values in a partition of the space of values of the input variables. In particular, define the function $f: \mathcal{X}_2 \times \mathcal{X}_3 \to \mathbb{P}(\mathcal{X}_2 \times \mathcal{X}_3)$ to be:
\begin{align*}
f(X_2,X_3) = \begin{cases}
    B, & \text{if $X_3=X$}\\
    \{(X_2,X_3)\}, & \text{otherwise}
  \end{cases},
\end{align*}
 where $B = \{(X_2,X_3) \; | \; X_3 = X \}$. 
Then, by including the variable $\bold{Y}$ in the network and letting it be the sole parent of $\bold{X}_1$, we have that
\begin{align*}
P(\bold{X}_1 \; | \; \bold{Y}, \bold{X}_i, i \neq 1) = P(\bold{X}_1 \; | \; \bold{Y}).
\end{align*}
Hence, using the redundant variable, we were able to specify this additional invariance and reduce the number of parameters in the model. This method for specifying context-sensitive invariance differs from the one taken in \citep{boutilier1996context}, where the focus is on the representation of dependencies within conditional probability tables, using for example, tree structures.
\end{example}

\begin{example}[General Independence]
In the previous example, we encoded a context-sensitive independence within a Bayesian network, where the context was based on events of the form $\{\bold{X}_A = X\}$, where $X \in \mathcal{X}_A$ and $A \subseteq \{1,\ldots,n\}$. This was done by defining a partition over the space of values of the parents of a random variable (corresponding to these events), and then introducing a new variable that takes values in this partition. This same approach works for more general context-sensitive independence, i.e., contexts based on events of the form $\{\bold{X}_A \in \mathcal{D}\}$, where $\mathcal{D} \subset \mathcal{X}_A$. Hence, we can specify invariances of the form
\begin{align}
\label{eq:kdl}
P(\bold{X}_A,\bold{X}_B \; | \; f_3(\bold{X}_C)) = P(\bold{X}_A \; | \; f_3(\bold{X}_C)) \cdot P(\bold{X}_B \; | \; f_3(\bold{X}_C))
\end{align}
within a Bayesian network. Finally, notice that redundant variables also allow us to include invariances of the form:
\begin{align}
\label{eq:kdl2}
P(f_1(\bold{X}_A), f_2(\bold{X}_B) \; | \; f_3(\bold{X}_C)) = P(f_1(\bold{X}_A) \; | \; f_3(\bold{X}_C)) \cdot P(f_2(\bold{X}_B) \; | \; f_3(\bold{X}_C)),
\end{align}
the most general form that independence assumptions can take. The invariance in equation (\ref{eq:kdl}) implies that in equation (\ref{eq:kdl2}), but not the other way around. For an additional discussion about statistical invariances, and in particular, how different invariances relate to each other, see Appendix \ref{sec:stat_invariances}.
\end{example}

These examples illustrate that representing a random object with atomic variables, even if there is redundancy in them, allows more invariances to be specified by a graphical model than would be possible without all of them. Although having a larger space of independence assumptions is not always beneficial - a practitioner cannot specify invariances between variables so low-level that they are uninterpretable - the specification of invariances involving vertices is natural when modeling random graphs, and so vertex variables should generally be included in any graphical model for this problem. 

\subsection{Graph Variations}
\label{sec:graph_variations}



In this section, we briefly describe other mathematical objects - variations on the definition of a graph - that may be useful for some problems; the graphical model framework discussed in this section can accommodate these objects in a straightforward way. 

In the definition of graphs presented in Section \ref{sec:graph_def}, vertices were a subset of some vertex space $\Lambda_V$, and hence each vertex has a unique value in this space. In some applications, graphs have attributes associated with their vertices, in which case, the vertices need only be unique on some component, for example a location component, and may otherwise have common attribute values. These graphs are referred to as attributed in the literature (\citep{pfeiffer2014attributed}, \citep{jain2004central}). Suppose we have a finite vertex space $\Lambda_V$, an edge space $\Lambda_E$, and an attribute space $\mathcal{X}$. We define an attributed graph to be of the form $G = (V,X,E)$, where $V$ is a set of vertices, $X$ is a function assigning an attribute value to each vertex, and $E$ is a function assigning an edge value to every pair of vertices:
\begin{align*}
& V \subseteq \Lambda_V \\
& X: V \rightarrow \mathcal{X} \\
& E: V \times V \rightarrow \Lambda_E.
\end{align*}
Hence, every vertex in a graph has a unique value in $\Lambda_V$, and the vertices may be thought of as indices for the variables $X_v \equiv X(v)$. For example, if we let $\Lambda_V = \{1,\ldots,n\}$, then a graph may be thought of as some collection of variables of the form $\{X_i, \; i \in V\}$, where $V \subseteq \{1,\ldots,n\}$, as well as edges $E$ between them. The attribute space $\mathcal{X}$ could be, for example, a finite set of labels or a Euclidean space (for specifying positions). 


Graphs may be further generalized to allow higher-order edges, referred to as \textit{hypergraphs} \citep{berge1973graphs}. Suppose we have a finite vertex space $\Lambda_V$ and an edge space $\Lambda_{E}$. Then, we can define a generalized graph to be of the form $G = (V,E_1,\ldots,E_k)$, where $V$ is a set of vertices, and each $E_i$ is a function assigning an edge value to every group of $i$ vertices:
\begin{align*}
& V \subseteq \Lambda_V \\
& E_1: V \rightarrow \Lambda_{E} \\
& E_2: V^2 \rightarrow \Lambda_{E} \\
& \; \vdots \\
& E_k: V^k \rightarrow \Lambda_{E}.
\end{align*}
Graphs with higher-order edges may be useful in problems in which interactions can be between multiple objects, and these interactions are not a function of the pairwise interactions (\citep{zhou2006learning}, \citep{tian2009hypergraph}).

\subsection{Projections}

It is worth noting that if an attributed graph space is constrained to only graphs that: (a) contain the same set of vertices; and (b) have no edges, then the canonical graph projections (Definition \ref{def:canonical_graph_projections}), in essence, reduce to the component projections used with multivariate random variables. In this sense, the graph projections may be thought of as an extension of the component projections to graph spaces.

\chapter{Random Trees}



In this section, we consider the statistical modeling of trees; since trees are a type of graph, the random graph models described in Section 2 could be used. However, it is beneficial to instead use models that are tuned to the defining structure of trees. If the vertices in trees are assumed to take a certain form, then the edges in trees are deterministic, given the vertices in it; as a result, the tree space and its modeling are simplified. In particular, with these assumptions about the vertex space, the atomic variables correspond to individual vertices (in contrast to the atomic variables in random graphs). Hence, in basic models, the vertices in structure graphs correspond to the vertices in trees, and in more complex models (e.g., with context-sensitive dependencies), the vertices in structure graphs correspond to the vertices in the vertex space. 

We begin by considering Bayesian networks in which: (a) the directionality of edges (in the structure graph) are from root to leafs, which we refer to as branching models; and (b) models in which the directionality is the opposite, from leafs to root, which we refer to as merging models. The former is well-suited for problems in which there is a physical process such that, as time progresses, objects divide into more objects; most models in the literature are of this form. The latter model, in contrast, is well-suited for problems in which there is some initial set of objects, and as time progresses, these objects merge with each other. 

In these types of causal problems, it is generally accepted that the directionality of edges in Bayesian networks should, if possible, correspond to the causality. In some applications, however, trees are not formed by an obvious causal mechanism, and one need not limit themself to either a branching or merging model. For example, consider trees that describe the structure of objects in scenes, where vertices correspond to objects (e.g., cars, trucks, tires, doors, etc.), and edges encode when an object is a subpart of another object (\citep{jin2006context}, \citep{zhu2007stochastic}). These trees are representations of scenes, not formed by a clear time-dependent process. Hence, although distributions on these trees can be expressed using branching or merging models, they may not be expressible by them in a compact form, which is essential. In the last part of this section, we consider more general models that may be useful for these problems.

\section{Branching Models}

In this section, we consider directed and partially-directed models for random trees in which the directed edges are from root to leaf. We first consider trees without attributes, then proceed to trees with them. To demonstrate the value of the graphical model approach to random trees, we contrast it with approaches based on grammars.

\subsection{Trees}

A tree is a graph that is connected and acyclic. A rooted tree is a tree that has a partial ordering (over its vertices) defined by distance from some designated vertex referred to as the \textit{root} of the tree. Due to the structure of trees, if the vertices in them are given appropriate labels, then the edges are deterministic. For simplicity, let's consider binary trees; let the vertex space $\Lambda = \Lambda_V$ be
\begin{align*}
\Lambda =  \bigcup_{n=0}^{N}  \left[ \{v_{\text{root}} \} \times \{0,1\}^n  \right]. 
\end{align*}
where $N$ is some natural number. Thus, a vertex $v \in \Lambda$ has the form $v = (v_{\text{root}},v_1,\ldots,v_n)$, where each $v_i \in \{0,1\}$ and $v_{\text{root}}$ is some arbitrary element that denotes the root vertex (see Figure \ref{fig:tree}). Let $\pi_{1:k}$ be the projection of a vertex to its first $k$ components:
\begin{align*}
\pi_{1:k}(v) = \begin{cases}
(v_{\text{root}},v_1,\ldots,v_{k-1}), \text{ if } k \leq |v| \\
v, \text{ otherwise}
\end{cases}.
\end{align*}
Let a tree $T \subseteq \Lambda$ be a set of vertices such that, for each vertex in $T$, its ancestors are also in it:
\begin{align*}
v \in T  \implies \pi_{1:k}(v) \in T \text{ for all } k \leq |v|. 
\end{align*}
If $T = \emptyset$, we will refer to it as the empty tree. 
Given a tree $T$, define the parent, children, and siblings of a vertex $v \in T$ as:
\begin{align*}
\text{pa}(v) & = \pi_{1:|v|-1}(v) \\
\text{ch}(v) & = \{v' \in T \; | \; \text{pa}(v')=v \} \\
\text{sib}(v) & = \{v' \in T \; | \; \text{pa}(v')=\text{pa}(v) \}.
\end{align*}
 
 
 
 
  
   
    
    
    \begin{figure}
  \centering
    \includegraphics[scale=0.4]{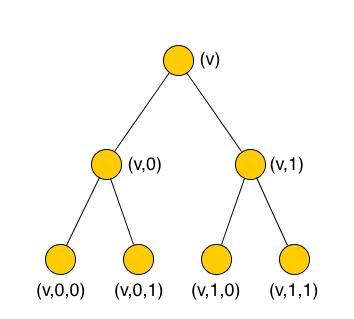}
  \caption{An example tree using the vertex labeling in the branching model (Section 3.1). }
  \label{fig:tree}
\end{figure}


\subsection{Basic Models}

In this section, we consider random tree models over a finite tree space $\mathcal{T}$ based on marginal random variables that take values in $\mathcal{T}$, i.e., are also random trees. In the next section, we expand the set of marginal variables to also include tree parts that do not take values in $\mathcal{T}$, but rather in substructures of this space. Let $\mathcal{T}$ be a set of trees that is projectable, i.e.:
\begin{align*}
T \in \mathcal{T} \implies T' \in \mathcal{T} \text{ for all } T' \in S(T),
\end{align*}
where $S(T)$ denotes the set of all subtrees of $T$. We can then define tree projections: 

\begin{definition}[Tree Projections]
Let $V \in \mathcal{T}$ be a set of vertices. Define the projection $\pi_{V}: \mathcal{T} \to \mathcal{T}_V$, where $\mathcal{T}_V = \{T \in \mathcal{T} \; | \; T \subseteq V\}$, as follows:
\begin{align*}
\pi_{V}(T) = T \cap V.
\end{align*}
Let $T_V \equiv \pi_V(T)$ denote the projection of a tree $T$ onto the vertices $V$.
\end{definition}

This projection is similar to the one used for (general) graphs, the main difference being that the set of vertices $V$ being projected onto cannot be an arbitrary subset of the vertex space, but must correspond to a tree (i.e. $V \in \mathcal{T}$). The reason is so that the projection of a tree is always a tree (in a projectable tree space $\mathcal{T}$). We consider projections onto substructures of the tree space in the next section. 

Suppose we have a distribution $P$ over the tree space $\mathcal{T}$. For each $V \in \mathcal{T}$, we can define a marginal random (tree) variable taking values in $\mathcal{T}_V$:
\begin{align*}
P_V^{\text{marg}}(T) = \sum \limits_{\substack{T' \in \mathcal{T} \\ \pi_V(T')=T}} P(T'), \quad T \in \mathcal{T}_V.
\end{align*}

For the projection family $\{\pi_V, V \in \mathcal{T}\}$, the atomic projections correspond to vertex sets $V$ that are trees with only one leaf, where a leaf is a vertex with no children (i.e., a vertex $v$ such that $\text{ch}(v)=\emptyset$); 
we will refer to a tree with a single leaf as a \textit{path-tree}. 
The reason the set of atomic projections corresponds to the set of path-trees is because: (1) any tree can be represented by a set of path-trees; and (2) no path-tree can be represented by a set of smaller path-trees. Let $\mathcal{V}_{\text{atom}}$ denote the set of path-trees:
\begin{align*}
\mathcal{V}_{\text{atom}} = \{ V \in \mathcal{T} \; | \; V \text{ is a path-tree} \}.
\end{align*} 


To define structure in distributions over the tree space $\mathcal{T}$, we can apply a graphical model. We use a Bayesian network here; let $H = (\mathcal{V}_{\text{atom}}, \mathcal{N})$ be a structure graph, where $\mathcal{N}:\mathcal{V}_{\text{atom}} \times \mathcal{V}_{\text{atom}} \to \{0,1\}$ is an edge function that is asymmetric (where the asymmetry is used in specifying edges that are directed\footnote{We consider there to be a directed edge from $V_0$ to $V_1$ if $\mathcal{N}(V_0,V_1)=1$ and $\mathcal{N}(V_1,V_0)=0$.}). To define valid distributions, the structure graph must be acyclic and must specify a dependency between any two path-trees in which one is a function of the other; thus, we will assume that there is: (1) a directed edge from every path-tree to its immediate successors (i.e., the path-trees that contain it and have one additional vertex); and (2) there is no directed edge from a path-tree to any path-tree that is a subtree of it. That is:
\begin{enumerate}
\item $\mathcal{N}(V_0,V_1)=1$ for all $V_0 \subset V_1$, $ |V_0| = |V_1|-1$.
\item $\mathcal{N}(V_1,V_0)=0$ for all $V_0 \subset V_1$.
\end{enumerate}
These requirements on the edge function $\mathcal{N}$ ensure it is consistent with the chain rule:
\begin{align}
\label{eq:basic_structure}
P(T) & = P(T_V, \; V \in \mathcal{V}_{\text{atom}})  \nonumber \\
	& = P(T_{\{v_{\text{root}} \}} ) \cdot \prod_{i=2}^{\infty} P(T_V, \; V \in \mathcal{V}_{\text{atom}}^{(i)} \; | \; T_V, \; V \in \mathcal{V}_{\text{atom}}^{(1)} \cup \ldots \cup\mathcal{V}_{\text{atom}}^{(i-1)}) \nonumber \\
	& = P(T_{\{v_{\text{root}} \}} ) \cdot \prod_{i=2}^{\infty} P(T_V, \; V \in \mathcal{V}_{\text{atom}}^{(i)} \; | \; T_V, \; V \in \mathcal{V}_{\text{atom}}^{(i-1)}) 
\end{align}
where $\mathcal{V}_{\text{atom}}^{(i)} = \{V \in \mathcal{V}_{\text{atom}} \; : \; |V|=i \}$ denotes the set containing path-trees of cardinality $i$. 





\subsection{Substructures}
\label{sec:substructures_trees}

Similar to multivariate random variables, the use of projections onto substructures (Section \ref{sec:substructure_sec}) is important when modeling random trees. These additional projections allow one to form additional marginal random variables, which in turn, allow statistical models to specify more structure in distributions. 

Let a \textit{shifted} tree be a pair $(T,v_0)$, where $v_0 \in \Lambda$ is a vertex and $T \subseteq \Lambda$ is a set of vertices such that:
\begin{enumerate}
\item $|v_0| \leq |v| \text{ for all } v \in T$;
\item $v \in T \implies \pi_{1:k}(v) \in T \text{ for all } |v_0| \leq k \leq |v|$.
\end{enumerate}
In other words, a shifted tree may be thought of as a tree in which $v_0$ serves as the root vertex. 
For the tree space $\mathcal{T}$, let $\mathcal{T}(v_0)$ denote the set of shifted trees with root vertex $v_0$, i.e.:
\begin{align*}
\mathcal{T}(v_0) = \{ T \cap \Lambda(v_0) \; | \; T \in \mathcal{T} \},
\end{align*}
where $\Lambda_V(v_0)$ denote the set of vertices that are descendants of $v_0$, i.e.,:
\begin{align*}
\Lambda(v_0) = \{v \in \Lambda \; | \; \pi_{1:|v_0|}(v)=v_0 \}. 
\end{align*}
For a vertex $v_0 \neq v_{\text{root}}$, the space $\mathcal{T}(v_0) \not \subseteq \mathcal{T}$ is not a subset of the tree space, but rather a substructure of the space $\mathcal{T}$. 
For a given vertex $v_0$, we define the projection taking trees in $\mathcal{T}$ to trees in $\mathcal{T}(v_0)$ as follows:

 

\begin{definition}[Substructure Projections]
Let $v_0 \in \Lambda$ be a vertex and let $V = \Lambda(v_0)$ be the set of all its descendants. Define the projection $\pi_{V}: \mathcal{T} \to \mathcal{T}(v_0)$ as:
\begin{align*}
\pi_{V}(T) = T \cap V.
\end{align*}
Let $T_V \equiv \pi_V(T)$ denote the projection of a tree $T$ onto the vertices $V$.
\end{definition}




For a random tree $\bold{T}$ taking values in a tree space $\mathcal{T}$, the substructure projections define marginal random (shifted tree) variables of the form $\bold{T}_V = \pi_V(\bold{T})$, where $V = \Lambda(v)$ and $v \in \Lambda$. 
Each substructure is itself equipped with tree projections. 
Hence, allowing for both projections to substructures and then projections to trees within this substructure, 
the set of all projections on $\mathcal{T}$ is the projection family $\{\pi_V, V \in \mathcal{T}(v), v \in \Lambda \}$, and the atomic projections are just the projections onto individual vertices, i.e., those in the set $\{\pi_{\{v\}}, v \in \Lambda\}$. 

To define structure in distributions over the tree space $\mathcal{T}$, we can apply a graphical model. Let $H = (\Lambda, \mathcal{N})$ be a structure graph, where $\mathcal{N}:\Lambda \times \Lambda \to \{0,1\}$ is an edge function. As before, let the structure graph be acyclic, and require it specify a dependency (either directly or indirectly) between any two vertices in which one is an ancestor of the other. 



Similar to general graphs, it will often be useful in the statistical modeling of trees to incorporate invariance assumptions about (shifted) trees that are isomorphic to each other. Recall, two graphs are said to be isomorphic if they share the same edge structure (see Section \ref{sec:chap2_isomorphisms}). Similarly, two rooted trees are said to be isomorphic if they share the same edge structure, as well as the same partial ordering structure: 

\begin{definition}[Tree Isomorphism]
A tree $(T,v_0)$ is \textit{isomorphic} to a tree $(T',v_0')$ if there exists a bijection $f: T \rightarrow T'$ such that:
\begin{enumerate}
\item $f(v_0) = v_0'$.
\item $v \in \text{ch}(u) \implies f(v) \in \text{ch}(f(u))$.
\end{enumerate}
Two trees that are isomorphic are denoted by $(T,v_0) \simeq (T',v_0')$. (The set of children $\text{ch}(u)$ for vertex $u$ is with respect to the tree $T$, and the set of children $\text{ch}(f(u))$ for vertex $f(u)$ is with respect to the tree $T'$.)
\end{definition}


\begin{example}[Galton-Watson Model]
The Galton-Watson model is a classic random tree model that makes two invariance assumptions. The first is that a vertex is conditionally independent of all other vertices except its parent and siblings:
\begin{align*}
\mathcal{N}(v_0,v_1) = \begin{cases}
1, \text{ if } v_0 \in \text{sib}(v_1) \text{ or } v_0 = \text{pa}(v_1) \\
0, \text{ otherwise}
\end{cases},
\end{align*}
where the children and sibling functions are with respect to the total tree $T = \Lambda$. 
The second invariance assumption is that, conditioned on their roots, shifted trees that are isomorphic to each other have the same probability. That is, for all shifted trees $(T,v_0)$ and $(T',v_0')$ such that $(T,v_0) \simeq (T',v_0')$, we assume that $P(T \; | \; T_{\{v_0\}} ) = P(T' \; | \; T_{\{v_0'\}}' )$. (We have simplified the notation from $P^{\text{marg}}_V$ to $P$; the distribution should be clear from the context.)
Thus, we have:
\begin{align*}
P(T) & = P(T_{\{v\}}, \; v \in \Lambda)  \\
	& = P(T_{\{v_{\text{root}} \}} ) \cdot \prod_{i=2}^{\infty} P(T_{\{v\}}, \; v \in \Lambda^{(i)} \; | \; T_{\{v\}}, \; v \in \Lambda^{(i-1)}) \\
	& = P(T_{\{v_{\text{root}} \}} ) \cdot \prod_{i = 2}^{\infty} \left[ \prod  \limits_{\substack{v \in \Lambda^{(i-1)} }} P(T_{\text{ch}(v)} \; | \; T_{\{v\}} ) \right] \\
	& = P(T_{\{v_{\text{root}} \}} ) \cdot \prod_{v \in T} \mu( |T_{\text{ch}(v)}| ),
\end{align*}
where $\Lambda^{(i)} = \{v \in \Lambda \; : \; |v|=i\}$ denotes the set containing vertices of cardinality $i$, and where $\mu$ is a distribution over the number of children (e.g., over $\{0,1,2\}$ in binary trees). The second to last equality follows from the independence assumptions for this model, and the last equality follows from the isomorphism invariance assumption.
\end{example}




\subsection{Attributed Trees}

In many real-world problems, the vertices in trees have attributes associated with them. In most of the literature on attributed trees, grammars are used to define the tree space (i.e., the set of trees the grammar can produce). These grammars produce trees by production rules; beginning with the empty tree, larger trees are incrementally built by the iterative application of these rules. 

For context-free grammars, distributions can be defined over trees by associating a probability with each production rule. However, it is well-known that this approach (associating probabilities to production rules) does not generalize to the case of context-sensitive grammars (i.e., does not produce well-defined distributions for this grammar). The reason is because, in context-sensitive grammars, the order in which production rules are applied now matters (in determining what trees can be produced), and hence this grammar must have an ordering policy that specifies the next production rule to apply, given the current tree; this policy is a function that generally depends on many of the vertices in the current tree. Hence, to define a distribution over this tree space, the conditional probability of the next tree in a sequence, given the previous one, would not (in general) be conditionally independent of vertices even far removed in tree distance from the vertices being used by the production rule itself. In other words, to make well-defined distributions for a context-sensitive grammar, very high-order models are required.



In this section, rather than trying to define distributions in terms of grammars, we use a graphical model approach; by using the marginal random variables in a random tree, it becomes tractable to specify dependencies and make well-defined tree distributions that are, loosely speaking, context-sensitive. Let an attributed tree be a pair $T=(V,X)$, where $V \in \mathcal{T}$ is a tree and $X: V \to \mathcal{X}$ is a function taking each vertex to some attribute value in an attribute space $\mathcal{X}$. For an attribute space $\mathcal{X}$, let $\mathcal{T}_{\mathcal{X}}$ denote the space of attributed trees:
\begin{align*}
\mathcal{T}_{\mathcal{X}} = \{(V,X) \; | \; V \in \mathcal{T}, \; X: V \to \mathcal{X} \}.
\end{align*}

Since $\mathcal{X}$ need not be finite, the space $\mathcal{T}_{\mathcal{X}}$ may not be finite either (we have only assumed the vertex space $\Lambda$ is finite, implying a finite number of projections). The definition of a projection on a tree can be extended to a projection on an attributed tree in a straightforward manner: for a tree $T=(V,X)$, let the projection $\pi_{V_0}(T) = (V \cap V_0,X \restriction_{V \cap V_0})$ be the intersection of the tree's vertices with $V_0 \subseteq \Lambda$ and the restriction of the attribute function $X$ to these vertices. We let $T_{V_0} \equiv \pi_{V_0}(T)$. 

Let an attributed shifted trees be a triple $T = (V,X,v_0)$, where $v_0$ is the designated root and $V \in \mathcal{T}(v_0)$, the space of shifted trees with respect to $v_0$. The definition of isomorphisms for attributed trees is the same as before, except with the additional requirement that the attribute values also match: the trees $T=(V,X,v_0)$ and $T'=(V',X',v_0')$ are isomorphic to each other if there exists a bijection $f:V \to V'$ such that:
\begin{enumerate}
\item $(V,v_0) \simeq (V',v_0')$ with respect to $f$.
\item $X(v) = X'(f(v)) \text{ for all } v \in T$.
\end{enumerate}
Two trees that are isomorphic are denoted by $T \simeq T'$. 


\begin{example}[Probabilistic Context-Free Grammar]
A probabilistic context-free grammar is a random tree model that may be thought of as an extension of the Galton-Watson model to attributed trees. The attribute space is assumed to have the form $\mathcal{X} = \mathcal{X}_{\text{leaf}} \cup \mathcal{X}_{\text{non-leaf}}$, where $\mathcal{X}_{\text{leaf}} \cap \mathcal{X}_{\text{non-leaf}} = \emptyset$, and the tree space is assumed to be restricted to trees such that leaf vertices only take attribute values in $\mathcal{X}_{\text{leaf}}$ and non-leaf vertices only take attribute values in $\mathcal{X}_{\text{non-leaf}}$. 

The model makes two invariance assumptions. The first is that it assumes a vertex is conditionally independent of all other vertices except its parent and siblings; in terms of its structure graph, the independence assumptions are:
\begin{align*}
\mathcal{N}(v_0,v_1) = \begin{cases}
1, \text{ if } v_0 \in \text{sib}(v_1) \text{ or } v_0 = \text{pa}(v_1) \\
0, \text{ otherwise}
\end{cases}.
\end{align*}
The second invariance assumption is that, conditioned on their roots, shifted trees that are isomorphic to each other have the same probability. That is, for all shifted trees $T=(V,X,v_0)$ and $T'=(V',X',v_0')$ such that $T \simeq T'$, we assume that $P(T \; | \; T_{\{v_0\}} ) = P(T' \; | \; T_{\{v_0'\}}' )$. 
Thus, we have:
\begin{align*}
P(T) & = P(T_{\{v\}}, \; v \in \Lambda)  \\
	& = P(T_{\{v_{\text{root}} \}} ) \cdot \prod_{i=2}^{\infty} P(T_{\{v\}}, \; v \in \Lambda^{(i)} \; | \; T_{\{v\}}, \; v \in \Lambda^{(i-1)}) \\
	& = P(T_{\{v_{\text{root}} \}} ) \cdot \prod_{i = 2}^{\infty} \left[ \prod  \limits_{\substack{v \in \Lambda^{(i-1)} }} P(T_{\text{ch}(v)} \; | \; T_{\{v\}} ) \right] \\
	& = P(T_{\{v_{\text{root}} \}} ) \cdot \prod_{v \in V(T)} \mu( \tilde{T}_{\text{ch}(v)} \; | \; \tilde{T}_{\{v\}} ),
\end{align*}
where $\mu$ is a distribution over the space
\begin{align*}
\mathcal{T}_{\mathcal{X}}^{(1)} = \{ T \in \mathcal{T}_{\mathcal{X}} \; : \; |v| \leq 2 \text{ for all } v \in V(T)\},
\end{align*}
the set of trees of length less than or equal to one, and $\tilde{T}$ denotes a tree such that $\tilde{T} \simeq T$ and $\tilde{T} \in \mathcal{T}_{\mathcal{X}}$, i.e., a non-shifted version of the shifted tree $T$. The second to last equality follows from the independence assumptions for this model, and the last equality follows from the isomorphism invariance assumption. 
The distribution $\mu$ is usually assumed to have zero probability over a portion of its input trees (or, equivalently, the tree space is assumed to be constrained).
\end{example}



\begin{example}[Context-Sensitive Random Tree]
We will refer to a random tree model as context-sensitive if, compared to the probabilistic context-free grammar in the previous example, it has the following additional dependencies. As before, a vertex depends on its siblings and parent, but now also depends on certain vertices that are adjacent to its parent as well. For each vertex $v$, define its adjacent vertices as the set
\begin{align*}
\text{adj}(v) = \{v' \in \Lambda \; : \; |v'| = |v| \text{ and } d(v,v') \leq 1\},
\end{align*}
where $d$ denotes some function between vertices of the same level in a tree. For example, if one visualizes a tree by depicting it as an image on the plane, as in Figure  \ref{fig:tree}, vertices on each level will have an ordering based on which vertices come before others from left to right. In linguistics, this ordering coincides with the order words occur in sentences, loosely speaking. More formally, we can assume there is an order relation $\leq$ on each set $\Lambda^{(i)} = \{v \in \Lambda \; : \; |v|=i\}$, and then define $d$ based on this ordering. Then, in terms of its structure graph, the independence assumptions are:
\begin{align*}
\mathcal{N}(v_0,v_1) = \begin{cases}
1, \text{ if } v_0 \in \text{sib}(v_1) \text{ or } v_0 = \text{pa}(v_1) \text{ or } v_0 \in \text{adj}( \text{pa}(v_1)) \\
0, \text{ otherwise}
\end{cases}.
\end{align*}
This structure graph could also have directed edges, not just between adjacent levels of the tree, but across multiple levels of the tree. Similar to probabilistic context-free grammars, this random tree model also makes isomorphism assumptions, except with respect to subsets of vertices that may not be trees. 
\end{example}

\section{Merging Models}


In the previous section, we used a vertex space in which the label of each vertex encoded its entire ancestry; hence, if we know a vertex is in a tree, then we also know its ancestors as well, and this limits one to branching models. In this section, we consider a vertex space in which the label of each vertex instead encodes its descendants, allowing merging models for random trees: beginning with some set of initial objects, trees can be formed by iteratively merging them. Examples include the modeling of cell fusion (i.e., cells that combine) and the modeling of mergers between industrial corporations (which, in the end, form monopolies). We present a simplified version of the vertex space here; it can be extended to more sophisticated forms. 
As before, due to the structure of trees, if the vertices in them are given appropriate labels, then the edges are deterministic. 

Suppose we have some set of vertices $\Lambda_{\text{leaf}}$ such that, for every tree, its leafs are in this set; beginning with some set of vertices $V_{\text{leaf}} \subseteq \Lambda_{\text{leaf}}$, trees will be constructed by merging them. Letting $N = |\Lambda_{\text{leaf}}|$, define the vertex space $\Lambda$ to be:
\begin{align*}
\Lambda =  \mathbb{P}(\{1,\ldots,N\}) \setminus \{\emptyset\}
\end{align*}
Thus, a vertex $v \in \Lambda$ has the form $v \subseteq \{1,\ldots,N\}$. As before, we assume binary trees for simplicity; a tree $T \subseteq \Lambda$ is a set of vertices such that:
\begin{enumerate}
\item There exists a vertex $v \in T$ such that $|v| > |v'|$ for all $v' \in T$, $v' \neq v$. This vertex corresponds to the root of the tree.
\item For each vertex $v \in T$, its cardinality is $|v| = 2^n$, for some $n \in \{0,1,2, \ldots\}$. The value $n$ for a vertex corresponds to its level, which we denote by $\text{level}(v)$.
\item For each vertex $v\in T$ such that $|v|>1$, there exists a binary partition $\{v',v''\}$ of this vertex (i.e., $v = v' \cup v''$ and $v' \cap v'' = \emptyset$), such that $v', v'' \in T$ and $|v'| = |v''|$.
\end{enumerate}
An example tree is shown in Figure \ref{fig:tree2}. If $T = \emptyset$, we will refer to it as the empty tree. In this tree definition, a vertex $v \in T$ is a leaf if and only if it has cardinality of one (i.e., $|v|=1$). Hence, the label of each individual vertex defines if it is a leaf or not (unlike in the previous section). For a tree $T$, let $\text{leaf}(T)$ denote the set of its vertices that are leafs:
\begin{align*}
\text{leaf}(T) = \{v \in T \; : \; |v|=1 \}.
\end{align*}
This distinction, in turn, means that for a subset $T' \subseteq T$ to be a tree (i.e., a subtree of $T$), its leafs must be a subset of the leafs of $T$ (i.e., $\text{leaf}(T') \subseteq \text{leaf}(T)$). This requirement is in contrast to the previous section, where trees and their subtrees had to have the root vertex be in common. 
 



Let $\mathcal{T}$ be a set of trees that is projectable, i.e.: $T \in \mathcal{T} \implies T' \in \mathcal{T} \text{ for all } T' \in S(T)$, where $S(T)$ denotes the set of all subtrees of $T$. As before, we can then define tree projections: 

\begin{definition}[Tree Projections]
Let $V \in \mathcal{T}$ be a set of vertices. Define the projection $\pi_{V}: \mathcal{T} \to \mathcal{T}_V$, where $\mathcal{T}_V = \{T \in \mathcal{T} \; | \; T \subseteq V\}$, as follows:
\begin{align*}
\pi_{V}(T) = T \cap V.
\end{align*}
Let $T_V \equiv \pi_V(T)$ denote the projection of a tree $T$ onto the vertices $V$.
\end{definition}



In the case of the projection family $\{\pi_V, V \in \mathcal{T}\}$, the atomic projections are not a subset, but rather coincide with the entire projection family. However, assuming projections to substructures as well, as was done in the branching models, we then arrive at the same set of atomic projections, the set of individual vertices $\{\pi_{\{v\}}, v \in \Lambda\}$. 

To define structure in distributions over the tree space $\mathcal{T}$, we can apply a graphical model. We use a Bayesian network here; let $H = (\Lambda, \mathcal{N})$ be a structure graph, where $\mathcal{N}: \Lambda^2 \to \{0,1\}$ is an edge function that is asymmetric (where the asymmetry is used in specifying edges that are directed). To define valid distributions, the structure graph must be acyclic; for merging models, we assume that edges are in the direction from leafs to root. We must specify a dependency between any two vertices in which one is a function of the other; thus, we assume:
\begin{enumerate}
\item $\mathcal{N}(v_0,v_1)=1$ for all $v_0 \subset v_1$ and $\text{level}(v_0) = \text{level}(v_1)-1$.
\item $\mathcal{N}(v_1,v_0)=0$ for all $v_0 \subset v_1$.
\end{enumerate}
These requirements on the edge function $\mathcal{N}$ ensure it is consistent with the chain rule:
\begin{align}
\label{eq:basic_structure}
P(T) & = P(T_V, \; V \in \Lambda)  \nonumber \\
	& = P(\text{leaf}(T) ) \cdot \prod_{i=1}^{\infty} P(T_{\{v\}}, \; v \in \Lambda^{(i)} \; | \; T_{\{v\}}, \; v \in \Lambda^{(1)} \cup \ldots \cup \Lambda^{(i-1)}) \nonumber \\
	& = P(\text{leaf}(T) ) \cdot \prod_{i=1}^{\infty} P(T_{\{v\}}, \; v \in \Lambda^{(i)} \; | \; T_{\{v\}}, \; v \in \Lambda^{(i-1)}) 
\end{align}
where $\Lambda^{(i)} = \{v \in \Lambda \; : \; \text{level}(v)=i \}$ denotes the set of vertices that are on level $i$. 

If one assumes that a vertex can only merge with one other vertex on a given layer, then complex dependencies are introduced in which a vertex depends on more than just its children; this situation is similar to that of context-sensitive grammars in branching models, except in the reverse direction. In this case, complex models can result. 

    \begin{figure}
  \centering
    \includegraphics[scale=0.4]{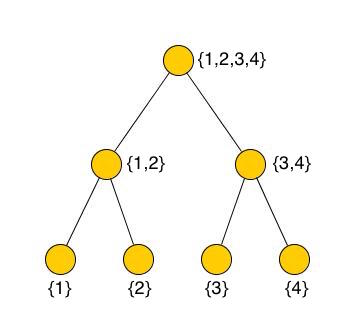}
  \caption{An example tree using the vertex labeling in the merging model (Section 3.2). }
  \label{fig:tree2}
\end{figure}

\section{General Models}




In the previous sections, we used specialized vertex spaces for defining trees; using vertices with labels that specify its set of possible children or possible parents (and assuming, in any valid tree, these sets are non-overlapping), then trees have deterministic edges, given the vertices. 
However, we could instead define trees in terms of an arbitrary vertex space and then define the tree space by restricting the corresponding graph space to only trees. This has the advantage of allowing one to employ any type of graphical model for random graphs (Section 2). In this more general formulation, distributions need not be defined in terms of how trees are incrementally constructed by a top-down or bottom-up process, but rather how they deconstruct (e.g., into subtrees). This allows, for some problems, a more natural method for defining distributions since it may allow a more compact representation of dependencies. 


We will assume the vertex space has some minimal structure, allowing us to define trees based on basic conditions on the vertices and edges. 
Suppose we have a vertex space $\Lambda$ of the form
\begin{align*}
\Lambda = \bigcup_{n=0}^{N} \Lambda^{(n)}, 
\end{align*}
where each space $\Lambda^{(n)}$ corresponds to the set of vertices that can occur on the $n$th level of the tree (i.e., the distance from a vertex in this set to the root is assumed to be $n$ in any tree). Further, we assume $\Lambda^{(n)} \cap \Lambda^{(m)} = \emptyset$ for every $n \neq m$. For example, for modeling real-world scenes, often one assumes some fixed hierarchy of objects (e.g., cars occur on the $k$th level and car tires occur on the $(k+1)$th level). Finally, suppose the edge space $\Lambda_E = \{0,1\}$ is binary. 

Let a tree be a graph $T=(V,E)$ with respect to this vertex space and edge space (i.e., where $V \subset \Lambda$ is a set of vertices and $E:V^2 \to \{0,1\}$ a binary edge function) 
such that the following conditions are satisfied: letting $V^{(n)} \equiv V \cap \Lambda^{(n)}$, we have that:
\begin{enumerate}
\item There is only a single root vertex: if $V \neq \emptyset$, then $|V^{(0)}|=1$.
\item Every (non-root) vertex has one and only one parent: for $n = 1,2,\ldots$, for all $v \in V^{(n)}$, we have:
	\begin{align*}
	\sum_{v' \in V^{(n-1)}} E(v',v) = 1.
	\end{align*}
\item There are only edges between adjacent layers: for all $n,m$ such that $|n-m| \neq 1$, we have that $E(v,v')=0$ for all $v \in V^{(m)}$ and $v' \in V^{(m)}$. 
\end{enumerate}
Let $\mathcal{T}$ be the space of all such trees. A distribution over this space can be defined using a random graph model; in particular, we may apply an undirected or partially directed model. As mentioned, this additional flexibility may be useful for the modeling of some problems in which there is no obvious causal mechanism.

\chapter{General Random Objects}
\label{sec:General_Random_Objects}


In this section, we consider a general formulation of graphical models on a sample space $\Omega$ based on a family of random variables with basic consistency and completeness properties. In the literature, the definition of 
consistency for random variables is stated in terms of distributions (\citep{chung2001course}, \citep{lamperti2012stochastic}). In this work however, we find it convenient to define consistency in terms of the functions themselves (rather than the distributions induced by them). This more elemental definition will be useful in modeling over more general spaces, where to make independence assumptions on distributions, a consistent projection family must first be specified. The projections from this family then define random variables that are consistent (referred to as marginal variables). 
We begin by considering the case in which projections are from a given sample space to subsets of it; the random graph model discussed in Section 2 uses projections of this form. Then, we consider more general projections, where for example, the random tree model discussed in Section 3, and the traditional formulation of graphical models for multivariate random variables are instances. For simplicity, we limit the formulation here to finite projection families.

\section{Projection Families}
\label{sec:proj_families}


Suppose we have a random object taking values in some space $\Omega$, and suppose we have a family $\Pi$ of projections 
where each projection has the form $\pi: \Omega \to \Omega' \subseteq \Omega$. Recall, a function $\pi$ is a projection if $\pi \circ \pi = \pi$, i.e., projecting an object more than once does not change its value. In order to produce random variables that are consistent with each other, the projections must be consistent with each other: 

\begin{definition}[Consistency]
The projections $\pi_1: \Omega \to \Omega_1$ and $\pi_2: \Omega \to \Omega_2$ are \textit{consistent} if:
\begin{align*}
& \Omega_1 \subseteq \Omega_2 \implies \pi_1 \circ \pi_2 = \pi_1 \\
& \Omega_2 \subseteq \Omega_1 \implies \pi_2 \circ \pi_1 = \pi_2.
\end{align*}
\end{definition}

In other words, two projections are consistent if: (a) one's image is not a subset of the other's; or (b) projecting an object onto the smaller space is the same as first projecting the object onto the larger space, and then projecting onto the smaller space. We say that a projection family is consistent if every pair of projections in it are consistent. A consistent family of projections defines a consistent family of random variables (referred to as \textit{marginal} variables).

Although this definition of consistent projections corresponds to the definition of consistent random variables, it will be useful when formulating graphical models to assume a stronger form:


\begin{definition}[Strong Consistency]
The projections $\pi_1: \Omega \to \Omega_1$ and $\pi_2: \Omega \to \Omega_2$ are \textit{strongly consistent} if $\Omega_1 \cap \Omega_2 \neq \emptyset$ implies that there exists a projection $\pi_3:\Omega \to \Omega_1 \cap \Omega_2$ consistent with $\pi_1$ and $\pi_2$. If this projection exists, then it is unique. 
\end{definition}



As before, we say that a projection family is strongly consistent if every pair of projections in it are strongly consistent. The canonical projection family for random graphs (Section \ref{sec:chap2_discussion}) is strongly consistent, and the canonical projection family for random vectors (i.e., the coordinate projections) is strongly consistent as well (see next section). We illustrate the importance of strong consistency in modeling with an example:

\begin{example}[Consistency and Conditional Independence]
Let $\Omega$ be a sample space with a distribution over it. In order to model this distribution using independence assumptions, we need to specify some random variables. Suppose we have two projections $\pi_0:\Omega \to \Omega_0$ and $\pi_1:\Omega \to \Omega_1$ such that $\Omega_0 \cap \Omega_1 \neq \emptyset$, $\Omega_0 \not \subseteq \Omega_1$, and $\Omega_1 \not \subseteq \Omega_0$. Since neither projection's image is a subset of the other's, these two projections are consistent. However, the question arises about the nature of their agreement when objects are projected to the intersection $\Omega_0 \cap \Omega_1$; there are two scenarios to consider.

First, suppose $\pi_0$ and $\pi_1$ are strongly consistent; then there exists a unique projection $\pi_2:\Omega \to  \Omega_0 \cap \Omega_1$ consistent with both $\pi_0$ and $\pi_1$ (and so the set $\{\pi_0,\pi_1,\pi_2\}$ is consistent). A standard assumption is that the random variables $\pi_0$ and $\pi_1$ are conditionally independent given $\pi_2$, which we denote by
\begin{align*}
(\pi_0 \perp \pi_1 \; | \; \pi_2).
\end{align*}

Now, suppose $\pi_0$ and $\pi_1$ are not strongly consistent; then there does not exist a projection $\pi_2:\Omega \to  \Omega_0 \cap \Omega_1$ consistent with both $\pi_0$ and $\pi_1$. In order to specify a conditional independence assumption analogous to the one above, define $\pi_{2}': \Omega \to \Omega_0 \cap \Omega_1$ as the projection consistent with $\pi_0$ and define $\pi_{2}'': \Omega \to \Omega_0 \cap \Omega_1$ as the projection consistent with $\pi_1$. 
Now, if we want to specify conditional independence between $\pi_0$ and $\pi_1$, we must condition on both $\pi_{2'}$ and $\pi_{2''}$, i.e.:
\begin{align*}
(\pi_0 \perp \pi_1 \; | \; \pi_{2}', \pi_{2}'').
\end{align*}
This illustrates that, to specify conditional independence between random variables that are not strongly consistent, the structure graph in a graphical model both needs to be larger (i.e., incorporate more variables) and needs more edges, than if they were.

\end{example}



This example motivates formulating graphical models in terms of strongly consistent projections. It will be convenient to index projection families so as to indicate which projection's images are subsets of other's. This can be done as follows. For a finite consistent projection family $\Pi$, there exists some finite set $B$ and $\mathcal{A} \subseteq \mathbb{P}(B)$, such that we can write: 
\begin{align*}
\Pi = \{\pi_A: \Omega \to \Omega_A, \; A \in \mathcal{A} \},
\end{align*}
and where:
\begin{enumerate}
\item $\Omega_B = \Omega$.
\item $A' \subseteq A \Longleftrightarrow \Omega_{A'} \subseteq \Omega_A$.
\item $A' \cap A = \emptyset \Longleftrightarrow \Omega_{A'} \cap \Omega_A = \emptyset$.
\end{enumerate}
Further, we will assume that $B$ is a minimal set for indexing $\Pi$ in this way (in the sense that there does not exist $B'$ such that $|B'| < |B|$ and the above holds). Thus, the indices in an index set $\mathcal{A}$ show when the images of projections intersect or are subsets. Henceforth, we assume projection families are indexed in this way. 

We now define a completeness condition for a projection family, which will also be useful in modeling: 
\begin{definition}[Completeness]
A projection family $\Pi$ is \textit{complete} if its index set $\mathcal{A}$ is closed under intersections:
\begin{align*}
A, A' \in \mathcal{A} \implies A \cap A' \in \mathcal{A}
\end{align*}
\end{definition}
In other words, for any two projections $\pi_A:\Omega \to \Omega_A$ and $\pi_{A'}:\Omega \to \Omega_{A'}$ in $\Pi$, a projection of the form $\pi_{A \cap A'}: \Omega \to \Omega_{A \cap A'}$ also exists in it. Notice that if a projection family $\Pi$ is consistent and complete, then it is also strongly consistent. Conversely, if a projection family is strongly consistent, then it can be made complete by augmenting it with additional projections. For modeling purposes, the value of completing a projection family in this sense is that it provides a larger space of possible independence assumptions. Since the traditional formulation of graphical models is in terms of a consistent, complete system of projections, we will define the extended formulation likewise. We now define the notion of atomic projections.

Loosely speaking, for a projection family $\Pi$, a projection in it is atomic if: (1) there does not exist a projection in this family that projects to a subset of its image; or (2) if there are projections in this family that project to a subset of its image, then this set loses information. The second condition ensures that any object projected by a set of atomic projections can be reconstructed. To define this more formally, we introduce some notation. For a projection family $\Pi$ indexed by $\mathcal{A}$, let $\Pi_{\mathcal{B}} \subseteq \Pi$ denote the subset of projections indexed by $\mathcal{B} \subseteq \mathcal{A}$, i.e.,:
\begin{align*}
\Pi_{\mathcal{B}} = \{ \pi_A,  \; A \in \mathcal{B}\}.
\end{align*}
We say that a set of projections $\Pi_{\mathcal{B}}$ is \textit{invertible} over a set $\Omega' \subseteq \Omega$ if there exists a function $\Pi_{\mathcal{B}}^{-1}$ such that
\begin{align*}
	\Pi_{\mathcal{B}}^{-1} (\Pi_{\mathcal{B}}(w)) = w, \; \forall w \in \Omega'.
\end{align*}
We define atomic projections as follows:

\begin{definition}[Atomic Projections]
For a finite projection family $\Pi$ indexed by $\mathcal{A}$, a projection $\pi_A$ in this family is \textit{atomic} if:
\begin{enumerate}
\item $\mathcal{B}(A) = \emptyset$; or
\item If $\mathcal{B}(A) \neq \emptyset$, then the projection set $\Pi_{\mathcal{B}(A)}$ is not invertible over $A$. 
\end{enumerate}
where $\mathcal{B}(A) = \{A' \in \mathcal{A} \; | \; A' \subset A\}$.
\end{definition}
In other words, for a projection family, the atomic projections are those with either the smallest images, or if there are projections with smaller images, they cannot be reconstructed from them. We will call a random variable atomic if it corresponds to an atomic projection. Finally, to be used in modeling, we need to assume that a projection family has enough coverage over a space $\Omega$ so that it can be used for representing objects in it:

\begin{definition}[Atomic Representation]
For a finite projection family $\Pi$ over $\Omega$, a set of atomic projections $\Pi_{\text{atom}} \subseteq \Pi$ is an \textit{atomic representation} of the space $\Omega$ if it is invertible over $\Omega$. 
\end{definition}
If a finite projection family $\Pi$ over $\Omega$ contains the identity projection $\text{I}_{\Omega}$, then there exists an atomic representation of $\Omega$ within $\Pi$. If a finite projection family $\Pi$ over $\Omega$ contains the identity projection $\text{I}_{\Omega}$ and is consistent and complete, then it has a unique atomic representation. For defining a graphical model over $\Omega$ with respect to a projection family $\Pi$, we will let its structure graph correspond to an atomic representation of $\Omega$ within $\Pi$ (i.e., the vertices in the structure graph will correspond to the projections in the atomic representation); if the atomic representation is unique, then so will be the vertex set in the structure graph. We can now express, for graphical models, the requirements on projection families. 




Suppose we have a consistent, complete system of projections $\Pi$ over an object space $\Omega$. Further, assume $\Pi$ is finite, non-empty, and contains the identity projection $\text{I}_{\Omega}$. With only these assumptions on the projection family, we can model distributions over $\Omega$ using independence and factorization, the invariances used in graphical models. 
The projections, since they are consistent, define a set of marginal random variables, as well as a unique set of atomic random variables, and so we can encode independence assumptions in a compact form using them. 
A projection family on the space $\Omega$ also gives rise to a Gibbs representation for distributions over it:
\begin{align}
\label{eq:factorization_general}
P(w) = \frac{1}{Z} \exp \left [  \sum_{A \in \mathcal{A}} \psi_A(w_A) \right ],
\end{align}
where $\psi_A: \Omega_A \to \mathbb{R} \cup \{-\infty\}$, and where $w_A \equiv \pi_A(w)$, facilitating factorization and the use of undirected models. If there are functional dependencies in the atomic projections\footnote{There are functional dependencies in the atomic representation whenever there exists a projection $\pi_A$ in it such that the set $\mathcal{B}(A) = \{A' \in \mathcal{A} \; | \; A' \subset A\} \neq \emptyset$ is non-empty.}, then the objects in $\Omega$ have structural constraints, and the structure graph must respect them. If there are no functional dependencies, then the Hammersley-Clifford theorem may be directly applied; otherwise, a partially directed network may be necessary.

The formulation of graphical models given here encompasses the random graph models from Section 2. However, notice that in graphical models for multivariate random variables, the projections are not to subsets of the sample space, but rather to substructures. We now turn to this topic, and extend the formulation to include these more general projections. 

\section{Substructures} 
\label{sec:substructure_sec}


In the previous section, we discussed projection families in which each projection's image was a subset of its domain (i.e., for each projection $\pi: \Omega \to \Omega'$, we have $\Omega' \subseteq  \Omega$). In this section, we now consider functions on $\Omega$ in which their images may not be a subset (i.e., $\Omega' \not \subseteq  \Omega$); for convenience, we will view these functions as mapping into substructures, and refer to them as projections. This extension is important because it allows projection families to be larger, which in turn, allows additional structure to be incorporated within models. An example of a projection from a space $\Omega$ to a substructure is the the projection of a vector to one of its components. For simplicity, the development here is not given in terms of substructures; the ideas can be stated in terms of the projections, without making more explicit assumptions about the structure of the space $\Omega$.   

Suppose we want to define a distribution over a space $\Omega$ using graphical models, and further, suppose we have a set of projections of the form $\pi: \Omega \to \Omega'$; since the images of these projections are not necessarily subsets of $\Omega$, the composition of these projections is no longer well-defined (i.e., the image of one projection is not necessarily a subset of the domain of another projection). In order to define a notion of consistency for this projection family, there must exist projections between these spaces. For a projection family 
\begin{align*}
\Pi = \{\pi_A: \Omega \to \Omega_A, \; A \in \mathcal{A} \},
\end{align*}
where $\mathcal{A}$ is an index set, suppose that the following conditions hold:
\begin{enumerate}
\item (Completeness) For all $A,A' \in \mathcal{A}$ such that $A \cap A' \neq \emptyset$, we have that $A \cap A' \in \mathcal{A}$.
\item (Consistency) For all $A,A' \in \mathcal{A}$ such that $A' \subseteq A$, there exists a projection of the form
\begin{align*}
\pi_{A \to A'}: \Omega_A \to \Omega_{A'},
\end{align*}
and this projection is defined by
\begin{align*}
\pi_{A \to A'} \circ \pi_A = \pi_{A'}.
\end{align*}
\end{enumerate}
If these conditions hold, then we say the projection family $\Pi$ is consistent and complete (this is a natural extension of the definitions in the previous section). Incorporating these projections into the above projection family, we have the following set of projections:
\begin{align*}
\tilde{\Pi} = \{\pi_{A \to A'}: \Omega_A \to \Omega_{A'} \},
\end{align*} 
where $A,A' \in \mathcal{A}$, and $A' \subseteq A$. 

\begin{example}[Vectors]
A simple example of projections to substructures is the familiar coordinate projections used in modeling multivariate random variables. Let $\Omega$ be a product space of the form $\Omega = \mathcal{X} = \mathcal{X}_1 \times \cdots \times \mathcal{X}_n$, and let the coordinate projection $\pi_A:\mathcal{X} \to \mathcal{X}_A$ be
\begin{align*}
\pi_A(X)=X_A,
\end{align*}
where $\mathcal{X}_A = \prod_{i \in A} \mathcal{X}_i$ and $A \subseteq \{1,\ldots,n\}$. Since $\mathcal{X}_A \not \subseteq \mathcal{X}$ for $A \neq \{1,\ldots,n\}$, these projections are to substructures of the original space. Let $\Pi$ be the projection family:
\begin{align*}
\Pi = \{\pi_A, \; A \in \mathcal{A} \}
\end{align*}
where $\mathcal{A} = \mathbb{P}(\{1,\ldots,n\})$. Then the projection family $\Pi$ is complete (since the index set $\mathcal{A} = \mathbb{P}(\{1,\ldots,n\})$ is closed under intersection), and is also consistent (since for any $A,A' \in \mathcal{A}$ such that $A' \subseteq A$, there exists a projection between substructures of the form $\pi_{A \to A'}: \mathcal{X}_A \to \mathcal{X}_{A'}$ such that $\pi_{A \to A'} \circ \pi_A = \pi_{A'}$).
\end{example}

Another example is the projection to substructures used in modeling random trees (Section \ref{sec:substructures_trees}). For a sample space $\Omega$ with a distribution $P$ over it, if we have a finite, consistent, complete system of substructure projections $\Pi$ on $\Omega$, then we can define a marginal random variable for each index $A \in \mathcal{A}$ in this family, and similarly, we may also define a Gibbs form (equation \ref{eq:factorization_general}). Hence, we have arrived at a general framework, based on general projections. 

%

\section{Compositional Systems}

A projection family on an object may be viewed as defining a compositional system. Compositionality refers to the phenomena in which objects are composed of parts, which in turn, are themselves composed of parts, etc., and that the same part can occur in multiple larger parts. For a given set of objects $\Omega$, a projection family on it defines the decomposition of objects into a hierarchy of parts, and this may be viewed as a top-down approach to defining a compositional system. 
This approach for defining these systems differs from that taken in \citep{geman2002composition}; in that work, given a set of primitive parts $T \subseteq \Omega$, a set of composition rules are used to define the allowable groupings of parts into larger parts, and may be viewed as a bottom-up approach to defining compositional systems. The alternative perspective offered here on these systems is very different than that taken in the literature; our intention is only to provide a context to how graphical models, as formulated above, fits among other general frameworks, and this is only one possible interpretation of the relationship. 


To illustrate modeling a compositional system, consider character recognition, a classic problem in the field of computer vision. The goal is to design a computer vision system that takes images of handwritten characters and determines the character being displayed. Let the object space $\Omega$ be the space of possible binary (i.e., black and white) images with a label (i.e., we assume every image has a label attached to it from some label set). Following along the lines of the example given in \citep{geman2002composition}, the most primitive parts may be images with only a single black point, having the label `point'; the next simplest parts might be images with only two points within close proximity, having the label `linelet'; these objects can be combined to form objects with the label `line', which in turn can be combined to form objects with the label 'L-junction', and so on, until finally objects with the label `character' are formed.

If instead of defining a part $t \in \Omega$ as a single element, we let it be a random variable $\bold{t}$ that takes values in a subset of this space, 
then we may associate them to projections (taking images and their labels to a subset of the images and their labels). These projections are consistent with each other, and so define a consistent set of marginal distributions over these parts. In turn, these marginal variables allow a graphical model approach to be applied to the problem, allowing the efficient estimation of distributions over the object space $\Omega$. 

We note that in the approach to compositional systems based on projection families, the composition rules (describing how to combine parts into larger parts) are provided, in a sense, by the (marginal) probabilities of parts having some value, conditioned on the value of its constituent parts (the value of its projections). When this probability is nonzero, one may interpret that a composition rule is dictating that these constituent parts are combinable. 

\chapter{Examples}
\label{sec:examples_sec}


In this section, we consider the practical application of the models described in the previous sections. Since the random graph models are more general than the random tree ones, and because they differ more from the models in the literature, we focus our attention on them here. 
We will use factorization to specify structure in distributions (which, for graphs, differs from specifying independence assumptions, see Section \ref{sec:compact_dists}); the reason is, for the examples considered here, this invariance is more straightforward to specify and operate on. We also discuss invariances on distributions based on graph isomorphisms, an assumption used in many random graph models. The use of these invariances on unattributed graphs, however, causes models to be susceptible to degeneracy problems. To avoid this issue, it is important for models that employ these invariances to assign latent variables to the vertices, or equivalently, to use attributed graphs. We will assume models that take a simple exponential form based on the use of template graphs. We illustrate the ideas with several examples. 

\section{Compact Distributions}
\label{sec:chap2_apps_compact_dists}

Although a distribution over a finite graph space $\mathcal{G}$ can always be specified by directly assigning a probability to each graph in it, in practise we need to make assumptions about the distribution. In Section \ref{sec:compact_dists}, we discussed Gibbs form and the specification of structure based on factorization, where a Gibbs distribution has the form:
 \begin{align*}
P(G) = \exp \left[ \psi_0 + \sum \limits_{G' \in S_1(G)} \psi_1(G') +  \sum \limits_{G' \in S_2(G)} \psi_2(G') + \ldots \right].
\end{align*}
In the examples considered here, we find it natural to allow slightly more structure than can be obtained only through the specification of factors; we also want to be able to assign individual graphs to have a factor value of zero. In other words, we are interested in defining structure through the specification of a small subset $\mathcal{G}_{\text{basis}} \subset \mathcal{G}$ such that by assigning a potential value to each graph in $\mathcal{G}_{\text{basis}}$, the probability of every graph in $\mathcal{G}$ can be determined. Hence, given a basis, we assume the potential of any graph $G \notin \mathcal{G}_{\text{basis}}$ is zero, and 
define the probability of a graph as
\begin{align}
P(G) & = \frac{1}{Z} \exp \left [ \sum \limits_{G' \in \mathcal{C}(G)} \psi(G') \right ],
\label{eqn:model}
\end{align}
where $\psi: \mathcal{G}_{\text{basis}} \rightarrow \mathbb{R} \cup \{-\infty\}$ and $\mathcal{C}(G) \equiv S(G) \cap \mathcal{G}_{\text{basis}}$. 

\section{Additional Structure}

The model given in equation \ref{eqn:model} can be further simplified by assuming the function $\psi$ has some structure. This can be done in many ways; the simplest is to assign the same function value to graphs that are similar in some sense. For example, we might want graphs that are isomorphic to each other to have equal values (i.e., setting $\psi(G_1) = \psi(G_2)$ for all $G_1,G_2 \in \mathcal{G}_{\text{basis}}$ that are isomorphic). 
More generally, we can specify structure in $\psi$ by assuming an additive relationship of the form:
\begin{align*}
\psi(G) = \sum_{k=1}^K \lambda_k I_{\{ G \in \mathcal{D}_k \} },
\end{align*}
where each $\mathcal{D}_k \subset \mathcal{G}_{\text{basis}}$ is a subset of the basis and each $\lambda_k$ a real number. Then the model in equation \ref{eqn:model} simplifies to:
\begin{align}
P(G) & = \frac{1}{Z} \exp \left [ \sum \limits_{k=1}^K \lambda_k U_k(G) \right ],
\label{eqn:template_model}
\end{align}
where $U_k(G) = \# \{ G' \in S(G) \; : \; G' \in \mathcal{D}_k \}$ is the number of subgraphs of type $k$ in the graph $G$. We will find it convenient to reformulate each set $\mathcal{D}_k$ as a binary function: define a function $R_k: \mathcal{G} \to \{0,1\}$ such that
\begin{align*}
R_k(G)=1 \iff G \in \mathcal{D}_k.
\end{align*}
Then, equivalently, we have that $U_k(G) = \# \{ G' \in S(G) \; : \; R_k(G') = 1 \}$. We refer to the binary functions $R_k$ as \textit{compatibility maps}. We now consider methods for specifying these maps.




\section{Graph Isomorphisms}
\label{sec:chap2_isomorphisms}


An important way to compare two graphs is based on how their parts compare. In this section, we consider isomorphisms, a comparison method based on second-order subgraphs; two graphs are said to be isomorphic if they share the same edge structure:

\begin{definition}[Graph Isomorphism]
\label{def:graph_isomorphism_1}
A graph $G = (V,E)$ is \textit{isomorphic} to a graph $G' = (V',E')$ if there exists a bijection $f: V \rightarrow V'$ such that:
\begin{align*}
E(v,v') = E'(f(v), f(v')) \text{ for all } v,v' \in V.
\end{align*}
Two graphs that are isomorphic is denoted by $G \simeq G'$.
\end{definition}

A distribution $P$ over a graph space $\mathcal{G}$ is said to be invariant to isomorphisms if any two graphs that are isomorphic have the same probability, i.e.:
\begin{align*}
G \simeq G' \implies P(G) = P(G').
\end{align*}
where $G,G' \in \mathcal{G}$. We now consider some isomorphism variations that will be useful for attributed graphs.

\section{Attributed Graph Isomorphisms}
\label{sec:attributed_graph_isos}

When modeling unattributed graphs, often it is important to associate attributes to vertices in these graphs. The attributes, in this case, may be thought of as latent variables, which can simplify the order of models. 
Suppose we have a finite vertex space $\Lambda_V$, an edge space $\Lambda_E$, and an attribute space $\mathcal{X} $. Recall from Section \ref{sec:graph_variations}, an attributed graph has the form $G = (V,X,E)$, where:
\begin{align*}
& V \subseteq \Lambda_V \\
& X: V \rightarrow \mathcal{X} \\
& E: V \times V \rightarrow \Lambda_E.
\end{align*}
The simplest isomorphism for attributed graphs is based on the edge structure and attributes on individual vertices:





\begin{definition}[First-order Isomorphism]
\label{def:graph_isomorphism_2}
A graph $G=(V,X,E)$ is \textit{first-order isomorphic} to a graph $G'=(V',X',E')$ if there exists a bijection $f: V \rightarrow V'$ such that:
\begin{enumerate}
\item $X(v) = X'(f(v)) \text{ for all } v \in V.$
\item $E(v,v') = E'(f(v),f(v')) \text{ for all } v,v' \in V$ 
\end{enumerate}
Two attributed graphs that are first-order isomorphic is denoted by $G \simeq_1 G'$.
\end{definition}


This definition is a natural extension of Definition \ref{def:graph_isomorphism_1} to attributed graphs. As an example, suppose the attribute space $\mathcal{X} = \{c_1,\ldots,c_k\}$ is some finite set of labels or colors; then for graphs to be isomorphic by this definition, the coloring of vertices must be respected in addition to the edge structure. The next simplest isomorphism for attributed graphs is based on the attributes on pairs of vertices. Suppose we have a distance function $d$ over the attribute space $\mathcal{X}$.

\begin{definition}[Second-order Isomorphism]
\label{def:graph_isomorphism_3}
A graph $G=(V,X,E)$ is \textit{second-order isomorphic} to a graph $G'=(V',X',E')$ if there exists a bijection $f: V \rightarrow V'$ such that:
\begin{enumerate}
\item $d(X(v),X(v')) = d(X'(f(v)),X'(f(v'))) \text{ for all } v,v' \in V.$
\item $E(v,v') = E'(f(v),f(v')) \text{ for all } v,v' \in V$ 
\end{enumerate}
Two attributed graphs that are second-order isomorphic is denoted by $G \simeq_2 G'$.
\end{definition}



This second-order isomorphism is used in many latent position models (\citep{hoff2002latent}), where $\mathcal{X}=\mathbb{R}^d$ is a Euclidean space; for models using this isomorphism invariance, the probability of a graph depends on the distances between vertices in it, not on their particular locations. These definitions can be extended to higher-orders in a straightforward manner. To summarize, we presented some isomorphisms that can be used in specifying when graphs are similar to each other. We will make use of them to specify compatibility maps in the examples presented later in this section.

 \section{Master Interaction Function}
 
 In defining distributions over a graph spaces, often it will be useful to reduce the size of the graph space, removing graphs that have zero probability. One way to do this, assuming that the edge space $\Lambda_E$ has a partial ordering $\leq$, is to define a function that restricts the edge configurations allowed in graphs: 
 
 \begin{definition}[Master Interaction Functions] $\;$
 \begin{enumerate}
 \item A \textit{master interactions function} over vertices is a function of the form $F_V: \Lambda_V^2 \rightarrow \Lambda_E$. A graph $G=(V,X,E)$ is said to \textit{respect} a master interactions function $F_V$ if, for all $v,v' \in V$, we have $E(v,v') \leq F_V(v,v')$.
 \item A master interactions function over attributes is a function of the form $F_{X}:\mathcal{X}^2 \to \Lambda_E$. A graph $G=(V,X,E)$ is said to \textit{respect} a master interactions function $F_X$ if, for all $v,v' \in V$, we have $E(v,v') \leq F_X(X(v),X(v'))$.
 \end{enumerate}
 \end{definition}
 
We use master interactions functions to restrict graph spaces to only those graphs that respect them. That is, for a graph space $\mathcal{G}$ and some master interaction functions $F_V$ and $F_X$, we can restrict the graphs to the set:
  \begin{align*}
\mathcal{G}' = 
 \left\{ (V,X,E) \in \mathcal{G} \; : \; 
 \begin{array}{l}
     E(v,v') \leq F_V(v,v') \text{ for all } v,v' \in V \\
     E(v,v') \leq F_X(X(v),X(v')) \text{ for all } v,v' \in V
 \end{array} 
  \right\}. 
 \end{align*}
 
 \begin{example}
Suppose the vertex space $\Lambda_V = \{1,\ldots,p\}$ and the edge space $\Lambda_E = \{0,1\}$. We can define a master interactions function $F_V$ that ensures there is no edge between vertices that are farther apart than $t \in \mathbb{R}^+$as follows:
\begin{align*}
F_V(v,v') =
\left \{
\begin{array}{l}
     0, \quad \text{if } |v-v'| > t \\
     1, \quad \text{otherwise}
 \end{array}
 \right .
\end{align*}
\end{example}

\begin{example}
Suppose the attribute space is $\mathcal{X} = \{c_1,\ldots,c_k\}$, where each $c_i$ represents a color and edge space is $\Lambda_E = \{0,1 \}$. We can define a master interactions function $F_X$ that ensures vertices with the same color attribute cannot have an edge:
\begin{align*}
F_X(c_i,c_j) = 
\left \{
\begin{array}{l}
     0, \quad \text{if } c_i=c_j \\
     1, \quad \text{otherwise}
 \end{array}
 \right .
\end{align*}
\end{example}

 \section{Examples}
 
 In this section, we illustrate the above ideas with some examples. In each example, the model takes the form of equation \ref{eqn:template_model}, and uses some set of templates $\{T_1, \ldots, T_K\}$. For each template $T_k$, the compatibility map $R_k:\mathcal{G} \to \{0,1\}$ is based on if a graph $G \in \mathcal{G}$ is isomorphic to it, i.e.: 
\begin{align}
\label{eq:compatibility_map_iso_def}
R_k(G) = 
\left \{
\begin{array}{l}
     1, \quad \text{if } G \simeq T_k \\
     0, \quad \text{otherwise}
 \end{array}.
 \right .
\end{align}
In all the examples except the first one, we assume the isomorphism used is the first-order isomorphism. The sampling and learning algorithms are discussed in Section \ref{sec:inference_and_learning}. 


 \subsection{Example 1: Grid Graphs}
 
 We consider unattributed grid-like graphs such as the one shown in Figure \ref{fig:grider}. Let the vertex space be $\Lambda_V = \{1,\ldots,p\}^2$ be a grid of size $p$, and let $\Lambda_E = \{0,1 \}$, specifying the absence of an edge or the presence of an edge, respectively. We can specify the master interactions function $F_V$ to take pairs of vertices that cannot have an edge to the value $0$, and pairs that can have an edge to the value 1. Define $F_V$ as follows:
\begin{align*}
F_V(v,v') =
\left \{
\begin{array}{l}
     1, \quad \text{if } |v_1-v_1'| \leq 1 \text{ and } |v_2-v_2'| \leq 1 \\
     0, \quad \text{otherwise}
 \end{array}
 \right .,
\end{align*}
where $v=(v_1,v_2) \in \Lambda_V$ and $v'=(v_1',v_2') \in \Lambda_V$. Hence, this master interactions function $F_V$ ensures the graph space $\mathcal{G}$ only contains grid-like graphs.

\begin{figure}[h!]
  \centering
    \includegraphics[scale=0.35]{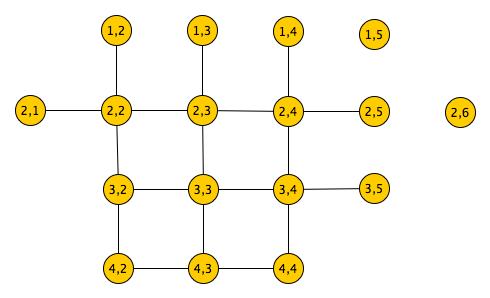}
  \caption{An example of a grid-like graph.}
  \label{fig:grider}
\end{figure}


A possible set of templates is shown in Table \ref{tab:gtk}. Each template $T_k$ in these tables specifies a compatibility map based on graphs that are isomorphic to $T_k$. Here, we made the following design choices. First, we have limited the order of the template graphs to fourth order and lower (i.e. graphs such that $|V(G)| \leq 4$). Secondly, to make computation feasible, we apply a `locality' principle in which only connected graphs are used as templates. 
Since unconnected graphs constitute the vast majority of the subgraphs in $S(G)$ for any given graph $G \in \mathcal{G}$, the restriction to only these is necessary for computational reasons. For example, consider the second-order subgraphs in the graph in Figure \ref{fig:grider}; there are $|S_2(G)| = {17 \choose 2} = 136$ subgraphs of this order, but only $18$ of them are connected. If we consider higher-order subgraphs, this gap widens.




Given these templates, the number of subgraphs that correspond to a given pattern can be calculated for any graph $G \in \mathcal{G}$, and hence its probability can be calculated. For example, for the graph $G$ in Figure \ref{fig:grider}, the probability is expressed as follows:
\begin{align*}
P(G) & = \frac{1}{Z} \exp \left [ \sum \limits_{k=1}^K \lambda_k U_k(G) \right ] \\
	& = \frac{1}{Z} \exp \left [ 17\lambda_1 + 18 \lambda_2 + 26 \lambda_3 + 4\lambda_4 + 45 \lambda_5 + 20 \lambda_6 \right ].
\end{align*}

\begin{table}[H]
\centering
\begin{tabular}{*{2}{m{0.2\textwidth}}}
\hline
\includegraphics[scale=0.27]{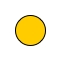} &  $ \lambda_1$ \\
\hline
\includegraphics[scale=0.27]{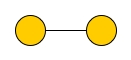} & $ \lambda_2$ \\
\hline
\includegraphics[scale=0.27]{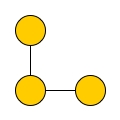} & $ \lambda_3$  \\
\hline
\includegraphics[scale=0.27]{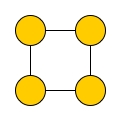} & $ \lambda_4$  \\
\hline
\includegraphics[scale=0.27]{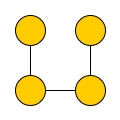} & $ \lambda_5$  \\
\hline
\includegraphics[scale=0.27]{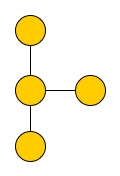} & $ \lambda_6$  \\
\hline
\end{tabular}
\caption{The set of connected graphs that are used as templates; the compatibility maps are based on graphs that are isomorphic to these templates.}
\label{tab:gtk}
\end{table}


\newpage
 \subsection{Example 2: `Molecule' Graphs}
 \label{sec:example2_molecule_graphs}
 
 We consider an example in which the graph space $\mathcal{G}$ is composed of graphs that loosely resemble molecules in appearance. An example is shown in Figure \ref{fig:mol}.

\begin{figure}[H]
  \centering
    \includegraphics[scale=0.27]{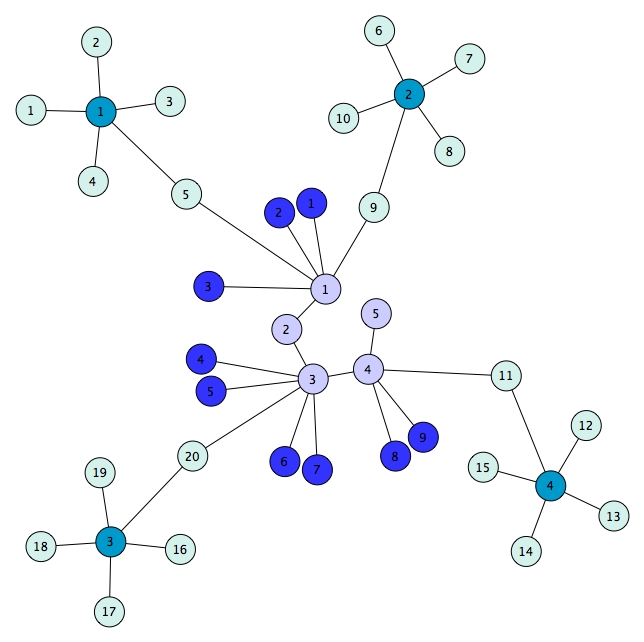}
  \caption{An example of a `molecule' graph. This is an artificial graph, made only for illustration.}
  \label{fig:mol}
\end{figure}

In this example, we will use attributed graphs of the form $G = (V,X,E)$. Let $\Lambda_V = \{1,\ldots,p\}$ be the vertex space, $\mathcal{X} = \{c_1,c_2,c_3,c_4\}$ the attribute space, where each $c_i$ represents a color, and $\Lambda_E = \{0,1 \}$ the edge space. 
We can specify the master interactions function $F_X:\mathcal{X}^2 \to \Lambda_E$ to specify that vertices with the same color cannot have an edge between them (e.g., set $F_X(c_i,c_j) = 0$ if $c_i=c_j$). Similarly, we might want to specify that vertices with certain different colors can have an edge between them (e.g. set $F_X(c_i,c_j) = 1$ for some $c_i \neq c_j$). 

A possible set of templates and their corresponding parameters is shown in Table \ref{tab:gt3}. For each template graph $T_k$, we define a compatibility map $R_k: \mathcal{G} \rightarrow \{0,1\}$ based on graphs that are second-order isomorphic to it (equation \ref{eq:compatibility_map_iso_def}). Given these templates, the number of subgraphs that correspond to a given pattern can be calculated for any graph $G \in \mathcal{G}$, and hence its (unnormalized) probability can be calculated. For the graph $G$ in Figure \ref{fig:mol}, the probability can be expressed as follows:
\begin{align*}
P(G) & = \frac{1}{Z} \exp \left [ \sum \limits_{k=1}^K \lambda_k U_k(G) \right ] \\
	& = \frac{1}{Z} \exp \left [ 20 \lambda_{1} + 4 \lambda_{2} + 9 \lambda_{3} + 5 \lambda_{4} + 4 \lambda_{5} + 20 \lambda_{6} +  4 \lambda_{7} + 9 \lambda_{8} +  6 \lambda_{9} + 4 \lambda_{10} \right ].
\end{align*}
Notice that in this example, the attributes (i.e., the colors associated with vertices) allow distributions in which, loosely speaking, typical samples have complex structure even despite the fact that the basis does not contain high-order graphs. For example, the edge structure in these graphs are very unlikely to have been generated by independent coin flips as in an Erd\H os-R\'enyi model. 
If the vertices did not have these attributes and we wanted to define a distribution that has equivalent probabilities as in this example, (e.g. assign the same probability to the unattributed version\footnote{That is, for an attributed graph $G=(V,X,E)$, removing the attribute function $X:V^2 \to \mathcal{X}$ from it to form the unattributed graph $G=(V,E)$.} of the graph in Figure \ref{fig:mol}), it would be necessary for any basis to contain graphs of much higher orders than those in the basis used in this example. Hence, we see that attributes are important latent variables even if one only wants to define distributions over unattributed graph spaces. Thus, ideas contained in latent position models (\citep{hoff2002latent}) and latent stochastic blockmodels (\citep{airoldi2009mixed}, \citep{latouche2011overlapping}) can be incorporated within the framework here.


\begin{table}[H]
\centering
\begin{tabular}{*{2}{m{0.2\textwidth}}}
\hline
\includegraphics[scale=0.27]{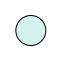} &  $\lambda_{1}$ \\
\hline
\includegraphics[scale=0.27]{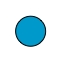} & $ \lambda_{2}$ \\
\hline
\includegraphics[scale=0.27]{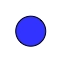} & $ \lambda_{3}$  \\
\hline
\includegraphics[scale=0.27]{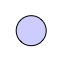} & $ \lambda_{4}$  \\
\hline
\includegraphics[scale=0.27]{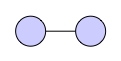} & $ \lambda_{5}$  \\
\hline
\includegraphics[scale=0.27]{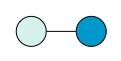} & $ \lambda_{6}$  \\
\hline
\includegraphics[scale=0.27]{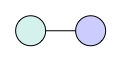} & $ \lambda_{7}$  \\
\hline
\includegraphics[scale=0.27]{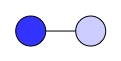} & $ \lambda_{8}$  \\
\hline
\includegraphics[scale=0.27]{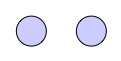} & $ \lambda_{9}$  \\
\hline
\includegraphics[scale=0.27]{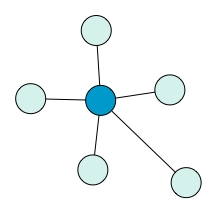} & $ \lambda_{10}$ \\
\hline
\end{tabular}
\caption{The set of graphs that are used as templates. These are used to specify the compatibility maps based on graphs that are isomorphic to them.}
\label{tab:gt3}
\end{table}




\newpage
\subsection{Example 3: Mouse Visual Cortex}


A graph $G_0$ that corresponds to the visual cortex of a mouse and is shown in Figure \ref{fig:mouse}. 
 
\begin{figure}[H]
  \centering
    \includegraphics[scale=0.24]{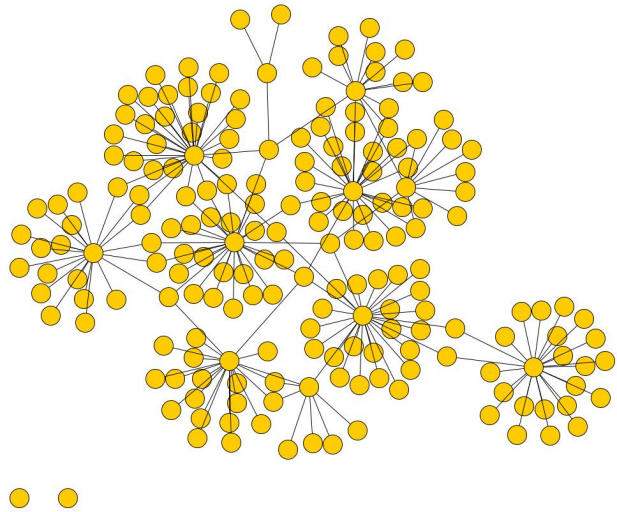}
  \caption{A mouse visual cortex, which we label as $G_0$ \citep{bock2011network}. }
  \label{fig:mouse}
\end{figure}


In order to model mouse visual cortexes, we consider a hierarchical model; we begin by modeling parts (i.e., interesting subgraphs) that compose these visual cortexes. From the graph $G_0$, we extract subgraphs of $G_0$ that exist in the following graph space:
\begin{align*}
\mathcal{G} = \left \{ G \in S(G_0) \; : \; 
\begin{array}{l}
     |V| = 25 \\ 
     G \text{ is connected} \\
     \text{diameter}(G) \leq 5
\end{array}
\right \},
\end{align*}
where $S(G_0)$ denotes the set of all subgraphs of $G_0$. Some examples of graph in $\mathcal{G}$ are shown in Figure \ref{fig:mouse2}.


\begin{figure}[H] 
  \begin{minipage}[b]{0.5\linewidth}
    \includegraphics[scale=0.23]{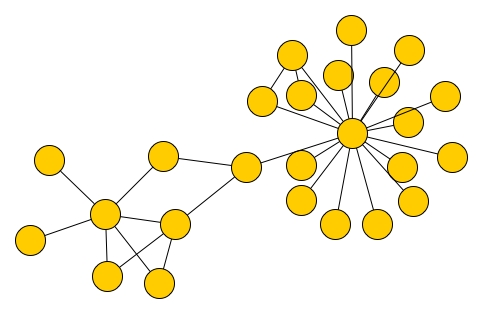} 
  \end{minipage} 
  \begin{minipage}[b]{0.5\linewidth}
    \includegraphics[scale=0.23]{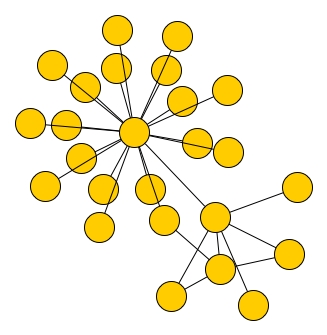} 
  \end{minipage} 
    \begin{minipage}[b]{0.5\linewidth}
    \includegraphics[scale=0.23]{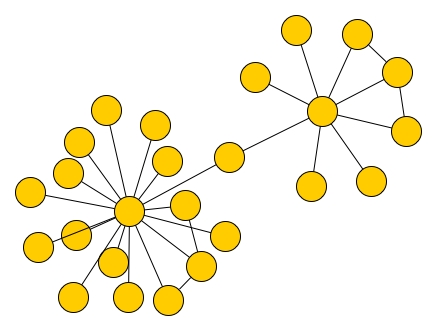} 
  \end{minipage} 
  \begin{minipage}[b]{0.5\linewidth}
    \includegraphics[scale=0.23]{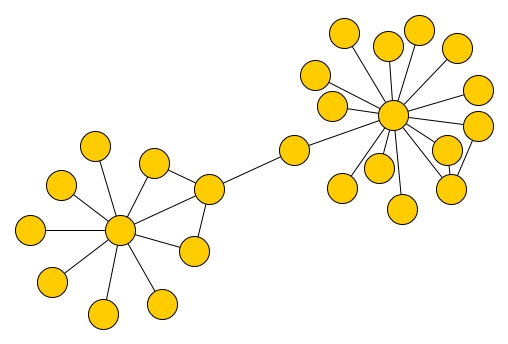} 
  \end{minipage} 
      \begin{minipage}[b]{0.5\linewidth}
    \includegraphics[scale=0.23]{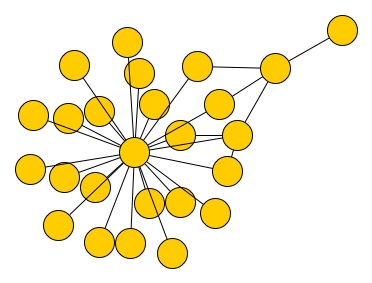} 
  \end{minipage} 
  \begin{minipage}[b]{0.5\linewidth}
    \includegraphics[scale=0.23]{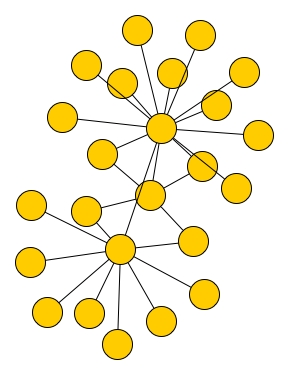} 
  \end{minipage} 
  \caption{Some examples of subgraphs of $G_0$ used for learning our subgraph model.}
  \label{fig:mouse2}
\end{figure}


Notice that the graphs in $\mathcal{G}$ do not have attributes, i.e. the vertex and edge spaces have the form:
\begin{align*}
 \Lambda_V & = \{1,\ldots,25\} \\
 \Lambda_E & = \{0,1 \},
\end{align*}
For modeling purposes, we introduce attributes for the vertices, which serve as latent variables; without them, very high-order templates would be necessary. 
We assign attributes to the vertices based on their edge counts, which can be used to define different types of vertices. Let $d_G(v)$ denote the number of edges incident on vertex $v$ in the graph $G$. We will assign each vertex in a graph an attribute value in $\mathcal{X} = \{c_1, c_2, c_3\} $ based on its edge count as follows. For a graph $G=(V,E)$, define the attribute function $X:V \to \mathcal{X}$ by:
\begin{align*}
X(v) = \left \{
\begin{array}{l}
    c_1, \text{ if } 0 \leq d_G(v) \leq 1 \\ 
    c_2, \text{ if } 2 \leq d_G(v) \leq 4 \\
     c_3, \text{ if } 5 \leq d_G(v)
\end{array},
\right.
\end{align*}
and augment the graph $G$ with it to form the attributed graph $G'=(V,X,E)$. For an example of this augmentation, see Figure \ref{fig:mouse3}. We note a couple of things: (1) more complicated attributes can be used for the vertices, for example using a generalized notion of edge counts; (2) a clustering algorithm can be used here, for example clustering similar subgraphs.

\begin{figure}[H]
\begin{minipage}[c]{0.45\linewidth}
    \includegraphics[scale=0.25]{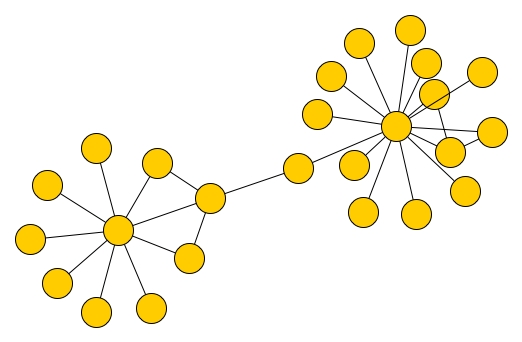} 
  \end{minipage} 
    $\Longrightarrow \; $ 
  \begin{minipage}[c]{0.45\linewidth}
    \includegraphics[scale=0.25]{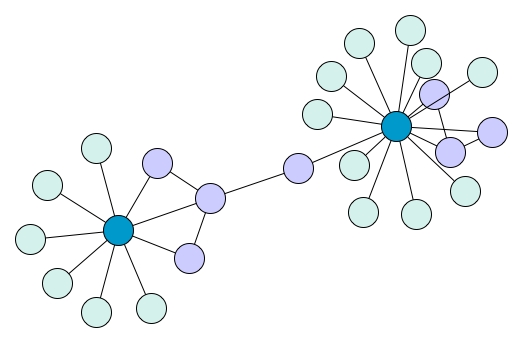} 
  \end{minipage} 
  \caption{An example of a graph being assigned color attributes.} 
  \label{fig:mouse3}
\end{figure}


The templates used for our model are shown in Tables \ref{tab:gt4},  \ref{tab:gt5}, and  \ref{tab:gt6}. We learn model parameters using the algorithm in Section \ref{sec:learning}, and a small training set of $31$ graphs (examples shown in Figure \ref{fig:mouse2}). Some samples from the model are shown in Figure \ref{fig:mouse4}. As mentioned, these are only samples of subgraphs. With only one full graph $G_0$, we cannot develop a hierarchical model, so instead, we combine these subgraphs just using a few simple rules (e.g., combining them by randomly placing edges between them). An example is shown in Figure \ref{fig:mouse5}. 

\begin{table}[p]
\centering
\begin{tabular}{*{2}{m{0.2\textwidth}}}
\hline
\includegraphics[scale=0.25]{1_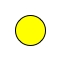} &  $\lambda_{1}$ \\ 
\hline
\includegraphics[scale=0.25]{1_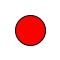} & $ \lambda_{2}$ \\ 
\hline
\includegraphics[scale=0.25]{1_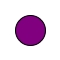} & $ \lambda_{3} $ \\ 
\end{tabular}
\caption{The set of 1st-order graphs that are used as templates.}
\label{tab:gt4}
\end{table}

\begin{table}[p]
\centering
\begin{tabular}{*{2}{m{0.2\textwidth}}}
\hline
\includegraphics[scale=0.25]{2_5.jpg} & $ \lambda_{4}$ \\ 
\hline
\includegraphics[scale=0.25]{2_1.jpg} & $ \lambda_{5} $ \\ 
\hline
\includegraphics[scale=0.25]{2_7.jpg} & $ \lambda_{6} $ \\ 
\hline
\includegraphics[scale=0.25]{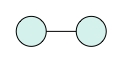} &  $\lambda_{7}$ \\ 
\hline
\includegraphics[scale=0.25]{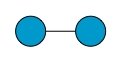} & $\lambda_{8} $ \\ 
\hline
\includegraphics[scale=0.25]{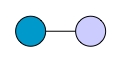} & $\lambda_{9} $ \\ 
\hline
\end{tabular}
\caption{The set of 2nd-order graphs that are used as templates.}
\label{tab:gt5}
\end{table}

\begin{table}[p]
\centering
\begin{tabular}{*{2}{m{0.2\textwidth}}}
\hline
\includegraphics[scale=0.25]{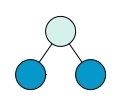} &  $\lambda_{10}  $ \\ 
\hline
\includegraphics[scale=0.25]{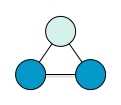} &  $\lambda_{11} $ \\ 
\hline
\end{tabular}
\caption{The set of 3rd-order graphs that are used as templates.}
\label{tab:gt6}
\end{table}

\begin{figure}[H] 
  \begin{minipage}[b]{0.5\linewidth}
    \includegraphics[scale=0.21]{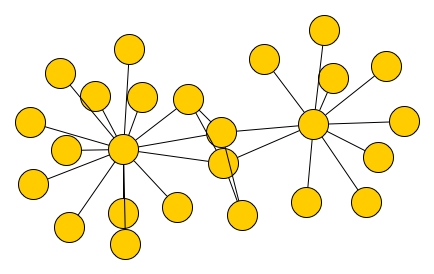} 
  \end{minipage} 
  \begin{minipage}[b]{0.5\linewidth}
    \includegraphics[scale=0.21]{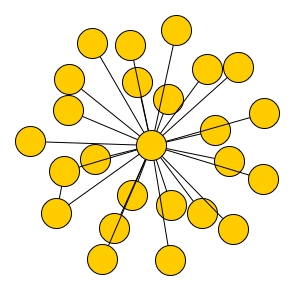} 
  \end{minipage} 
    \begin{minipage}[b]{0.5\linewidth}
    \includegraphics[scale=0.21]{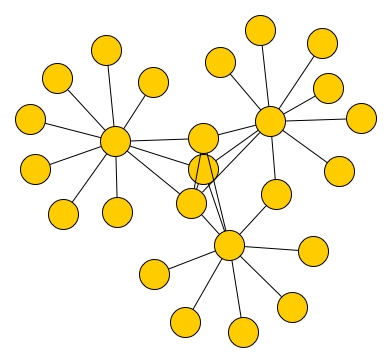} 
  \end{minipage} 
  \begin{minipage}[b]{0.5\linewidth}
    \includegraphics[scale=0.21]{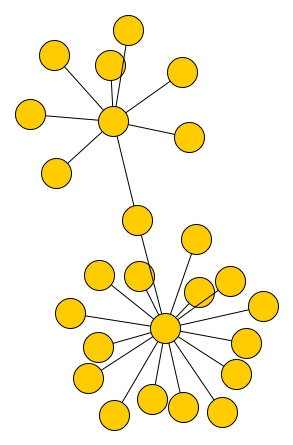} 
  \end{minipage} 
    \begin{minipage}[b]{0.5\linewidth}
    \includegraphics[scale=0.21]{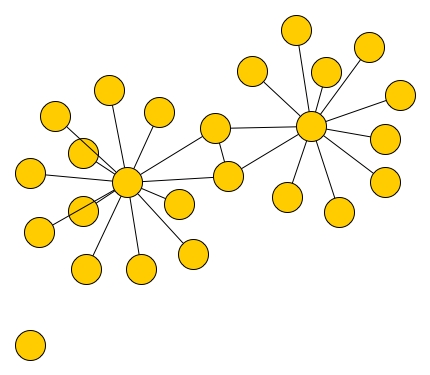} 
  \end{minipage} 
  \begin{minipage}[b]{0.5\linewidth}
    \includegraphics[scale=0.21]{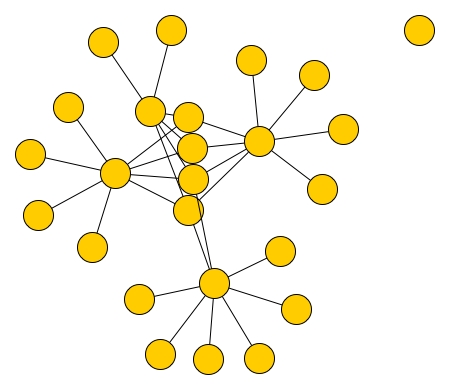} 
  \end{minipage} 
  \caption{Model samples.}
  \label{fig:mouse4}
\end{figure}

\begin{figure}[H]
  \centering
    \includegraphics[scale=0.21]{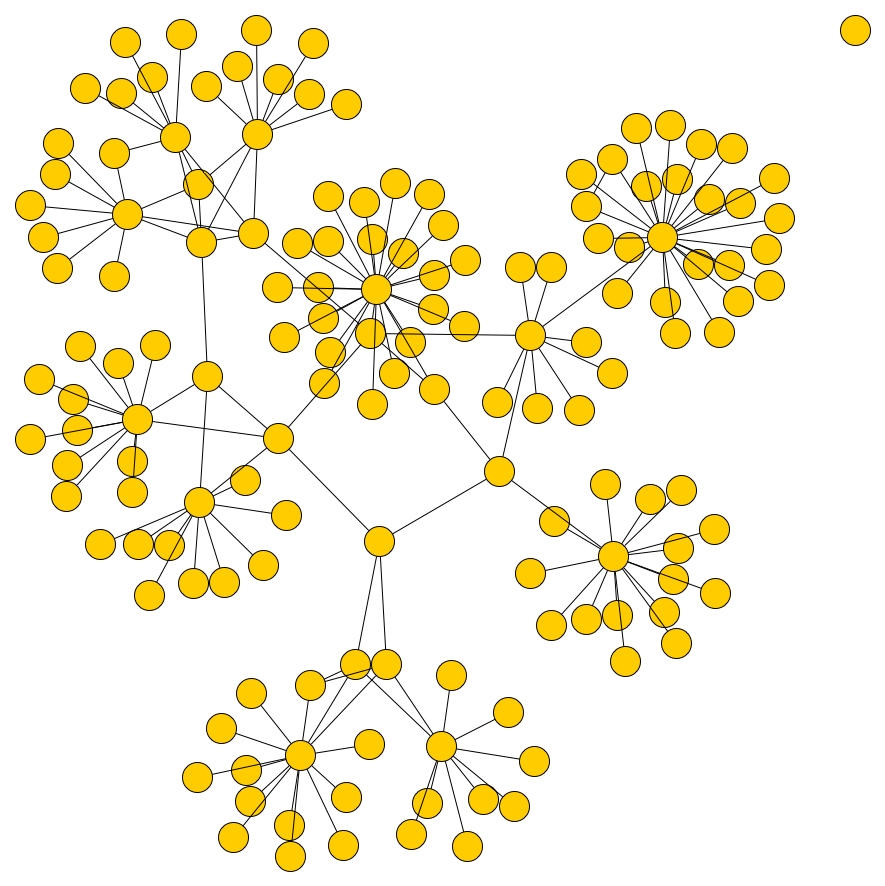}
  \caption{Mouse visual cortex sample from our model.}
  \label{fig:mouse5}
\end{figure}

\subsection{Example 4: Chemistry Data}


In the chemoinformatics dataset MUTAG \citep{shervashidze2011weisfeiler}, there are 188 mutagenic aromatic and heteroaromatic
nitro compounds. Examples are shown in Figure \ref{fig:chemo}. 

We will form a simple hierarchical model using deterministic subgraphs. Define a set of subgraphs as shown in Table \ref{tab:gt7}. These subgraphs will correspond to vertices in the second level of the model. There will be an edge between two vertices in the second level if there is an intersection between two subgraphs. For example, let $G$ be a molecule graph and suppose $G_1,G_2 \subset G$ are two subgraphs of $G$ and are in Table \ref{tab:gt7} below. If $G_1,G_2$ have a common subgraph (i.e. have at least one common vertex in $G$), then in the second level of the model, there will be an edge between the corresponding vertices. This edge will have an attribute specifying the degree of intersection, i.e. how many common vertices the two subgraphs share. See Figure \ref{fig:chemo2} for some examples. Thus, all the randomness in the problem is at this second level in the hierarchy and a model can be applied over it.


\begin{figure}[H] 
  \begin{minipage}[b]{0.5\linewidth}
    \includegraphics[scale=0.25]{g1.jpg} 
  \end{minipage} 
  \begin{minipage}[b]{0.5\linewidth}
    \includegraphics[scale=0.25]{g2.jpg} 
  \end{minipage} 
    \begin{minipage}[b]{0.5\linewidth}
    \includegraphics[scale=0.25]{g7.jpg} 
  \end{minipage} 
  \begin{minipage}[b]{0.5\linewidth}
    \includegraphics[scale=0.25]{g1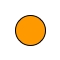} 
  \end{minipage} 
      \begin{minipage}[b]{0.5\linewidth}
    \includegraphics[scale=0.25]{g10.jpg} 
  \end{minipage} 
  \begin{minipage}[b]{0.5\linewidth}
    \includegraphics[scale=0.25]{g32.jpg} 
  \end{minipage} 
  \caption{Examples of molecule graphs in the MUTAG dataset.}
  \label{fig:chemo}
\end{figure}


\begin{table}[H]
\centering
\begin{tabular}{*{2}{m{0.2\textwidth}}}
\hline
\includegraphics[scale=0.25]{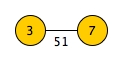} &  $\Longleftrightarrow$ \includegraphics[scale=0.25]{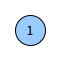} \\
\hline
\includegraphics[scale=0.25]{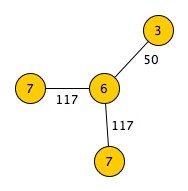} & $\Longleftrightarrow$ \includegraphics[scale=0.25]{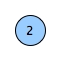}  \\
\hline
\includegraphics[scale=0.25]{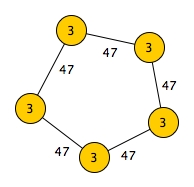} & $\Longleftrightarrow$ \includegraphics[scale=0.25]{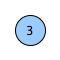} \\
\hline
\includegraphics[scale=0.25]{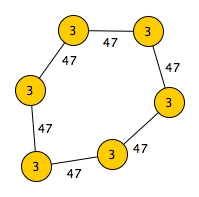} & $\Longleftrightarrow$ \includegraphics[scale=0.25]{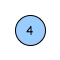} \\
\hline
\includegraphics[scale=0.25]{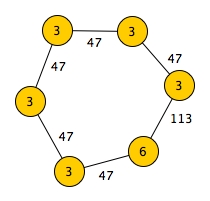} & $\Longleftrightarrow$ \includegraphics[scale=0.25]{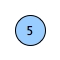} \\
\hline
\includegraphics[scale=0.25]{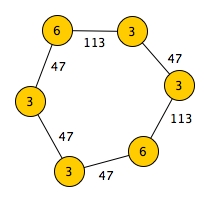} & $\Longleftrightarrow$ \includegraphics[scale=0.25]{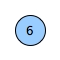} \\
\hline
\includegraphics[scale=0.25]{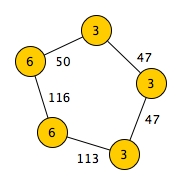} & $\Longleftrightarrow$ \includegraphics[scale=0.25]{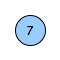} \\
\hline
\includegraphics[scale=0.25]{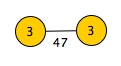} & $\Longleftrightarrow$ \includegraphics[scale=0.25]{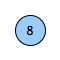} \\
\hline
\includegraphics[scale=0.25]{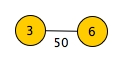} & $\Longleftrightarrow$ \includegraphics[scale=0.25]{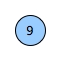} \\
\end{tabular}
\caption{Parts in the model. The blue vertices on the right-hand side represent the corresponding graph on the left-hand side. If one of the graphs on the left-hand side is a subgraph in a larger graph, then we may simplify the description of that larger graph through the use of these parts.}
\label{tab:gt7}
\end{table}

\begin{figure}[H]
\begin{minipage}[c]{0.5\linewidth}
    \includegraphics[scale=0.25]{g1.jpg} 
  \end{minipage} 
    $\Longrightarrow \; $ 
  \begin{minipage}[c]{0.4\linewidth}
    \includegraphics[scale=0.25]{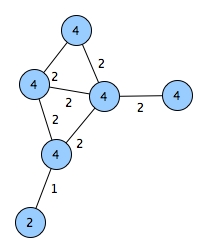} 
  \end{minipage} 
  \begin{minipage}[c]{0.5\linewidth}
    \includegraphics[scale=0.25]{g2.jpg} 
  \end{minipage} 
    $\Longrightarrow \; $ 
  \begin{minipage}[c]{0.4\linewidth}
    \includegraphics[scale=0.25]{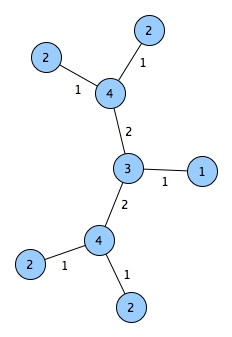} 
  \end{minipage} 
    \begin{minipage}[c]{0.5\linewidth}
    \includegraphics[scale=0.25]{g7.jpg} 
  \end{minipage} 
    $\Longrightarrow \; $ 
  \begin{minipage}[c]{0.4\linewidth}
    \includegraphics[scale=0.25]{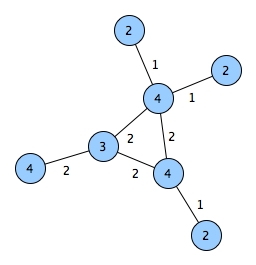} 
    \end{minipage}
  \caption{Examples of molecule graphs (from the MUTAG dataset) depicted by higher-level parts (subgraphs) rather than their lowest level parts. It should be easier to learn a distribution over this higher-level description of these graphs.} 
  \label{fig:chemo2}
\end{figure}



\subsection{Example 5: Vertices with Color and Location}

We consider an example of attributed graphs in which the vertex space $\Lambda_V = \{1,\ldots,p\}^2$ is a two dimensional grid of size $p$, the attribute space $\mathcal{X} = \{c_1,c_2,c_3,c_4\}$ is a set of colors, and a binary edge space $\Lambda_E = \{0,1 \}$. We will define a master interactions function $F_V: \Lambda_V^2 \rightarrow \Lambda_E$ that assigns the value $0$ to every pair of vertices that cannot have an edge, and assigns a value $1$ to every pair of vertices that can have an edge. Define $F_V$ as follows:
\begin{align*}
U(v,v') =  \begin{cases} 
	      			0, & \text{if } d(v,v') > t  \\ 
	      			1, & \text{otherwise} 
			\end{cases}
\end{align*}
where $d(v,v')$ is some distance function that assigns a distance between vertices based on their location attributes. In other words, this master interactions function can be used to ensure there is no edge between vertices that are farther apart than $t \in \mathbb{R}$. Let $N$ be the maximum order of graphs in the graph space. Let the graph space be:
 \begin{align*}
\mathcal{G} = 
  \left\{ (V,E) \; : \; 
 \begin{array}{l}
     V \subseteq \Lambda_V, \; |V| \leq N \\ 
     X: V \to \mathcal{X} \\
     E: V \times V \rightarrow \Lambda_E \\
     E(v,v') \leq F_V(v,v') \text{ for all } v,v' \in V
 \end{array} 
  \right\}.
 \end{align*}
 
The templates and the parameters used are shown in Figures \ref{tab:gt8} and \ref{tab:gt11}. For a template graph $T_k \in \mathcal{G}$, we define the compatibility map $R_k: \mathcal{G} \rightarrow \{0,1\}$ based on graphs that are second-order isomorphic to it (equation \ref{eq:compatibility_map_iso_def}). 
Some samples are shown in Figures \ref{fig:loc}. These were generated using the sampling algorithm in Section \ref{sec:inference_and_learning}.
	






\begin{table}[H]
\centering
\begin{tabular}{c c}
template $T_k$ & parameter value $\lambda_k$  \\
\hline
\includegraphics[scale=0.25]{1.jpg}   &  $\lambda_{1} = 0.5$  \\
\hline
\includegraphics[scale=0.25]{2.jpg}  & $ \lambda_{2} = 0.4$ \\
\hline
\includegraphics[scale=0.25]{3.jpg}  & $ \lambda_{3} = 0.5$  \\
\hline
\includegraphics[scale=0.25]{4.jpg}  & $ \lambda_{4} = 0.4$  \\
\hline
\end{tabular}
\caption{The set of 1st-order graphs that are used as templates. Parameters were hand-tuned here.}
\label{tab:gt8}
\end{table}

\begin{table}[H]
\centering
\begin{tabular}{*{4}{m{0.19\textwidth}}}
template $T_k$ & parameter value $\lambda_k$ & template $T_k$ & parameter value $\lambda_k$  \\

\hline
\includegraphics[scale=0.25]{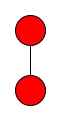} & $ \lambda_{5} = 0.5$ & \includegraphics[scale=0.25]{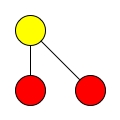} &  $\lambda_{15}  = -5$ \\
\hline
\includegraphics[scale=0.25]{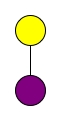} & $ \lambda_{6} = 1.5$ & \includegraphics[scale=0.25]{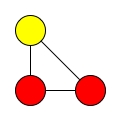} &  $\lambda_{16}  = -5$ \\
\hline
\includegraphics[scale=0.25]{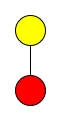} & $ \lambda_{7} = 0.4$ & \includegraphics[scale=0.25]{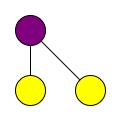} &  $\lambda_{17}  = 0.75$ \\
\hline
\includegraphics[scale=0.25]{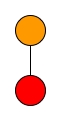} & $ \lambda_{8} = 1.5$ & \includegraphics[scale=0.25]{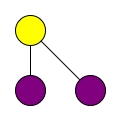} &  $\lambda_{18}  = -5$ \\
\hline
\includegraphics[scale=0.25]{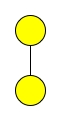} &  $\lambda_{9} = -\infty$ & \includegraphics[scale=0.25]{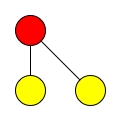} &  $\lambda_{19}  = -\infty$ \\
\hline
\includegraphics[scale=0.25]{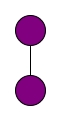} & $\lambda_{10} = -\infty$ & \includegraphics[scale=0.25]{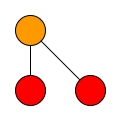} &  $\lambda_{20}  = -\infty$ \\
\hline
\includegraphics[scale=0.25]{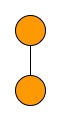} & $\lambda_{11} = -\infty$ & \includegraphics[scale=0.25]{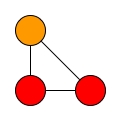} &  $\lambda_{21}  = -\infty$ \\
\hline
\includegraphics[scale=0.25]{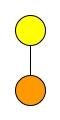} & $\lambda_{12} = -\infty$ &  & \\
\hline
\includegraphics[scale=0.25]{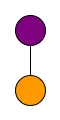} & $\lambda_{13} = -\infty$ & &  \\
\hline
\includegraphics[scale=0.25]{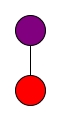} & $\lambda_{14} = -\infty$ & & \\
\hline
\end{tabular}
\caption{The set of 2nd and 3rd order graphs that are used as templates. Parameters were hand-tuned here.}
\label{tab:gt11}
\end{table}

\begin{figure}[H]
\begin{minipage}[c]{0.5\linewidth}
    \includegraphics[scale=0.23]{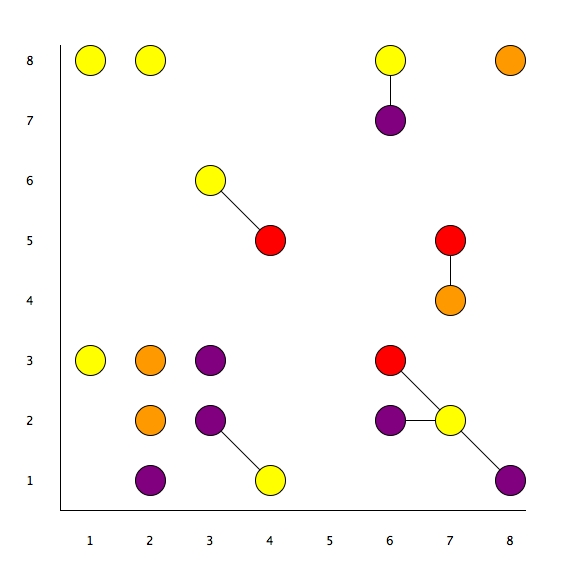}
  \end{minipage} 
  \begin{minipage}[c]{0.5\linewidth}
    \includegraphics[scale=0.23]{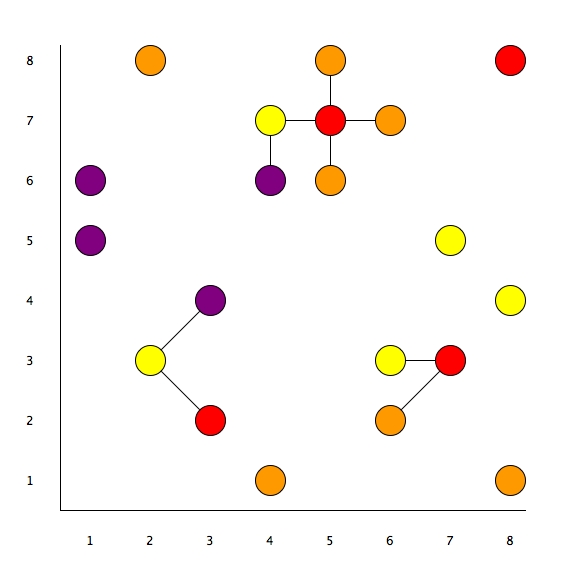}
  \end{minipage} 
  \begin{minipage}[c]{0.5\linewidth}
     \includegraphics[scale=0.23]{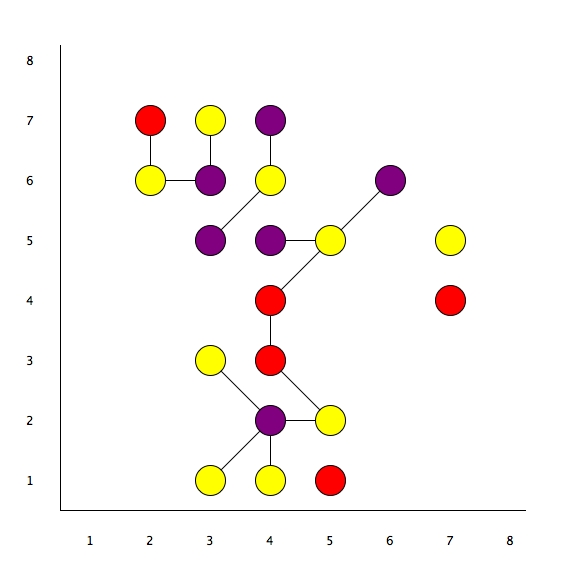}
  \end{minipage} 
  \begin{minipage}[c]{0.5\linewidth}
    \includegraphics[scale=0.23]{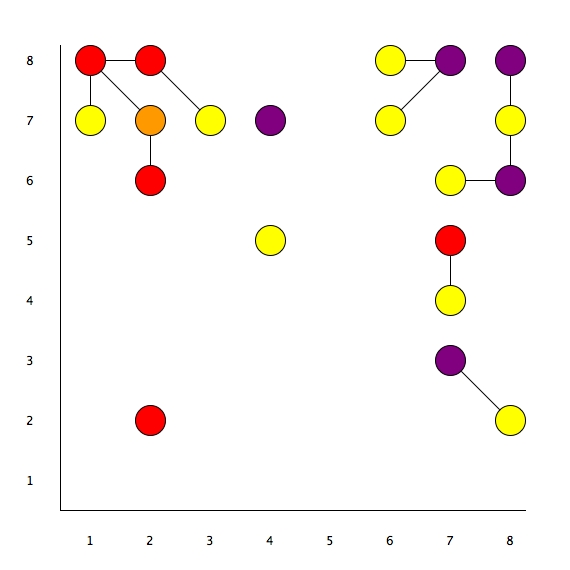}
  \end{minipage} 
    \caption{Samples from the model.} 
    \label{fig:loc}
\end{figure}

\section{Inference and Learning}
\label{sec:inference_and_learning}

In this section, we discuss inference for random graphs; for a given probability distribution, inference refers to the calculation (or estimation) of probabilities in that distribution, or more generally, of functions of probabilities in that distribution. Inference can be performed by sampling from distributions; a standard Metropolis-Hastings algorithm is presented here for this purpose. 

Next, we present a learning algorithm for random graph models; for a given model, learning refers to the selection of a particular distribution in it. A stochastic learning algorithm is presented here.
 

\subsection{Sampling}
\label{sec:sampling_1}

Suppose we have a vertex space $\Lambda_V$, an edge space $\Lambda_E$, and an attribute space $\mathcal{X}$, and let $\mathcal{G}$ be a finite graph space with respect to them. Further, suppose we have a distribution $P$ over $\mathcal{G}$ that we want to sample from. We will use Markov Chain Monte Carlo (MCMC), and in particular, the Metropolis-Hastings algorithm: \\


\begin{algorithm}[H]
\caption{Metropolis-Hastings}
Given a \textit{transition kernel} $q(G' | G)$ and starting from an initial state $G_1$, repeat the following steps from $t = 1$ to $T$:
\begin{enumerate}
\item Generate a candidate $G' \sim q(G' | G_t)$.
\item Generate $U \sim \mathcal{U}(0,1)$ and set
	\begin{align*}
		G_{t+1} = \begin{cases} 
	      			G', & \text{if } U \leq \alpha(G_t, G')  \\ 
	      			G_t, & \text{otherwise} 
			\end{cases}
	\end{align*}
	where $\alpha(G,G')$ is the \textit{acceptance probability}, given by:
	\begin{align*}
		\alpha(G,G') = \min \left \{ \frac{ P(G') \; q( G \; | G') } {P(G) \; q( G' \; | G) }, 1 \right \}.
	\end{align*}
\end{enumerate}
\end{algorithm}

This algorithm will generate a sequence $G_1, G_2, \ldots$ of dependent random graphs, and for large $t$, the graph $G_t$ will be approximately distributed according to $P$ (assuming an appropriate transition kernel). Given a graph $G = (V,X,E)$, the transition kernel will generate a proposal graph $G'$ based on simple moves. There are many possibilities, but at least for simple problems, we found the following moves suffice:
\begin{enumerate}
\item Adding a vertex $v \in \Lambda_V \setminus V$ to the vertex set and adding an attribute value $X(v)$ to the attributes (and not adding any edges, i.e., $E(v,\cdot)=0$.)
\item Deleting a vertex $v \in V$ from the vertex set and restricting it from $X$ (from among vertices with no incident edges).
\item Changing the value of an edge in $E$ (i.e., changing the value of $E(v,v') \in \Lambda_E$ for some $v,v' \in V$).
\end{enumerate}
The probability of each type of move can be uniform, although sometimes non-uniform probabilities may be preferable (e.g., assigning a greater probability to edge moves). Each move in this set has an inverse move allowing a chain to return to the previous state; hence these moves satisfy the weak symmetry property. 

\subsection{Computation}

We now consider computational efficiencies for this sampling algorithm. Suppose we have a distribution $P$ in the form of equation \ref{eqn:template_model}, i.e.:
\begin{align*}
P(G) & = \frac{1}{Z} \exp \left [ \sum \limits_{k=1}^K \lambda_k U_k(G) \right ],
\end{align*}
 and 
define $H$ as the exponent in this distribution:
\begin{align*}
H(G) =  \sum \limits_{k=1}^K \lambda_k U_k(G) .
\end{align*}
In this section, we consider the calculation of differences of the form
\begin{align*}
\bigtriangleup H \equiv H(G_{\text{new}}) - H(G_{\text{old}}),
\end{align*}
where $G_{\text{new}}$ and $G_{\text{old}}$ are graphs. 
In the naive approach, each exponent is computed separately, and the difference taken. However, the difference $\bigtriangleup H$ can be calculated efficiently by ignoring subgraphs that are shared between the two graphs. Define $J_0 \subseteq S(G_{\text{new}})$ as the set of subgraphs of $G_{\text{new}}$ that are not subgraphs of $G_{\text{old}}$; similarly, define $J_1 \subseteq S(G_{\text{old}})$ as the set of subgraphs of $G_{\text{old}}$ that are not subgraphs of $G_{\text{new}}$. That is:
\begin{align*}
J_0 & \equiv \{ G' \in S(G_{\text{new}}) \; | \; G' \notin S(G_{\text{old}}) \} \cap \mathcal{G}_{\text{basis}} \\
J_1 & \equiv \{ G' \in S(G_{\text{old}}) \; | \; G' \notin S(G_{\text{new}}) \} \cap \mathcal{G}_{\text{basis}}.
\end{align*}
Hence, we have that
\begin{align*}
\bigtriangleup H & \equiv H(G_{\text{new}}) - H(G_{\text{old}}) \\
 & = \sum \limits_{k=1}^K \lambda_k \left [ U_k(G_{\text{new}}) - U_k(G_{\text{old}}) \right ] \\
 & = \sum \limits_{k=1}^K \lambda_k \left [ \tilde{U}_k(G_{\text{new}}) - \tilde{U}_k(G_{\text{old}}) \right ] 
\end{align*}
where
\begin{align*}
\tilde{U}_k(G_{\text{new}}) & = \# \{ G' \in J_0 \; : \; R_k(G')=1 \} \\
\tilde{U}_k(G_{\text{old}})  & = \# \{ G' \in J_1 \; : \; R_k(G')=1 \}.
\end{align*}
Let's consider some examples. \\


\noindent \textbf{New Node}: Let $G_{\text{old}}$ be any graph and suppose we formed $G_{\text{new}}$ by adding a vertex $u$ to it (and possibly edges). In this case, we have that $G_{\text{old}} \subset G_{\text{new}}$ (i.e. is an induced subgraph) and hence:
\begin{align*}
J_0 & = \{ G' \in S(G_{\text{new}}) \; | \;  u \in V(G')  \} \cap \mathcal{G}_{\text{basis}} \\
J_1 & = \emptyset.
\end{align*}


\noindent \textbf{Deleted Node}: Let $G_{\text{old}}$ be a graph (with at least one vertex) and suppose we formed $G_{\text{new}}$ by deleting a vertex $u \in V(G_{\text{old}})$. In this case, we have that $G_{\text{new}} \subset G_{\text{old}}$ and hence:
\begin{align*}
J_0 & = \emptyset \\ 
J_1 & = \{ G' \in S(G_{\text{old}}) \; | \;  u \in V(G') \} \cap \mathcal{G}_{\text{basis}}.
\end{align*}


\noindent \textbf{New Edge}: Let $G_{\text{old}}$ be a graph (with at least two vertices) and suppose we formed $G_{\text{new}}$ by changing the value of an edge $E(v,v')$ for some $v,v' \in V(G_{\text{old}})$. In this case, we have that:
\begin{align*}
J_0 & = \{ G' \in S(G_{\text{new}}) \; | \;  v,v' \in V(G')  \} \cap \mathcal{G}_{\text{basis}} \\ 
J_1 & = \{ G' \in S(G_{\text{old}}) \; | \;  v,v' \in V(G')  \} \cap \mathcal{G}_{\text{basis}}.
\end{align*}

\subsection{Learning}
\label{sec:learning}

To estimate the parameters $\lambda = \{\lambda_1,\ldots,\lambda_K\}$ in the model in equation \ref{eqn:template_model}, we use the maximum likelihood estimate (MLE). Suppose we have a set of graphs $\{G_1,G_2,\ldots,G_N\}$ sampled according to $P(G ; \lambda^*)$, where $\lambda^* \in \Lambda$ and where $\Lambda \subset \mathds{R}^K$ is a compact set. The MLE is the solution to the following optimization problem:
\begin{align*}
\hat{\lambda} = \arg \max_{\lambda \in \Lambda} \prod_{i = 1}^N P(G_i; \lambda).
\end{align*}
It can be shown that the MLE is also a solution to the equations:
  \begin{align*}
  	\mathds{E}_{\lambda}[U_k(G)] = \hat{U}_k, \;  \; k=1,\ldots,K,
  \end{align*}
  where $\hat{U}$ are the empirical statistics of the features:
  \begin{align*}
   \hat{U}_k = \frac{1}{N} \sum_{i=1}^{N} U_k(G_i), \quad k=1,\ldots,K.
  \end{align*}
We run a stochastic approximation algorithm (\citep{younes1988estimation}, \citep{salakhutdinov2009learning}) for estimating the solution (see Algorithm 2 below). This stochastic algorithm uses multiple Markov chains in parallel; we find this beneficial in practice since individual chains can sometimes become stuck in certain regions for long periods of time. \\ 


\begin{algorithm}[H]
\caption{Stochastic Approximation Procedure}
\textbf{Input:} \\
Empirical statistics $\hat{U}$; \\
Initial parameters $\lambda^1$; \\
Initial set of $M$ particles $\{G^{1,1},\ldots,G^{1,M}\}$; \\


\begin{algorithmic} 
\FOR{$t = 1:T \; \;  \text{(number of iterations)}$}
	\FOR{$m = 1:M \; \;  \text{(number of parallel Markov chains)}$}
		\STATE $\text{Sample } G^{t+1,m} \text{ given } G^{t,m} \text{ using the transition operator } T_{\lambda^t}( G^{t+1,m}, G^{t,m})$
	\ENDFOR
\ENDFOR
\STATE $\text{Update: } \lambda^{t+1}=\lambda^t + \alpha_t \left [\hat{U} -  \frac{1}{M} \sum_{m=1}^{M} U(G^{t+1,m}) \right ]$
\STATE $\text{Decrease } \alpha_t$
\end{algorithmic}
\end{algorithm}

\chapter{Summary and Discussion}

In this work, we considered the statistical modeling of real-world problems that involve objects that are naturally represented by graphs of varying orders. In general, a distribution can be specified over a finite space of objects (e.g., a finite graph space), by directly assigning a probability to each object in it. Of course, for all but the smallest spaces, this is impractical, and it is essential to use invariance assumptions. 
To address this, we considered independence and factorization, the invariances used in graphical models, and observed that they can be defined in terms of projections, allowing their use in the modeling of random graphs. These invariances, for a given family of projections, can be described using only a small subset of projections, those that are atomic (with respect to this family), allowing their compact representation by a (structure) graph. We found factorization to be an easier invariance to use for graphs, at least in some problems, and illustrated their modeling with some examples.

One critique of this work may be that the formulation of graphical models discussed here is not actually an extension, and that only the original formulation of graphical models was presented, applied to a particular multivariate random variable. That is, since a multivariate random variable is a general object, almost any other object can be represented by it (e.g., for a random graph, since each of its marginal random graphs is a random variable, is representable by a multivariate random variable). We mention that the reverse is also true: a random graph is also a general object, and any multivariate random variable can also be represented by it (e.g., for a multivariate random variable, each of its individual marginal variables can be represented by an attributed vertex, and then graphs assumed to have no edges). This equivalence is examined in \citep{grenander2007pattern}, chapter 6, where it was proven more formally (for the random graph model considered in that work). In this work, we are not concerned with how a random variable is interpreted; for the purposes of statistical modeling, it is irrelevant if a variable is considered a graph or vector, it is only the structure within the object that matters. Thus, we view a graphical model as a framework for modeling any object with an appropriate structure, where this structure is defined by a projection family. The value of this more abstract viewpoint is that it focuses attention on the essence of these models, the relationship between object structure and invariance, and hence clarifies the modeling of complicated objects such as graphs and trees. 

Lastly, we note that although we limited our attention here, for the most part, to the invariances used in graphical models, that there may exist others that result in similar modeling frameworks.  
A model is a set of distributions, and a modeling framework a set of models, usually defined in terms of an invariance. To design a (useful) modeling framework, an invariance must be identified such that: (a) it is applicable to many problems; and (b) it can be made at varying degrees, creating a range of model complexities, and allowing practitioners to adjust models to a given problem. 
It would be interesting to investigate other invariances, besides independence and factorization, that have hierarchical structures. 
One could imagine, if such invariances could be identified, the development of frameworks similar to graphical models. Perhaps, given an appropriate hierarchical structure, these invariances would be also representable by graphs, in which case, the graphical model framework itself would be expanded. 

\section{Extended Discussion}

We address questions we have received and elaborate on some possible points of confusion in this section. 

\subsection{Framework Merit}

  \textbf{Q}: Since some random objects can be converted to random vectors, the extended formulation of graphical models is equivalent, in some instances, to the original. What is the value of using the more abstract formulation? \\
  
\noindent \textbf{A}: This question, in essence, is asking about how we should define graphical models and why is this abstract formulation even necessary. There are a few ways in which the more abstract formulation might be of value. 
If we take graphical models to be, at its core, a modeling framework based on invariances between random variables in some family, then the question of how to define these models reduces to the question of what properties this family should satisfy. Given a distribution over a countable space $\Omega$, any function on this space defines a random variable, and any family of functions defines a family of random variables. In this work, for a family of functions to produce an appropriate family of random variables for graphical models, we considered the fundamental properties to be that it is finite, consistent, and complete (Section \ref{sec:General_Random_Objects}). 
These latter properties are also pertinent 
to spaces that have infinite projections, rather than just a finite set. This is the case, for example, in stochastic processes where $\Omega$ is an uncountable space of functions, and has an infinite set of consistent projections on it, where each restricts functions in this space to a finite set of points in their domain (i.e., the projections that define the finite-dimensional distributions.) 
Since the concept of consistent projections occurs throughout probability theory, there appears to be conceptual value in aligning the definition of graphical models with it, rather than confining them to coordinate projections on product spaces. 

From a more applied viewpoint, the value of a general framework is partly related to the degree that it compresses knowledge, which can for example, illuminate similarities and differences in somewhat disconnected models. 
In the formulation of graphical models given here, many classic random graph models are covered; more importantly though, it provides insight into modeling random graphs in which the graphs vary in order, an important problem that has received less research attention (we defer discussion of this topic to the next section). This framework for random graphs also includes random trees as a particular instance, a desirable property since a tree is a type of graph, and it covers some classic random tree models (e.g., probabilistic context-free grammars). The framework is based on fundamental invariances for structured objects, where the necessary structure has been defined. 
Finally, due to their importance, many inference algorithms are specifically designed with these invariances in mind. 

On a more technical note, even if a random object can be mapped to a random vector, this does not imply that graphical models for the random object are equivalent to graphical models for the corresponding random vector. A basic property of graphical models is that they specify sufficient conditions for the invariances they use to be consistent (i.e., consistent in the sense that there exists a distribution that satisfies them). For example, in Bayesian networks for random vectors, the structure graph encodes a set of consistent independence assumptions if the structure graph is acyclic. For other random objects, however, this statement does not necessarily hold. 
This can be seen, for example, in random graphs; since a graph has structural constraints imposed by the dependence of edges on vertices, for structure graphs to be valid, the invariances they specify cannot violate these structural constraints (Section \ref{sec:Bayesian_Networks}). Similarly, in random trees, there are structural constraints imposed by the dependence of a vertex on its ancestors. Since structural constraints of objects in a space $\Omega$ can be described by a projection family on it, the more general formulation of graphical models can ensure, since they are defined in terms of these projection families, the consistency of their invariances on more general random objects. 

\subsection{Random Vertices}

  \textbf{Q}: In the random graph models discussed in this work, the vertex set can be random. Why is this of interest? \\

\noindent \textbf{A}: There are two problem paradigms in which random graphs are applicable, the traditional one in which graphs have a fixed order and randomness only on the edges, and the less established one in which graphs can have variable order. We will compare these two paradigms in more detail in the next section. For the moment though, we note that this work will not be useful to those interested in the former problem. Rather, the focus here is on the latter, and those interested in that problem are the intended audience.

Applications of random graphs with random vertices are studied, for example, within the field of statistical relational learning \citep{getoor2007introduction}. One application is the statistical modeling of a set $\Omega$ of real-world scenes. Suppose scenes are composed of objects with attributes (e.g., child wearing a hat, blue sedan, etc.), and relationships between objects (e.g., holding hands, driving, etc.), and further, scenes vary in the objects in them (e.g., a scene may be empty or may have numerous objects). These scenes can be represented by graphs, where vertices represent objects, attributes of vertices represent attributes of objects (including their type), and edges represent relationships between objects. Thus, modeling these scenes corresponds to modeling random graphs in which graphs vary in their order. In the literature, many approaches to this problem are based on modeling a selected set of conditional distributions, where each is conditioned on the objects in the scene. For example, in probabilistic relational models \citep{getoor2001learning}, these conditional distributions are specified using templates and assuming repeated structure. In this work, we considered a formulation of graphical models that would allow us to model full distributions over this type of space.

It is worth mentioning that this formulation of graphical models may be relevant to applied research in \textit{probabilistic logic}, a field that couples probability and logic (an overview is given in \citep{russell2015unifying}). 
Notice that if we have a probability distribution over a countable space $\Omega$, then sentences of a logical language about this space (where sentences correspond to binary functions of the form $f:\Omega \to \{0,1\}$), can be assigned probabilities. (That is, a sentence assigned probability equal to the probability of the subset of $\Omega$ in which it is true.) 
If, on the other hand, the distribution over a space $\Omega$ is unknown and we want to learn it, logical expressions can be used to express invariances 
(i.e., constraints, structure) in the distribution. For example, invariances can be defined on distributions by constraining the distributions to those that assign certain probabilities to certain sentences. More generally, invariances can be defined on these distributions in terms of logical expressions about the distribution itself (\citep{fagin1990logic}, \citep{halpern1990analysis}), referred to as \textit{probability expressions} (these correspond to functionals of the form $f(P) \mapsto \{0,1\}$, where $P$ is a distribution over $\Omega$). In other words, distributions are constrained to only those in which some set of probability expressions are true. Thus, probabilistic logic can be viewed as a modeling framework based on general invariances, as expressed by logical expressions about the distribution. (In contrast, graphical models is a framework based on the less expressive invariances of independence and factorization.)

This level of expressiveness in invariances, however, can result in the specification of a set of invariances that is inconsistent in the sense that there does not exist a well-defined distribution that satisfies it. This problem has led researchers to consider forgoing some of the expressive power in these logics to ensure consistency, and since graphical models provide consistency guarantees for their invariances (see previous section), to research extensions of graphical models to more general spaces. Of particular interest are extensions to spaces $\Omega$ containing structured objects not of a fixed size (such as, for example, the real-world scenes described above), and an example is the template-based graphical models mentioned above. 

In \citep{milch20071}, an extension of Bayesian networks is proposed for modeling full distributions over such spaces, referred to as Blog, where probabilities are placed on objects in $ \Omega$ based on how they are incrementally constructed from some generative process. 
In this work, an extension was also proposed, and it is instructive to examine the differences between these two general frameworks. 
Firstly, the formulation here is based on finite families of functions on the space $\Omega$, whereas theirs extends to infinite families. (It would be interesting to expand the formulation here in this regard.) 
Secondly, in Blog, the function families are not required to be consistent (i.e., the formulation is not in terms of families of marginal random variables, but rather general random variables). A family of random variables that is inconsistent (among the subset that is atomic) generally produces an inefficient representation of invariances in a distribution. This is not necessarily a problem, but since the traditional formulation of graphical models is in terms of marginal variables, we defined our extension in terms of them as well. Finally, the formulation here yields other forms of factorization (i.e., undirected models where probabilities are placed on objects in $\Omega$ based on how they deconstruct (factorize) into a set of parts). 





\subsection{Consistent Distributions}



\textbf{Q}: In \citep{shalizi2013consistency}, it was observed that for random graphs, if a set of distributions from the same exponential family all have the same parameter values, then these distributions are inconsistent, except under very special circumstances. How does the proposed framework address this consistency problem? \\ 

\noindent \textbf{A}: To answer this question about the consistency of distributions, it is constructive to consider it from two perspectives. As mentioned above, there are two paradigms for random graphs, the traditional one in which graphs have a fixed order and randomness only on the edges, and the less established one in which graphs have randomness on both vertices and edges. Let's consider consistency in each. For simplicity, suppose we have a vertex space $\Lambda_V$ and edge space $\Lambda_E$ that are both finite. 


In the random vertices setting, a random graph has a distribution $P$ over a graph space
  \begin{align*}
\mathcal{G} = 
 \left\{ G=(V,E) \; | \; 
 \begin{array}{l}
     V \subseteq \Lambda_V \\ 
     E: V \times V \rightarrow \Lambda_E 
 \end{array} 
  \right\},
 \end{align*}
containing graphs of varying order. In this case, we can form conditional distributions of the form $P_V(E) \equiv P(E | V)$ by conditioning on the vertices $V \subseteq \Lambda_V$ in a graph. A set of conditional distributions $\{P_V, \; V \subseteq \Lambda_V\}$ is consistent if there exist a full distribution $P$ that produces it. Since the random graph models in this work define full distributions over $\mathcal{G}$, any conditional distributions induced from them will be consistent, and hence this consistency problem is not an issue. (Recall, one of our motivations for modeling full distributions was to avoid these difficult consistency problems.)

In the traditional setting, a random graph has a distribution $P$ over a graph space 
\begin{align*}
\mathcal{G}_{\Lambda} = \{G \in \mathcal{G} \; | \; V(G) = \Lambda_V\},
\end{align*}
containing only graphs with $n = |\Lambda_V|$ vertices, where $n$ is large and possibly infinite (for simplicity, we assume finiteness here). For modeling purposes, we can define projections to substructures of this graph space as follows. For a set of vertices $V \subseteq \Lambda_V$, let the substructure projection $\pi_V:\mathcal{G}_{\Lambda} \to \mathcal{G}_V$, where $\mathcal{G}_V = \{G \in \mathcal{G} \; | \; V(G) = V\} \not \subseteq \mathcal{G}_{\Lambda}$, be defined as $\pi_V(G) = G(V)$ (i.e., the subgraph of $G$ induced by the vertices $V$). If we have a distribution $P$ over $\mathcal{G}_{\Lambda}$, then the projections $\pi_V$, $V \subseteq \Lambda_V$ define a set of marginal distributions $P_V^{\text{marg}}$, $V \subseteq \Lambda_V$. On the other hand, if we do not have the distribution $P$ and we want to determine it through the specification of a set of distributions $P_V^{\text{marg}}$, $V \subseteq \Lambda_V$ (and using an extension theorem in the infinite case), we must take care to ensure we specify a set that is consistent. In \citep{shalizi2013consistency}, it was shown that if a set of distributions from the same exponential family all use the same parameter values for each distribution $P_V^{\text{marg}}$ in the set, then it is not consistent except under special circumstances. This result shows more about the difficulty of directly specifying a set of consistent distributions (especially when these distributions are assumed invariant to isomorphisms, see next section) than any limitations of exponential models. 

It is worth mentioning that this set of projections to substructures (i.e., the set $\{\pi_V, V \subseteq \Lambda_V\}$) is consistent and complete, and so graphical models, as formulated in Section \ref{sec:General_Random_Objects}, is applicable to them when this projection family is finite. However, the formulation of random graphs presented here based on variable-order graph spaces and subset projections (rather than fixed-order graph spaces and substructure projections) may provide a more interesting vantage point since it includes random trees as an instance of it.

\subsection{Degeneracy}

 \textbf{Q}: In the literature, many of the proposed exponential random graph models suffer from a `degeneracy' problem. Since the random graph models in this work are exponential models, why are they not degenerate?\\

\noindent \textbf{A}: It is important to stress upfront that degeneracy has nothing to do with exponential models, but rather is an issue most acutely affecting unattributed random graph models that use the isomorphism invariance (i.e., the assumption that all isomorphic graphs have the same probability). The formulation of graphical models for random graphs (Section 2) does not use this invariance, and so is not degenerate. Further, if the graphs are attributed, then even if models use isomorphism invariances (Section \ref{sec:attributed_graph_isos}), 
the probability of a graph will still depend on its vertex set beyond its cardinality; latent variables are now associated with the vertices, providing a means for models to differentiate vertices from each other. In the literature, many models use attributes (latent variables) and are not considered degenerate (although, most of these are not high order models). 
In Section 5, we gave five random graph examples; all but the first example, which was a toy problem for illustration, used attributed graphs (as well as constrained graph spaces). We now let the discussion proceed to the setting in which the degeneracy problem appears, at least in an acute form.

To begin our discussion, it's useful to illustrate the degeneracy problem with an example. Suppose we have a vertex space $\Lambda_V = \mathbb{N}$ and edge space $\Lambda_E = \{0,1\}$, and let $\mathcal{G}$ be the graph space containing all graphs with respect to them. For a finite set of vertices $V \subset \Lambda_V$, let $P_V$ denote the following conditional distribution:
\begin{align*}
P_V(E) \equiv P(E|V),
\end{align*}
where $E$ can be any function of the form $E: V \times V \to \Lambda_E$. Assume that these conditional distributions are invariant to isomorphisms:
\begin{align*}
G_0 \simeq G_1 \implies P_{V_0}(E_0)=P_{V_1}(E_1),
\end{align*}
where $G_0 = (V_0,E_0)$ and $G_1 = (V_1,E_1)$. For all vertex sets of a given cardinality $n \in \mathbb{N}$, since these distributions only depend on the edge structure, and not on the particular vertices, we can assume the vertex set is $V=\{1,\ldots,n\}$ and index the conditional distributions by $P_n \equiv P_V$. Now consider the exponential model studied in \citep{handcock2003assessing}; suppose the model has two sufficient statistics, the number of edges and the number of `2-stars' in a graph:
\begin{align*}
P_{n}(E) & = \frac{1}{Z(n,\lambda)} \exp \left [ \lambda_1 f_1(E) + \lambda_2 f_2(E) \right ] \\
f_1(E) & = \sum \limits_{i < j} E(i,j) \\
f_2(E) & = \sum \limits_{i<j<k} E(i,j) E(i,k)
\end{align*}
where $f_1$ is the number of edges and $f_2$ is the number of 2-stars (i.e., edges sharing a common vertex), and where $\lambda = (\lambda_1,\lambda_2)$ is a set of real parameters. This model was analyzed when the number of vertices $|V|=7$, and even at this small size, the following problems were observed. First, only a small subset of the parameter space for this model corresponds to distributions that are not degenerate, where loosely speaking, a distribution is considered degenerate if it places almost all of its mass on either the empty graph or the complete graph; these distributions are uninteresting from a modeling perspective. Second, small changes in the parameters can result in distributions that are dramatically different and learning algorithms often do not converge or take extremely long to do so. 

Since the practical use of models that suffer from these degeneracy issues is severely limited, developing an understanding of it is critical. A formal definition concerning this degeneracy appears in the works \citep{strauss1986general} and \citep{schweinberger2011instability}. We first present this definition, before making comments. 
Suppose we have a set of distributions $P_{n}$, $n \in \mathbb{N}$, and let
\begin{align*}
P_{n}^* = \max_{E} P_{n}(E)
\end{align*}
denote its maximum value. Further, let
\begin{align*}
\mathcal{M}_n = \{ E \; : \; P_{n}(E) = P_{n}^* \}
\end{align*}
be the subset of modes, and for any $0 < \epsilon < 1$, let 
\begin{align*}
\mathcal{M}_{\epsilon,n} = \{ E \; : \; P_{n}(E) > (1-\epsilon) P_{n}^* \}
\end{align*}
be the subset of $\epsilon$-modes for $P_{n}$. The following definition is based on the observation that degenerate exponential-family distributions tend to concentrate almost all of their mass on the distributions modes: 

\begin{definition}[\citep{strauss1986general}, \citep{schweinberger2011instability}]
A set of distributions $P_{n}$ is degenerate if, for any $0 < \epsilon < 1$, we have
\begin{align*}
P_{n}( \mathcal{M}_{\epsilon,n}) \longrightarrow 1 \text{ as } n \longrightarrow \infty.
\end{align*}
\end{definition}

We now make some comments. First, notice that by this definition, every distribution in the Erd\H os-R\'enyi model is degenerate. (This follows from well-known results about typical sets for sequences of independent and identically distributed random variables \citep{cover2012elements}, chapter 3.) This suggests that the characterization of degeneracy in this definition differs from our intuition: there appears to be a fundamental difference between the Erd\H os-R\'enyi model and the 2-star model described above, and this should be captured in any definition. 

Second, this definition only concerns degeneracy of a distribution, whereas it appears that degeneracy should be a property of a model (i.e., a set of distributions). For example, in the empirical studies performed in \citep{handcock2003assessing}, it was observed that nearly all distributions in the model placed the majority of their mass on either the empty graph or the complete graph. Hence, it is not the fact that the distributions in this model are `degenerate' in the sense of the above definition that is interesting, but rather that (1) most distributions in this model have the exact same modes, i.e., the empty or complete graph; and (2) the transition between these two disparate distributions can occur suddenly in the parameter space. This suggest that a more suitable definition would concern phase-transitions, i.e., loosely, the existence of singularities in the parameter space. This might be defined as the existence of a point in the parameter space such that arbitrary small balls around it contain points corresponding to dramatically different distributions, taking limits appropriately.

A phase-transition definition appears to capture our intuitive idea of what constitutes degenerate and non-degenerate models. For example, the Erd\H os-R\'enyi model would not be classified as degenerate by this definition. Consider another example; suppose we want to extend the second-order Erd\H os-R\'enyi model to a third-order one. There are four relevant statistics to consider: (a) the number (of third-order subgraphs) with three edges; (b) the number with two edges; (c) the number with one edge; and (d) the number with no edges. Adding any three of these four statistics to the Erd\H os-R\'enyi model results in a set of sufficient statistics for the extended model. The important thing to notice here is that these sufficient statistics counterbalance each other: an increase in the parameter corresponding to the number of triangles can be offset, loosely speaking, by an increase in the parameter corresponding to the number of empty (third-order) graphs. This balance in the set of sufficient statistics suggests that distributions in this model change smoothly in the parameter space (when parameters are non-zero), and thus this model does not contain singularities, given an appropriate definition. In turn, this suggests that learning (e.g., maximum likelihood estimation) is feasible and that this model may be useful in practice. 

Lastly, let's consider model complexity; in the example in Section \ref{sec:example2_molecule_graphs}, it was observed that adding attributes to unattributed graphs, loosely speaking, allows them to be modeled using lower order models. Since, for any model using latent variables, there exists a more complex model without latent variables that is equivalent, this suggests that models using unattributed graphs may require, to produce equivalent distributions, extremely high orders.

\begin{acknowledgements}
\addcontentsline{toc}{chapter}{Acknowledgements} 
I am indebted to Professors Laurent Younes and Donald Geman, for looking at very preliminary and vague versions of this work, 
and their helpful discussions and insights.
\end{acknowledgements}

\appendix
\chapter{Statistical Invariances}
\label{sec:stat_invariances}



Graphical models are a tool that use conditional independencies to produce distributions with compact representations, promoting learning and inference. 
More generally, however, any statistical invariance - not just conditional independence - may be used to compress representations and ease learning. 
One example is in random graphs, where a common assumption is that the probability of a graph only depends on its edge structure, and not on the particular labeling of the vertices; distributions are assumed invariant to graph isomorphisms (see Section \ref{sec:chap2_isomorphisms}). Another example is in the modeling of the spatial configuration of objects in street-scene imagery, where a simple assumption is that there's a symmetry to the horizontal location of objects, e.g. the probability of a person on the left half of an image is equal to the probability of a person on the right half \citep{geman2015visual}. Naturally, it is beneficial for practitioners to incorporate as many valid invariances as possible into their models. 

In this appendix, the goal is to simply provide a formulation of invariance that encompasses these ideas. We begin by considering some basic definitions of invariance involving transforms and then proceed to more complicated definitions of invariance involving functions and distributions. In Section \ref{sec:d}, we show that independence and conditional independence are particular instances of this general formulation. In Section \ref{sec:moment_invariance}, we consider invariances that are functions of the probability space, which we refer to as moment invariances.


\section{Invariant Transformations}

For a given probability space, a statistical invariance is some property of objects in it such that, loosely speaking, objects that share the same property have the same probability. A property of a space $\mathcal{X}$ is said to be invariant under a transformation if the transformation preserves that property. In general, a property may be considered as an equivalence class in which $x_1,x_2 \in \mathcal{X}$ share the property if and only if $x_1$ and $x_2$ are in the same equivalence class. In other words, define a property on $\mathcal{X}$ as an equivalence class, or likewise, as an equivalence relation: 

\begin{definition}[Equivalence Relation]
An \textit{equivalence relation} on $\mathcal{X}$ is a symmetric, reflexive, and transitive binary relation $\sim$ on $\mathcal{X}$. For any $x_1,x_2 \in \mathcal{X}$, we say $x_1$ and $x_2$ have the same property if $x_1 \sim x_2$.
\end{definition}

Now, we can define an invariant transform: 

\begin{definition}[Invariant Transform]
A transform $T: \mathcal{X} \rightarrow \mathcal{X}$ is \text{invariant} to an equivalence relation $\sim$ if for all $x_1,x_2 \in \mathcal{X}$, we have
\begin{align*}
x_1 \sim x_2 \implies T(x_1) \sim T(x_2). 
\end{align*}
\end{definition}




\section{Invariant Functions} 

We defined invariances for functions of the form $T: \mathcal{X} \rightarrow \mathcal{X}$. Now, let's extend the notion of invariance for a general function $f: \mathcal{X} \rightarrow \Psi$ in which the domain and codomain are not necessarily the same space. To define an invariance on a general function, we'll need two equivalence relations, one on the domain $\mathcal{X}$ and one on the codomain $\Psi$: denote these two equivalence relations as $\sim_{\mathcal{X}}$ and $\sim_{\Psi}$, respectively. Now, let's define an invariant function: 

\begin{definition}[Invariant Function]
A function $f: \mathcal{X} \rightarrow \Psi$ is \text{invariant} to equivalence relations $\sim_{\mathcal{X}}$ and $\sim_{\Psi}$ if for all $x_1,x_2 \in \mathcal{X}$, we have
\begin{align*}
x_1 \sim_{\mathcal{X}} x_2 \implies f(x_1) \sim_{\Psi} f(x_2).
\end{align*}
\end{definition}
We'll find this definition useful in defining invariant distributions.

\subsection{Invariant Probability Mass Functions}


In this section, we consider the invariance of probability mass functions (pmf); for ease of exposition, we skip the details about more general distributions. 
Let $(\mathcal{X},\Sigma_{\mathcal{X}},P)$ be a probability space in which $\mathcal{X}$ is countably infinite and $\Sigma_{\mathcal{X}}$ is the power set of $\mathcal{X}$.
As above, to define the invariance of a function $P:\mathcal{X} \rightarrow [0,1]$, we will need to specify equivalence relations on both $\mathcal{X}$ and $[0,1] \subset \mathbb{R}$. Suppose we have some equivalence relation $\sim_{\mathcal{X}}$ on $\mathcal{X}$, and for the equivalence relation on $\mathbb{R}$, let $\sim_{ \mathbb{R}}$ be the equality relation (i.e. for any $r_1, r_2 \in \mathbb{R}$, let $r_1 \sim_{ \mathbb{R}} r_2$ if and only if $r_1 = r_2$). For simplicity, we will always use the equality relation as the relation on the real numbers $ \mathbb{R}$. Thus, we have: 

\begin{definition}[Invariant pmf's]
\label{def:regular_def}
A pmf $P$ is \textit{invariant} to $\sim_{\mathcal{X}}$ if for all $x_1,x_2 \in \mathcal{X}$, we have:
\begin{align*}
x_1 \sim_{\mathcal{X}} x_2 \implies P(x_1) = P(x_2).
\end{align*}
\end{definition}

%

Let's consider examples. 

\begin{example}[Symmetry]
Let $\mathcal{X}= \mathbb{Z}$ be the integers, and define the following equivalence relation: for every $x_1,x_2 \in \mathcal{X}$:
\begin{align*}
x_1 \sim_{\mathcal{X}} x_2 \Longleftrightarrow |x_1| = |x_2|.
\end{align*}
Thus, a pmf $P$ that is invariant to this equivalence relation will have a symmetry in which $P(x)=P(-x)$ for all $x \in \mathcal{X}$. If one is trying to estimate $P$ from data, this symmetry can be utilized to improve the estimate, as is the case for any invariance.
\end{example}

\begin{example}[Graph Isomorphisms]
Let $\mathcal{X}= \mathcal{G}$ be a graph space, and define the following equivalence relation: for every $G_1,G_2 \in \mathcal{G}$:
\begin{align*}
G_1 \sim_{\mathcal{X}} G_2 \Longleftrightarrow G_1 \text{ is isomorphic to } G_2.
\end{align*}
Thus, a pmf $P$ that is invariant to this equivalence relation will have an invariance in which all graphs that are isomorphic to each other have the same probability. Many random graph models use this invariance (e.g., the Erd\H os-R\'enyi model). 
\end{example}





\section{Conditional Invariance} 
\label{sec:d}

In this section, we consider conditional and marginal invariances. Then we define independence and conditional independence, two special cases. 
Let $(\Omega,\Sigma_{\Omega},\mathbb{P})$ be a probability space and let $\mathcal{X}$ be a countably infinite space. Further, let $\bold{X}:\Omega \rightarrow \mathcal{X}$ be a measurable function (i.e. a discrete random variable) with a pmf $P_{\mathcal{X}}: \mathcal{X} \rightarrow [0,1]$, where
\begin{align*}
P_{\mathcal{X}}(x) = \mathbb{P}(\{w \in \Omega \; : \; \bold{X}(w)=x\}), \; x \in \mathcal{X}.
\end{align*}
Thus, we may form the probability space $(\mathcal{X},\Sigma_{\mathcal{X}},P_{\mathcal{X}})$, where $\Sigma_{\mathcal{X}}$ is the power set of $\mathcal{X}$. Furthermore, for any $A \in \Sigma_{\Omega}$ such that $\mathbb{P}(A)>0$, we may form the conditional probability:
\begin{align*}
P_{\mathcal{X}}(x|A) & = \mathbb{P}(\{w \in \Omega \; : \; \bold{X}(w)=x\} \; | \; \{w \in A\} ) \\
 & = \frac{\mathbb{P}(\{w \in A \; : \; \bold{X}(w)=x\}) }{ \mathbb{P}(\{w \in A\})  } 
\end{align*}
where $x \in \mathcal{X}$. Finally, suppose we have an equivalence relation $\sim_{\Sigma_{\Omega}}$ on $\Sigma_{\Omega}$. Then, we may define conditional invariance as follows. 

\begin{definition}[Conditional Invariance]
\label{def:conditional_def}
A distribution $P_{\mathcal{X}}$ is \textit{conditionally invariant} to $\sim_{\Sigma_{\Omega}}$ if, for all $A_1,A_2 \in \Sigma_{\Omega}$, we have:
\begin{align*}
A \sim_{\Sigma_{\Omega}} A_2 \implies P_{\mathcal{X}}(\cdot|A_1) = P_{\mathcal{X}}(\cdot|A_2).
\end{align*}
\end{definition}

Now, we consider distributions that have both regular invariances and conditional invariances.

\subsection{General Invariance}

Thus far, we have described a regular invariance (Definition \ref{def:regular_def}) and a conditional invariance (Definition \ref{def:conditional_def}); now, suppose we want to combine these.
As above, suppose we have the two probability spaces:
\begin{align*}
& (\Omega,\Sigma_{\Omega},\mathbb{P}) \\
& (\mathcal{X},\Sigma_{\mathcal{X}},P)
\end{align*}
where $P$ was the distribution induced by $\mathbb{P}$. Now, we will define an invariance in terms of an equivalence relation $\sim_{\Sigma}$ on the product space $\Sigma = \Sigma_{\Omega} \times \Sigma_{\mathcal{X}}$. The general definition of an invariance is as follows. 

\begin{definition}[General Invariance]
A distribution $P$ is \textit{invariant} to $\sim_{\Sigma}$ if for all $(A_1,B_1),(A_2,B_2) \in  \Sigma_{\Omega} \times \Sigma_{\mathcal{X}}$ we have:
\begin{align*}
(A_1,B_1) \sim_{\Sigma} (A_2,B_2) \implies P(B_1|A_1) = P(B_2|A_2). 
\end{align*}
\end{definition}





\begin{example}[Independence]
Suppose we have an equivalence relation $\sim_{\Sigma}$ on the product space $\Sigma = \Sigma_{\Omega} \times \Sigma_{\mathcal{X}}$. Let $A_1,A_2 \in \Sigma_{\Omega}$ and $B_1,B_2 \in \Sigma_{\mathcal{X}}$, and suppose that
\begin{align*}
A_2 = \Omega \text{ and } B_1 = B_2.
\end{align*}
Then, the event $B_1$ is said to be \textit{independent} of the event $A_1$ if $(A_1,B_1) \sim_{\Sigma} (A_2,B_2)$. That is, if $B_1$ is independent of $A_1$, then:
\begin{align*}
P(B_1|A_1) = P(B_1).
\end{align*}
\end{example}

\begin{example}[Conditional Independence]
Suppose we have an equivalence relation $\sim_{\Sigma}$ on the product space $\Sigma = \Sigma_{\Omega} \times \Sigma_{\mathcal{X}}$. Let $A_1,A_2 \in \Sigma_{\Omega}$ and $B_1,B_2 \in \Sigma_{\mathcal{X}}$, and suppose that
\begin{align*}
A_2 \subset A_1 \text{ and } B_1 = B_2.
\end{align*}
Then, the event $B_1$ is said to be \textit{conditionally independent} of the event $A_1 \setminus A_2$, given the event $A_2$ if $(A_1,B_1) \sim_{\Sigma} (A_2,B_2)$. That is, if $B_1$ is independent of $A_1 \setminus A_2$ given $A_2$, then:
\begin{align*}
P(B_1|A_1) = P(B_1|A_2).
\end{align*}
\end{example}

\section{An Alternative Formulation}

From the perspective of estimation, a pmf $P$ that is invariant to an equivalence relation $\sim_{\mathcal{X}}$ on $\mathcal{X}$ allows the production of additional data from our limited samples. That is, suppose we have a sample $x \in \mathcal{X}$; then, there exists a replication procedure on this example $x$ such that the equivalence class is preserved in each replicated version of the data. That is, we could define a transformation that takes any $x \in \mathcal{X}$ to the set $\mathcal{X}' = \{x' \in \mathcal{X} \; : \; x' \sim x \}$. In this section, we present an alternative representation of an equivalence relation $\sim_{\mathcal{X}}$; this representation will not be useful in and of itself, but does motivate more elaborate invariances for modeling purposes, which we consider in the next section.

As above, let $\mathcal{X}$ be a countably infinite space, and now define $\mathcal{T}$ to be the space of bijective transformations of the form $T:\mathcal{X} \rightarrow \mathcal{X}$. Notice that an equivalence relation $\sim_{\mathcal{X}}$ on $\mathcal{X}$ corresponds to a subset $\mathcal{T}_0 \subset \mathcal{T}$ where:
\begin{align*}
\mathcal{T}_0 = \{ T \in \mathcal{T} \; | \; x \sim_{\mathcal{X}} T(x) \text{ for all } x \in \mathcal{X} \}.
\end{align*}
In other words, $\mathcal{T}_0$ contains all bijective transformations that are consistent with the equivalence relation. Thus, the equivalence relation $\sim_{\mathcal{X}}$ induces a binary partition of $\mathcal{T}$. Furthermore, this binary partition may be represented by an equivalence relation $\sim_{\mathcal{T}}$ on $\mathcal{T}$. That is, define $\sim_{\mathcal{T}}$ as follows: for all $T_1,T_2 \in \mathcal{T}$, let
\begin{align*}
T_1 \sim_{\mathcal{T}} T_2 \; \Longleftrightarrow \; T_1, T_2 \in \mathcal{T}_0 \;  \text{ or } \; T_1=T_2.
\end{align*}
Using this equivalence relation on transforms, we can make an alternative definition of an invariant pmf. 

\begin{definition}[Invariant pmf's]
Suppose we have a space $\mathcal{T}$ composed of bijective transformations of the form $T: \mathcal{X} \rightarrow \mathcal{X}$ and suppose that we have an equivalence relation $\sim_{\mathcal{T}}$ on $\mathcal{T}$. A pmf $P:\mathcal{X} \rightarrow [0,1]$ is \textit{invariant} to $\sim_{\mathcal{T}}$ if for all $T_1,T_2 \in \mathcal{T}$, we have: 
\begin{align*}
T_1 \sim_{\mathcal{T}} T_2 \implies P(T_1(x)) = P(T_2(x)) \text{ for all } x \in \mathcal{X}. 
\end{align*}
\end{definition}

Although this alternative definition of an invariance is not particularly interesting in its own right\footnote{Suppose an equivalence relation $\sim_{\mathcal{T}}$ is constructed as described in this section. Then the partition of $\mathcal{T}$ that corresponds to the equivalence relation $\sim_{\mathcal{T}}$ can only have one set of size greater than one (the set in which the identity transform $T(x)=x$ belongs). Hence, this definition is not particularly useful.}, it motivates the concept of invariances of moments, which we now consider. 

\section{Moment Invariance}
\label{sec:moment_invariance}

In this section, we consider an important type of invariance of a distribution, that of moment invariance. As above, suppose we have a countably infinite space $\mathcal{X}$ and a pmf $P_{\mathcal{X}}$ over it. Now, suppose we have: (1) a space $\mathcal{F}$ composed of functions of the form $f: \mathcal{X} \rightarrow \mathbb{R}$; and (2) an equivalence class $\sim_{\mathcal{F}}$ on $\mathcal{F}$. Define moment invariance as follows: 

\begin{definition}[Moment Invariance]
\label{def:moment_invariance}
Suppose we have a set of real functions $\mathcal{F}$ and an equivalence class $\sim_{\mathcal{F}}$ on it. A pmf $P$ is \textit{moment invariant} to $\sim_{\mathcal{F}}$ if for all $f_1,f_2 \in \mathcal{F}$, we have:
\begin{align*}
f_1 \sim_{\mathcal{F}} f_2 \implies E_P(f_1(\bold{X})) = E_P(f_2(\bold{X})).
\end{align*}
The functions in $\mathcal{F}$ will be referred to as \textit{features}.
\end{definition}

If we compare this definition of invariance to the one in the previous section, we see that it follows naturally. Let's now consider some examples.


\begin{example}[Symmetric Expectations]
Let $\mathcal{X}=\mathbb{Z}$ be the integers. Define
\begin{align*}
f_1(x) & = |x| \cdot I_{\{x < 0\}} \\
f_2(x) & =  |x| \cdot I_{\{x \geq 0\}},
\end{align*}
and let $\mathcal{F} = \{f_1,f_2\}$ and let $f_1 \sim f_2$. If a pmf $P$ is moment invariant to $\sim$, then we have that
\begin{align*}
E_P(|\bold{X}| \cdot I_{\{\bold{X} < 0\}}) = E_P(|\bold{X}| \cdot I_{\{\bold{X} \geq 0\}}).
\end{align*}
Notice that this is a less stringent invariance compared to the previous example where $P$ was required to be symmetric around the axis; here we only require $P$ to be symmetric around the axis in the sense that the expected value over the negative numbers has the same magnitude as the expected value over the positive numbers. 
\end{example}

\begin{example}[Marginal Expectations]
Suppose $\mathcal{X} =\mathbb{Z}^n$, and let $\mathcal{F} = \{f_1,\ldots,f_n\}$ be a set of real function over $\mathcal{X}$ such that each $f_i(x) = x_i$ is the projection of $x \in \mathcal{X}$ onto its $i$th factor. Further, suppose we define the trivial equivalence class on $\mathcal{F}$ in which all functions are in the same equivalence class (i.e. $f_i \sim f_j$ for all $i,j$). Then, if $P$ is moment invariant to this equivalence class, we have
\begin{align*}
E_P(\bold{X}_1) = \ldots = E_P(\bold{X}_n).
\end{align*}
That is, the expected values of all marginal random variables are the same. 
\end{example}

\begin{example}[Marginal Distributions]
Suppose we want a distribution $P$ over the space $\mathcal{X} =\mathbb{Z}^n$ to have the invariance property that all marginal distributions are equal to each other, i.e., that
\begin{align*}
P(\bold{X}_1) = \ldots = P(\bold{X}_n).
\end{align*}
To specify this invariance, let 
\begin{align*}
\mathcal{F} = \{f_{i,j} \; | \; i = 1,\ldots,n \text{ and } j \in \mathbb{Z}\}
\end{align*}
be the set of real functions on $\mathcal{X}$ in which each $f_{i,j}(x) = I_{\{x_i = j\}}$ is an indicator function. Further, define the equivalence class on $\mathcal{F}$ such that $f_{i,j} \sim f_{k,j}$ for all $i,k \in \{1,\ldots,n\}$. Then, if $P$ is moment invariant to this equivalence class, we have that all marginal random variables have the same distribution. Naturally, this invariance is useful in estimations involving the distribution $P$, which we consider in the next example. 
\end{example}

\begin{example}[Estimations using Moment Invariances]
Suppose we have a finite set of real functions $\mathcal{F}$ and it has an equivalence relation $\sim$ on it. For a function $f_0 \in \mathcal{F}$, denote its equivalence class by
\begin{align*}
	 \mathcal{F}_0 = \{f \in \mathcal{F} \; | \; f \sim f_0 \}.
\end{align*}
If a pmf $P$ is moment invariant to $\sim$, then:
\begin{align*}
E_P(f_0(\bold{X})) &= \frac{1}{|\mathcal{F}_0|} \sum \limits_{f \in \mathcal{F}_0 } E_P(f(\bold{X})) \\
		& = \frac{1}{|\mathcal{F}_0|} E_P\left[ \sum \limits_{f \in \mathcal{F}_0 } f(\bold{X}) \right].
\end{align*}
Thus, given data samples $x^{(1)},\ldots,x^{(m)}$, we may utilize this invariance and estimate $E_P(f_0(\bold{X}))$ as follows:
\begin{align*}
\hat{E}_P(f_0(\bold{X})) =  \frac{1}{|\mathcal{F}_0|} \frac{1}{m} \sum \limits_{i=1}^m \sum \limits_{f \in \mathcal{F}_0 } f(x^{(i)}).
\end{align*}
Utilizing these invariances will be beneficial in many applications. 
\end{example}

\section{Conditional Moment Invariance}
\label{sec:e}

In Section \ref{sec:d}, we defined independence and conditional independence, two special cases of invariances on distributions. Similarly, in this section, we define moment independence and conditional moment independence. We begin by defining conditional and marginal moment invariances.

Let $(\Omega,\Sigma_{\Omega},\mathbb{P})$ be a probability space and let $\mathcal{X}$ be a countably infinite space. Further, let $\bold{X}:\Omega \rightarrow \mathcal{X}$ be a measurable function (i.e. a discrete random variable) with a pmf $P: \mathcal{X} \rightarrow [0,1]$, where
\begin{align*}
P(x) = \mathbb{P}(\{w \in \Omega \; : \; \bold{X}(w)=x\}), \; x \in \mathcal{X}.
\end{align*}
Thus, we may form the probability space $(\mathcal{X},\Sigma_{\mathcal{X}},P)$, where $\Sigma_{\mathcal{X}}$ is the power set of $\mathcal{X}$. 
Furthermore, for any $A \in \Sigma_{\Omega}$ such that $\mathbb{P}(A)>0$, we may form the conditional probability:
\begin{align*}
P_{\mathcal{X}}(x|A) & = \mathbb{P}(\{w \in \Omega \; : \; \bold{X}(w)=x\} \; | \; \{w \in A\} ) \\
 & = \frac{\mathbb{P}(\{w \in A \; : \; \bold{X}(w)=x\}) }{ \mathbb{P}(\{w \in A\})  } 
\end{align*}
where $x \in \mathcal{X}$. Suppose we have a set of functions $\mathcal{F}$ composed of functions of the form $f: \mathcal{X} \rightarrow \mathbb{R}$.
Now, we will define an invariance in terms of an equivalence relation $\sim_{\Gamma}$ on the product space $\Gamma = \Sigma_{\Omega} \times \mathcal{F}$. The general definition of an invariance is as follows. 

\begin{definition}[Conditionally Moment Invariance]
A distribution $P$ is \textit{conditionally moment invariant} to $\sim_{\Gamma}$ if for all $(A_1,f_1),(A_2,f_2) \in  \Sigma_{\Omega} \times \mathcal{F}$ we have:
\begin{align*}
(A_1,f_1) \sim_{\Gamma} (A_2,f_2) \implies E_P[f_1(\bold{X})\; | \; A_1] = E_P[f_2(\bold{X}) \; | \; A_2],
\end{align*}
where
\begin{align*}
E_P[f(\bold{X})\; | \; A ] \equiv \sum_{x \in \mathcal{X}} f(x) \; P(x | A).
\end{align*}
\end{definition}

\begin{example}[Moment Independence]
Suppose we have an equivalence relation $\sim_{\Gamma}$ on the product space $\Gamma = \Sigma_{\Omega} \times \mathcal{F}$. Let $A_1,A_2 \in \Sigma_{\Omega}$ and $f_1,f_2 \in \mathcal{F}$, and suppose that
\begin{align*}
A_2 = \Omega \text{ and } f_1 = f_2.
\end{align*}
Then, the feature $f_1$ is said to be \textit{moment independent} of the event $A_1$ if $(A_1,f_1) \sim_{\Gamma} (A_2,f_2)$. That is, if $f_1$ is moment independent of $A_1$, then:
\begin{align*}
E_P(f_1(\bold{X}) \; | \; A_1) = E_P(f_1(\bold{X})).
\end{align*}
\end{example}

\begin{example}[Conditional Moment Independence]
Suppose we have an equivalence relation $\sim_{\Gamma}$ on the product space $\Gamma = \Sigma_{\Omega} \times \mathcal{F}$. Let $A_1,A_2 \in \Sigma_{\Omega}$ and $f_1,f_2 \in \mathcal{F}$, and suppose that
\begin{align*}
A_2 \subset A_1 \text{ and } f_1 = f_2.
\end{align*}
Then, the feature $f_1$ is said to be \textit{conditionally moment independent} of the event $A_1 \setminus A_2$, given the event $A_2$ if $(A_1,f_1) \sim_{\Gamma} (A_2,f_2)$. That is, if $f_1$ is moment independent of $A_1 \setminus A_2$ given $A_2$, then:
\begin{align*}
E_P(f_1(\bold{X}) \; | \; A_1) = E_P(f_1(\bold{X}) \; | \; A_2).
\end{align*}
\end{example}

\backmatter  

\bibliographystyle{plainnat}
\bibliography{sample}

\begin{thebibliography}{69}
\providecommand{\natexlab}[1]{#1}
\providecommand{\url}[1]{\texttt{#1}}
\expandafter\ifx\csname urlstyle\endcsname\relax
  \providecommand{\doi}[1]{doi: #1}\else
  \providecommand{\doi}{doi: \begingroup \urlstyle{rm}\Url}\fi

\bibitem[Airoldi et~al.(2009)Airoldi, Blei, Fienberg, and
  Xing]{airoldi2009mixed}
Edo~M Airoldi, David~M Blei, Stephen~E Fienberg, and Eric~P Xing.
\newblock Mixed membership stochastic blockmodels.
\newblock In \emph{Advances in Neural Information Processing Systems}, pages
  33--40, 2009.

\bibitem[Berge and Minieka(1973)]{berge1973graphs}
Claude Berge and Edward Minieka.
\newblock \emph{Graphs and hypergraphs}, volume~7.
\newblock North-Holland publishing company Amsterdam, 1973.

\bibitem[Besag(1974)]{besag1974spatial}
Julian Besag.
\newblock Spatial interaction and the statistical analysis of lattice systems.
\newblock \emph{Journal of the Royal Statistical Society. Series B
  (Methodological)}, pages 192--236, 1974.

\bibitem[Bock et~al.(2011)Bock, Lee, Kerlin, Andermann, Hood, Wetzel,
  Yurgenson, Soucy, Kim, and Reid]{bock2011network}
Davi~D Bock, Wei-Chung~Allen Lee, Aaron~M Kerlin, Mark~L Andermann, Greg Hood,
  Arthur~W Wetzel, Sergey Yurgenson, Edward~R Soucy, Hyon~Suk Kim, and R~Clay
  Reid.
\newblock Network anatomy and in vivo physiology of visual cortical neurons.
\newblock \emph{Nature}, 471\penalty0 (7337):\penalty0 177--182, 2011.

\bibitem[Bollob{\'a}s(1998)]{bollobas1998random}
B{\'e}la Bollob{\'a}s.
\newblock \emph{Random graphs}.
\newblock Springer, 1998.

\bibitem[Boutilier et~al.(1996)Boutilier, Friedman, Goldszmidt, and
  Koller]{boutilier1996context}
Craig Boutilier, Nir Friedman, Moises Goldszmidt, and Daphne Koller.
\newblock Context-specific independence in bayesian networks.
\newblock In \emph{Proceedings of the Twelfth international conference on
  Uncertainty in artificial intelligence}, pages 115--123. Morgan Kaufmann
  Publishers Inc., 1996.

\bibitem[Britton(2010)]{britton2010stochastic}
Tom Britton.
\newblock Stochastic epidemic models: a survey.
\newblock \emph{Mathematical biosciences}, 225\penalty0 (1):\penalty0 24--35,
  2010.

\bibitem[Cai et~al.(2003)Cai, Malmberg, and Wu]{cai2003stochastic}
Liming Cai, Russell~L Malmberg, and Yunzhou Wu.
\newblock Stochastic modeling of rna pseudoknotted structures: a grammatical
  approach.
\newblock \emph{Bioinformatics}, 19\penalty0 (suppl 1):\penalty0 i66--i73,
  2003.

\bibitem[Chickering et~al.(1997)Chickering, Heckerman, and
  Meek]{chickering1997bayesian}
David~Maxwell Chickering, David Heckerman, and Christopher Meek.
\newblock A bayesian approach to learning bayesian networks with local
  structure.
\newblock In \emph{Proceedings of the Thirteenth conference on Uncertainty in
  artificial intelligence}, pages 80--89. Morgan Kaufmann Publishers Inc.,
  1997.

\bibitem[Chomsky(2002)]{chomsky2002syntactic}
Noam Chomsky.
\newblock \emph{Syntactic structures}.
\newblock Walter de Gruyter, 2002.

\bibitem[Chung(2001)]{chung2001course}
Kai~Lai Chung.
\newblock \emph{A course in probability theory}.
\newblock Academic press, 2001.

\bibitem[Clifford(1990)]{clifford1990markov}
Peter Clifford.
\newblock Markov random fields in statistics.
\newblock \emph{Disorder in physical systems: A volume in honour of John M.
  Hammersley}, pages 19--32, 1990.

\bibitem[Cover and Thomas(2012)]{cover2012elements}
Thomas~M Cover and Joy~A Thomas.
\newblock \emph{Elements of information theory}.
\newblock John Wiley \& Sons, 2012.

\bibitem[Drmota(2009)]{drmota2009random}
Michael Drmota.
\newblock \emph{Random trees: an interplay between combinatorics and
  probability}.
\newblock Springer Science \& Business Media, 2009.

\bibitem[Dyrka and Nebel(2009)]{dyrka2009stochastic}
Witold Dyrka and Jean-Christophe Nebel.
\newblock A stochastic context free grammar based framework for analysis of
  protein sequences.
\newblock \emph{BMC bioinformatics}, 10\penalty0 (1):\penalty0 1, 2009.

\bibitem[Dyrka et~al.(2013)Dyrka, Nebel, and Kotulska]{dyrka2013probabilistic}
Witold Dyrka, Jean-Christophe Nebel, and Malgorzata Kotulska.
\newblock Probabilistic grammatical model for helix-helix contact site
  classification.
\newblock \emph{Algorithms for Molecular Biology}, 8\penalty0 (1):\penalty0 1,
  2013.

\bibitem[Erd{\H{o}}s and R{\'e}nyi(1959)]{erdds1959random}
Paul Erd{\H{o}}s and Alfr{\'e}d R{\'e}nyi.
\newblock On random graphs i.
\newblock \emph{Publ. Math. Debrecen}, 6:\penalty0 290--297, 1959.

\bibitem[Fagin et~al.(1990)Fagin, Halpern, and Megiddo]{fagin1990logic}
Ronald Fagin, Joseph~Y Halpern, and Nimrod Megiddo.
\newblock A logic for reasoning about probabilities.
\newblock \emph{Information and computation}, 87\penalty0 (1-2):\penalty0
  78--128, 1990.

\bibitem[Frank and Strauss(1986)]{frank1986markov}
Ove Frank and David Strauss.
\newblock Markov graphs.
\newblock \emph{Journal of the american Statistical association}, 81\penalty0
  (395):\penalty0 832--842, 1986.

\bibitem[Geman et~al.(2015)Geman, Geman, Hallonquist, and
  Younes]{geman2015visual}
Donald Geman, Stuart Geman, Neil Hallonquist, and Laurent Younes.
\newblock Visual turing test for computer vision systems.
\newblock \emph{Proceedings of the National Academy of Sciences}, 112\penalty0
  (12):\penalty0 3618--3623, 2015.

\bibitem[Geman and Graffigne(1986)]{geman1986markov}
Stuart Geman and Christine Graffigne.
\newblock Markov random field image models and their applications to computer
  vision.
\newblock In \emph{Proceedings of the International Congress of
  Mathematicians}, volume~1, page~2, 1986.

\bibitem[Geman et~al.(2002)Geman, Potter, and Chi]{geman2002composition}
Stuart Geman, Daniel~F Potter, and Zhiyi Chi.
\newblock Composition systems.
\newblock \emph{Quarterly of Applied Mathematics}, 60\penalty0 (4):\penalty0
  707--736, 2002.

\bibitem[Getoor(2007)]{getoor2007introduction}
Lise Getoor.
\newblock \emph{Introduction to statistical relational learning}.
\newblock MIT press, 2007.

\bibitem[Getoor et~al.(2001)Getoor, Friedman, Koller, and
  Pfeffer]{getoor2001learning}
Lise Getoor, Nir Friedman, Daphne Koller, and Avi Pfeffer.
\newblock Learning probabilistic relational models.
\newblock In \emph{Relational data mining}, pages 307--335. Springer, 2001.

\bibitem[Gilbert(1959)]{gilbert1959random}
Edgar~N Gilbert.
\newblock Random graphs.
\newblock \emph{The Annals of Mathematical Statistics}, 30\penalty0
  (4):\penalty0 1141--1144, 1959.

\bibitem[Grenander(1996)]{grenander1996elements}
Ulf Grenander.
\newblock \emph{Elements of pattern theory}.
\newblock JHU Press, 1996.

\bibitem[Grenander(1997)]{grenander1997geometries}
Ulf Grenander.
\newblock Geometries of knowledge.
\newblock \emph{Proceedings of the National Academy of Sciences}, 94\penalty0
  (3):\penalty0 783--789, 1997.

\bibitem[Grenander(2012)]{grenander2012pattern}
Ulf Grenander.
\newblock \emph{Pattern synthesis: Lectures in pattern theory}, volume~18.
\newblock Springer Science \& Business Media, 2012.

\bibitem[Grenander and Miller(2007)]{grenander2007pattern}
Ulf Grenander and Michael~I Miller.
\newblock \emph{Pattern theory: from representation to inference}, volume~1.
\newblock Citeseer, 2007.

\bibitem[Grimmett(1973)]{grimmett1973theorem}
Geoffrey~R Grimmett.
\newblock A theorem about random fields.
\newblock \emph{Bulletin of the London Mathematical Society}, 5\penalty0
  (1):\penalty0 81--84, 1973.

\bibitem[Haas et~al.(2012)Haas, Miermont, et~al.]{haas2012scaling}
B{\'e}n{\'e}dicte Haas, Gr{\'e}gory Miermont, et~al.
\newblock Scaling limits of markov branching trees with applications to
  galton--watson and random unordered trees.
\newblock \emph{The Annals of Probability}, 40\penalty0 (6):\penalty0
  2589--2666, 2012.

\bibitem[Haccou et~al.(2005)Haccou, Jagers, and Vatutin]{haccou2005branching}
Patsy Haccou, Peter Jagers, and Vladimir~A Vatutin.
\newblock \emph{Branching processes: variation, growth, and extinction of
  populations}.
\newblock Number~5. Cambridge University Press, 2005.

\bibitem[Halpern(1990)]{halpern1990analysis}
Joseph~Y Halpern.
\newblock An analysis of first-order logics of probability.
\newblock \emph{Artificial intelligence}, 46\penalty0 (3):\penalty0 311--350,
  1990.

\bibitem[Handcock et~al.(2003)Handcock, Robins, Snijders, Moody, and
  Besag]{handcock2003assessing}
Mark~S Handcock, Garry Robins, Tom~AB Snijders, Jim Moody, and Julian Besag.
\newblock Assessing degeneracy in statistical models of social networks.
\newblock Technical report, Citeseer, 2003.

\bibitem[Heckerman et~al.(2007)Heckerman, Meek, and
  Koller]{heckerman2007probabilistic}
David Heckerman, Chris Meek, and Daphne Koller.
\newblock Probabilistic entity-relationship models, prms, and plate models.
\newblock \emph{Introduction to statistical relational learning}, pages
  201--238, 2007.

\bibitem[Hoff et~al.(2002)Hoff, Raftery, and Handcock]{hoff2002latent}
Peter~D Hoff, Adrian~E Raftery, and Mark~S Handcock.
\newblock Latent space approaches to social network analysis.
\newblock \emph{Journal of the american Statistical association}, 97\penalty0
  (460):\penalty0 1090--1098, 2002.

\bibitem[Holland and Leinhardt(1981)]{holland1981exponential}
Paul~W Holland and Samuel Leinhardt.
\newblock An exponential family of probability distributions for directed
  graphs.
\newblock \emph{Journal of the american Statistical association}, 76\penalty0
  (373):\penalty0 33--50, 1981.

\bibitem[Holland et~al.(1983)Holland, Laskey, and
  Leinhardt]{holland1983stochastic}
Paul~W Holland, Kathryn~Blackmond Laskey, and Samuel Leinhardt.
\newblock Stochastic blockmodels: First steps.
\newblock \emph{Social networks}, 5\penalty0 (2):\penalty0 109--137, 1983.

\bibitem[Jain and Wysotzki(2004)]{jain2004central}
Brijnesh~J Jain and Fritz Wysotzki.
\newblock Central clustering of attributed graphs.
\newblock \emph{Machine Learning}, 56\penalty0 (1-3):\penalty0 169--207, 2004.

\bibitem[Janson et~al.(2011)Janson, Luczak, and Rucinski]{janson2011random}
Svante Janson, Tomasz Luczak, and Andrzej Rucinski.
\newblock \emph{Random graphs}, volume~45.
\newblock John Wiley \& Sons, 2011.

\bibitem[Jin and Geman(2006)]{jin2006context}
Ya~Jin and Stuart Geman.
\newblock Context and hierarchy in a probabilistic image model.
\newblock In \emph{Computer Vision and Pattern Recognition, 2006 IEEE Computer
  Society Conference on}, volume~2, pages 2145--2152. IEEE, 2006.

\bibitem[Jordan(2004)]{jordan2004graphical}
Michael~I Jordan.
\newblock Graphical models.
\newblock \emph{Statistical Science}, pages 140--155, 2004.

\bibitem[Kindermann et~al.(1980)Kindermann, Snell,
  et~al.]{kindermann1980markov}
Ross Kindermann, James~Laurie Snell, et~al.
\newblock \emph{Markov random fields and their applications}, volume~1.
\newblock American Mathematical Society Providence, RI, 1980.

\bibitem[Lamperti(2012)]{lamperti2012stochastic}
John Lamperti.
\newblock \emph{Stochastic processes: a survey of the mathematical theory},
  volume~23.
\newblock Springer Science \& Business Media, 2012.

\bibitem[Latouche et~al.(2011)Latouche, Birmel{\'e}, and
  Ambroise]{latouche2011overlapping}
Pierre Latouche, Etienne Birmel{\'e}, and Christophe Ambroise.
\newblock Overlapping stochastic block models with application to the french
  political blogosphere.
\newblock \emph{The Annals of Applied Statistics}, pages 309--336, 2011.

\bibitem[Lauritzen(1996)]{lauritzen1996graphical}
Steffen~L Lauritzen.
\newblock \emph{Graphical models}.
\newblock Clarendon Press, 1996.

\bibitem[Lauritzen and Richardson(2002)]{lauritzen2002chain}
Steffen~L Lauritzen and Thomas~S Richardson.
\newblock Chain graph models and their causal interpretations.
\newblock \emph{Journal of the Royal Statistical Society: Series B (Statistical
  Methodology)}, 64\penalty0 (3):\penalty0 321--348, 2002.

\bibitem[Le~Gall et~al.(2005)]{le2005random}
Jean-Fran{\c{c}}ois Le~Gall et~al.
\newblock Random trees and applications.
\newblock \emph{Probab. Surv}, 2\penalty0 (245):\penalty0 10--1214, 2005.

\bibitem[Milch et~al.(2007)Milch, Marthi, Russell, Sontag, Ong, and
  Kolobov]{milch20071}
Brian Milch, Bhaskara Marthi, Stuart Russell, David Sontag, Daniel~L Ong, and
  Andrey Kolobov.
\newblock 1 blog: Probabilistic models with unknown objects.
\newblock \emph{Statistical relational learning}, page 373, 2007.

\bibitem[Mode(1971)]{mode1971multitype}
Charles~J Mode.
\newblock Multitype age-dependent branching processes and cell cycle analysis.
\newblock \emph{Mathematical Biosciences}, 10\penalty0 (3):\penalty0 177--190,
  1971.

\bibitem[Pearl and Shafer(1995)]{pearl1995probabilistic}
J~Pearl and Glenn Shafer.
\newblock Probabilistic reasoning in intelligent systems: Networks of plausible
  inference.
\newblock \emph{Synthese-Dordrecht}, 104\penalty0 (1):\penalty0 161, 1995.

\bibitem[Pfeiffer~III et~al.(2014)Pfeiffer~III, Moreno, La~Fond, Neville, and
  Gallagher]{pfeiffer2014attributed}
Joseph~J Pfeiffer~III, Sebastian Moreno, Timothy La~Fond, Jennifer Neville, and
  Brian Gallagher.
\newblock Attributed graph models: Modeling network structure with correlated
  attributes.
\newblock In \emph{Proceedings of the 23rd international conference on World
  wide web}, pages 831--842. ACM, 2014.

\bibitem[Robins(2011)]{robins2011exponential}
Garry Robins.
\newblock Exponential random graph models for social networks.
\newblock \emph{Encyclopaedia of Complexity and System Science, Springer},
  2011.

\bibitem[Russell(2015)]{russell2015unifying}
Stuart Russell.
\newblock Unifying logic and probability.
\newblock \emph{Communications of the ACM}, 58\penalty0 (7):\penalty0 88--97,
  2015.

\bibitem[Salakhutdinov(2009)]{salakhutdinov2009learning}
Ruslan~R Salakhutdinov.
\newblock Learning in markov random fields using tempered transitions.
\newblock In \emph{Advances in neural information processing systems}, pages
  1598--1606, 2009.

\bibitem[Schweinberger(2011)]{schweinberger2011instability}
Michael Schweinberger.
\newblock Instability, sensitivity, and degeneracy of discrete exponential
  families.
\newblock \emph{Journal of the American Statistical Association}, 106\penalty0
  (496):\penalty0 1361--1370, 2011.

\bibitem[Seneta(1970)]{seneta1970population}
Eugene Seneta.
\newblock Population growth and the multi-type galton--watson process.
\newblock 1970.

\bibitem[Shalizi and Rinaldo(2013)]{shalizi2013consistency}
Cosma~Rohilla Shalizi and Alessandro Rinaldo.
\newblock Consistency under sampling of exponential random graph models.
\newblock \emph{Annals of statistics}, 41\penalty0 (2):\penalty0 508, 2013.

\bibitem[Shervashidze et~al.(2011)Shervashidze, Schweitzer, Van~Leeuwen,
  Mehlhorn, and Borgwardt]{shervashidze2011weisfeiler}
Nino Shervashidze, Pascal Schweitzer, Erik~Jan Van~Leeuwen, Kurt Mehlhorn, and
  Karsten~M Borgwardt.
\newblock Weisfeiler-lehman graph kernels.
\newblock \emph{The Journal of Machine Learning Research}, 12:\penalty0
  2539--2561, 2011.

\bibitem[Shimony(1991)]{shimony1991explanation}
Solomon~E Shimony.
\newblock Explanation, irrelevance and statistical independence.
\newblock In \emph{Proceedings of the ninth National conference on Artificial
  intelligence-Volume 1}, pages 482--487. AAAI Press, 1991.

\bibitem[Snijders et~al.(2006)Snijders, Pattison, Robins, and
  Handcock]{snijders2006new}
Tom~AB Snijders, Philippa~E Pattison, Garry~L Robins, and Mark~S Handcock.
\newblock New specifications for exponential random graph models.
\newblock \emph{Sociological methodology}, 36\penalty0 (1):\penalty0 99--153,
  2006.

\bibitem[Strauss(1986)]{strauss1986general}
David Strauss.
\newblock On a general class of models for interaction.
\newblock \emph{SIAM review}, 28\penalty0 (4):\penalty0 513--527, 1986.

\bibitem[Taskar et~al.(2002)Taskar, Abbeel, and
  Koller]{taskar2002discriminative}
Ben Taskar, Pieter Abbeel, and Daphne Koller.
\newblock Discriminative probabilistic models for relational data.
\newblock In \emph{Proceedings of the Eighteenth conference on Uncertainty in
  artificial intelligence}, pages 485--492. Morgan Kaufmann Publishers Inc.,
  2002.

\bibitem[Tian et~al.(2009)Tian, Hwang, and Kuang]{tian2009hypergraph}
Ze~Tian, TaeHyun Hwang, and Rui Kuang.
\newblock A hypergraph-based learning algorithm for classifying gene expression
  and arraycgh data with prior knowledge.
\newblock \emph{Bioinformatics}, 25\penalty0 (21):\penalty0 2831--2838, 2009.

\bibitem[Wainwright and Jordan(2008)]{wainwright2008graphical}
Martin~J Wainwright and Michael~I Jordan.
\newblock Graphical models, exponential families, and variational inference.
\newblock \emph{Foundations and Trends{\textregistered} in Machine Learning},
  1\penalty0 (1-2):\penalty0 1--305, 2008.

\bibitem[Watson and Galton(1875)]{watson1875probability}
Henry~William Watson and Francis Galton.
\newblock On the probability of the extinction of families.
\newblock \emph{The Journal of the Anthropological Institute of Great Britain
  and Ireland}, 4:\penalty0 138--144, 1875.

\bibitem[Younes(1988)]{younes1988estimation}
Laurent Younes.
\newblock Estimation and annealing for gibbsian fields.
\newblock In \emph{Annales de l'IHP Probabilit{\'e}s et statistiques},
  volume~24, pages 269--294, 1988.

\bibitem[Zhou et~al.(2006)Zhou, Huang, and Sch{\"o}lkopf]{zhou2006learning}
Dengyong Zhou, Jiayuan Huang, and Bernhard Sch{\"o}lkopf.
\newblock Learning with hypergraphs: Clustering, classification, and embedding.
\newblock In \emph{Advances in neural information processing systems}, pages
  1601--1608, 2006.

\bibitem[Zhu and Mumford(2007)]{zhu2007stochastic}
Song-Chun Zhu and David Mumford.
\newblock \emph{A stochastic grammar of images}.
\newblock Now Publishers Inc, 2007.

\end{thebibliography}

\end{document}